\pgfplotsset{width=8cm,compat=1.9}
\renewcommand{\ALG@name}{Protocol}
\DeclarePairedDelimiter{\ceil}{\lceil}{\rceil}
\DeclarePairedDelimiter{\floor}{\lfloor}{\rfloor}
\newtheorem{theorem}{Theorem}[section]
\newtheorem*{theorem*}{Theorem}
\newtheorem{proposition}[theorem]{Proposition}
\newtheorem{lemma}[theorem]{Lemma}
\newtheorem*{corollary*}{Corollary}
\theoremstyle{definition}
\newtheorem{definition}[theorem]{Definition}
\newcommand{\cH}{\mathcal{H}}
\newcommand{\cX}{\mathcal{X}}
\newcommand{\cY}{\mathcal{Y}}
\newcommand{\cE}{\mathcal{E}}
\newcommand{\cG}{\mathcal{G}}
\newcommand{\cP}{\mathcal{P}}
\newcommand{\cN}{\mathcal{N}}
\newcommand{\cZ}{\mathcal{Z}}
\newcommand{\cL}{\mathcal{L}}
\newcommand{\ind}{\mathbbm{1}}
\newcommand{\VC}{\operatorname{VC}}
\newcommand{\poly}{\operatorname{poly}}
\newcommand{\Lrn}{\mathsf{Lrn}}
\newcommand{\Expert}{\mathsf{Expert}}
\newcommand{\Adv}{\mathsf{Adv}}
\newcommand{\Adap}{\mathsf{Adap}}
\newcommand{\Reg}{\mathsf{Reg}}
\newcommand{\Ham}{\mathsf{Ham}}
\newcommand{\dist}{\mathsf{dist}}
\newcommand{\sign}{\mathsf{sign}}
\newcommand{\Neuron}{\mathsf{Neuron}}
\newcommand{\WM}{\mathsf{WM}}
\newcommand{\eps}{\varepsilon}
\newcommand{\TS}{\mathtt{TS}}
\newcommand{\M}{\mathtt{M}}
\newcommand{\mat}[1]{\boldsymbol{#1}}
\newif\ifanonymous
\begin{document}
%\setcounter{Maxaffil}{3}
%\title{On the Price of Various Limitations in Multiclass Online Classification}
%\title{Bandit Feedback, Adaptive Adversaries, and Randomized Predictions}
\title{Online Learning of Neural Networks}

\author[1]{Amit Daniely}
\author[1]{Idan Mehalel}
\author[2]{Elchanan Mossel}
\affil[1]{The Hebrew University}
\affil[2]{MIT}

\maketitle

\begin{abstract}
We study online learning of feedforward neural networks with the sign activation function that implement functions from the unit ball in $\mathbb{R}^d$ to a finite label set $\cY = \{1, \ldots, Y\}$.

First, we characterize a margin condition that is sufficient and in some cases necessary for online learnability of a neural network: Every neuron in the first hidden layer classifies all instances with some margin $\gamma$ bounded away from zero. Quantitatively, we prove that for any net, the optimal mistake bound is at most approximately $\TS(d,\gamma)$, which is the $(d,\gamma)$-\emph{totally-separable-packing} number, a more restricted variation of the standard $(d,\gamma)$-packing number. We complement this result by constructing a net on which any learner makes $\TS(d,\gamma)$ many mistakes. We also give a quantitative lower bound of approximately $\TS(d,\gamma) \geq \max\{1/(\gamma \sqrt{d})^d, d\}$ when $\gamma \geq 1/2$, implying that for some nets and input sequences every learner will err for $\exp(d)$ many times, and that a dimension-free mistake bound is almost always impossible.

To remedy this inevitable dependence on $d$, it is natural to seek additional natural restrictions to be placed on the network, so that the dependence on $d$ is removed. We study two such restrictions: 
%Applying only one of them yields significantly better, dimension-free mistake bounds:
\begin{enumerate}
    \item \emph{Multi-index model}: In this model, the function computed by the net depends only on $k \ll d$ orthonormal directions. We prove a mistake bound of approximately $(1.5/\gamma)^{k + 2}$ in this model.
    \item \emph{Extended margin assumption}: We assume that \emph{all} neurons (in all layers) in the network classify every ingoing input from previous layer with margin $\gamma$ bounded away from zero. In this model, we prove a mistake bound of approximately  $(\log Y)/ \gamma^{O(L)}$, where L is the depth of the network.
\end{enumerate}

\end{abstract}

\pagebreak
\setcounter{tocdepth}{2}
\tableofcontents
\pagebreak

\section{Introduction}

We study online learning of feedforward neural networks with sign activation functions that implement functions from the unit ball in $\mathbb{R}^d$, denoted by $B(\mathbb{R}^d)$, to the label set $\cY = \{1, \ldots, Y\}$ where $Y \geq 2$.

In more detail, we consider the following setting. An \emph{adversary} $\Adv$ and a \emph{learner} $\Lrn$ are rivals in a repeated game played for some unbounded number of rounds $T$. In each round $t$, $\Adv$ sends an instance $x_t \in B(\mathbb{R}^d)$ to $\Lrn$, and $\Lrn$ sends back a prediction\footnote{We focus on deterministic learners.} $\hat{y}_t \in \cY$. $\Lrn$ then recieves the true label $y_t \in \cY$ from $\Adv$, and suffers the loss $1[\hat{y}_t \neq y_t]$. The goals of $\Lrn$ and $\Adv$ are opposite: $\Lrn$'s goal is to minimize the \emph{mistake bound} $\sum_{t\in [T]} 1[\hat{y}_t \neq y_t]$, and $\Adv$'s goal is to maximize it.

Our results and analysis focus on the \emph{realizable setting}, where there exists an unknown \emph{target function} $\Phi^\star\colon B(\mathbb{R}^d) \to \cY$ implementable by some neural network, such that $y_t = \Phi^\star(x_t)$ in every round $t$. In this setting, we denote the mistake bound of $\Lrn$ on a sequence of instances $S = x_1, x_2, \ldots, x_T$ by $\M(\Lrn, S) = \sum_{t \in [T]} 1[\hat{y}_t \neq \Phi^\star(x_t)]$. As we explain in the sequel, the agnostic case in which the adversary is allowed to respond with $y_t \neq \Phi^\star(x_t)$ is handled by the agnostic-to-realizable reduction of \cite{hanneke2023multiclass} to obtain a \emph{regret} bound of $\tilde{O} \mleft( \sqrt{ M T} \mright)$, where $M$ is the mistake bound guaranteed when realizability is enforced, and $T$ is the number of rounds.

To the best of our knowledge, the task of online learning neural networks was not extensively explored in the literature\footnote{Some related work we did find is mentioned and compared to in Section~\ref{sec:related-online}.}. However, we do believe that this is an important task for a variety of reasons:

\begin{enumerate}
    \item Many learning tasks are not well captured by an i.i.d. assumption and fit well as an online learning model. This includes weather prediction, financial prediction, ad click prediction etc. Given the prominent role of neural networks in various learning and prediction tasks, it 
    is natural to study online learning algorithms for neural networks.
    \item The online learning setting is adversarial, difficult, and general. Therefore, online learning mistake bounds often have implications in other settings of learning. A non-exhaustive list of examples include PAC-learning (by using online-to-batch converasions) \cite{littlestone1989from}, private learning \cite{alon2022private}, and transductive learning \cite{hanneke2023trichotomy, hanneke2024multiclass}.
    \item This is a natural task, and as our work shows, standard and natural learning techniques reveal connections between online learning of neural networks to a known geometric quantity known as the \emph{totally-separable-packing} number (see Section~\ref{sec:main-results-characterization}), and to the widely practically applied paradigm of \emph{pruning} neural networks \cite{blalock2020state, cheng2024survey} (see Section~\ref{sec:related-pruning}).
    \item Some learning tasks related to transformers use an \emph{autoregressive} learning model, which is closely linked with online learning: The recent work \cite{joshi2025theory} shows that learning with reasonable sample complexity in an autoregressive model is impossible for some PAC-learnable function classes. However, all online learnable classes are also learnable in the autoregressive model. 
\end{enumerate}

%As already mentioned in the beginning of this introduction, 
As mentioned, we focus our study on neural networks with $\sign$ activation functions. While this activation function is not used in practical applications of neural networks, there are a number of good reasons for studying online learning with this activation:
%Although this function is not often used in practice, we have several reasons for this choice of focus.

\paragraph{It is simple and classic.}
The sign function is simple and classic, and many classic theoretical analyzes use $\sign$ as the activation function. For example, such expressivity results can be found in \cite{shalev2014understanding}), and sample complexity bounds may be found in \cite{anthony2009neural}. Furthermore, online algorithms for neural networks with sign activation function can be seen as generalizations of the classic perceptron algorithm \cite{rosenblatt1958perceptron} for the multi-layer case.

\paragraph{It is used in binarized neural networks.}
The $\sign$ activation function is widely used in \emph{binarized neural networks} (BNNs) \cite{hubara2016binarized}, as such networks are restricted to having binary activations and weights. The study of BNNs is motivated by the need to deploy deep learning paradigms on low-power devices, such as mobile phones, industrial sensors, and medical equipment, where computational and energy resources are limited.
Such devices often lack the infrastructure required to train standard neural networks, which typically rely on powerful hardware. Moreover, the learning tasks these devices perform are often online in nature: they make real-time predictions based on a continuous stream of incoming data.

\paragraph{It implies results for other activation functions.}
Beyond the motivations discussed above, we argue that our results extend, to some extent, to more common activation functions. Consider the following very simple classification network: a single input neuron and a single output neuron. In this setting, the target function is $\Phi\colon [-1,1] \to \{\pm 1\}$. The output neuron computes a real value $r \in [-1,1]$ based on the input, and the final prediction is given by $\sign(r)$. Even in this minimal setting, a learner can be forced to make an unbounded number of mistakes unless additional assumptions are made. A standard assumption is that the prediction margin is bounded away from zero. That is, $|r| \geq \gamma$ for all values $r$ computed by the output neuron, for some $\gamma > 0$. Under this margin assumption, the sign function can be replaced by any activation function $\sigma$ that agrees with $\sign$ whenever $|r| \geq \gamma$, including smooth activation functions, that are more commonly used in practice.
The assumptions we make throughout the paper are of a similar nature to the margin assumption described above, and analogous arguments can be used to extend our results to other activation functions.

\subsection{Main results}

The following sections assume some familiarity with standard definitions from online and neural networks learning. The reader may refer to Section~\ref{sec:preliminaries}, which introduces all the necessary background in a self-contained manner.
\subsubsection{Characterization} \label{sec:main-results-characterization}

We prove that the optimal mistake bound of learning a sequence of instances $S = x_1 \ldots, x_T \in \cX $ labeled by a target network $\cN^\star$ with input dimension $d$ is nearly characterized by the \emph{$(d,\gamma_1(\cN^\star, S))$-totally-separable packing} number (or \emph{TS-packing} number, for short), denoted by $\TS(d,\gamma_1(\cN^\star, S))$, where $\gamma_1 := \gamma_1(\cN^\star, S)$ is defined as the minimal distance\footnote{Assuming all neurons' weight vectors in the first hidden layer of $\cN^\star$ are normalized to have unit $\ell_2$-norm.} between a neuron in the first hidden layer of $\cN^\star$ to an instance in $S$. By ``nearly characterized", we mean that there is an upper bound quantitatively controlled by $\TS(d,\gamma_1)$ for all $d, \cN^\star, S$, and that a lower bound of $\TS(d,\gamma_1)$ is attained for some networks $\cN^\star$ and sequences $S$.

For any $d,\epsilon$, $\TS(d,\epsilon)$ is the maximal size $T$ of a subset $\{x_1, \ldots, x_T\} \subset B(\mathbb{R}^d)$ such that for any two distinct points $x_i,x_j$ there exists a hyperplane $(\mat{w},b) \in B(\mathbb{R}^d) \times [-1,1]$ satisfying:
\begin{enumerate}
    \item $(\mat{w},b)$ linearly separates $x_i$ from $x_j$.
    \item For \emph{all} $k \in [T]$, the Euclidean distance between $(\mat{w},b)$ and $x_k$ is at least $\epsilon$.
\end{enumerate}

We prove the following bounds.

\begin{theorem} \label{thm:intro-characterization}
    There exists a learner $\Lrn$ such that for any target network $\cN^\star$ with input dimension $d$ and realizable input sequence $S$:
    \[
    \M(\Lrn, S) = \tilde{O} \mleft( \frac{\TS(d,\gamma_1)}{\gamma_1^2} \mright).
    \]
    Furthermore, for any learner $\Lrn$, and for any $\eps >0, d \geq 1/\epsilon^2$, there exists a network with input dimension $d$ and a realizable input sequence $S$ such that $\gamma_1 \geq \eps$ and
    \[
    \M(\Lrn, S) = \Omega(\TS(d,\epsilon) + 1/\epsilon^2).
    \]
\end{theorem}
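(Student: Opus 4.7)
The plan is to prove the two bounds separately.

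\emph{Upper bound.} The key structural lemma is that for any target network $\cN^\star$ and realizable sequence $S$ with first-layer margin $\gamma_1 := \gamma_1(\cN^\star, S)$, the number of distinct first-layer signatures (binary output vectors of the first hidden layer) realized across $S$ is at most $\TS(d, \gamma_1)$. The proof picks one representative per distinct signature to form a subset $R \subseteq S$; any two elements of $R$ must be separated by at least one first-layer neuron, which by the definition of $\gamma_1$ is a hyperplane of margin $\geq \gamma_1$ from every point of $S \supseteq R$, and whose unit weight vector and bias in $[-1,1]$ fit the TS-packing template, so $|R| \leq \TS(d, \gamma_1)$. Given this lemma, the algorithm I propose maintains perceptron-style estimates for the implicit first-layer boundaries together with a candidate label for each discovered signature class. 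Each mistake is attributed either to (i) a perceptron update on one of the tracked boundaries --- capped at $O(1/\gamma_1^2)$ per boundary by the standard perceptron analysis --- or to (ii) the discovery of a previously-unseen first-layer signature --- capped by the structural lemma at $\TS(d, \gamma_1)$. Combining the two contributions gives the claimed $\tilde O(\TS(d, \gamma_1)/\gamma_1^2)$ bound, with the $\tilde O$ hiding logarithmic factors arising from combining multiple perceptrons and from a standard doubling trick that handles the learner's ignorance of $\gamma_1$.

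\emph{Lower bound.} Fix $\eps > 0$ and $d \geq 1/\eps^2$, and take a maximum $\eps$-TS-packing $P = \{x_1, \dots, x_T\} \subseteq B(\R^d)$ of size $T = \TS(d, \eps)$. For each pair $(i,j)$ the packing definition supplies a separating hyperplane $(\mat{w}_{ij}, b_{ij})$ with $\|\mat{w}_{ij}\|,|b_{ij}| \leq 1$ and margin $\geq \eps$ from every point of $P$. Build a target $\cN^\star$ whose first hidden layer consists precisely of these $\binom{T}{2}$ neurons; by construction $\gamma_1(\cN^\star, P) \geq \eps$, and each $x_i$ has a unique first-layer signature. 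Choose the upper layers so that $\cN^\star$ can realize an arbitrary function from the $T$ distinct signatures to $\cY$ --- any such function is expressible by a shallow DNF-style sign network over the first-layer outputs. The adversary then presents $x_1, \dots, x_T$ in order and, after seeing each prediction $\hat y_t$, commits to some $y_t \in \cY \setminus \{\hat y_t\}$, forcing $T = \TS(d, \eps)$ mistakes on the learner. The additive $\Omega(1/\eps^2)$ term is obtained by concatenating this with a classical perceptron lower bound: with $d \geq 1/\eps^2$ available dimensions, there exist a single-neuron target and a length-$\Omega(1/\eps^2)$ instance sequence forcing any learner to err $\Omega(1/\eps^2)$ times while keeping the first-layer margin $\geq \eps$.

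\emph{Main obstacle.} The structural lemma is the conceptually clean half. The genuine difficulty lies in the upper-bound algorithm: carefully attributing each mistake to either a ``geometric'' perceptron update or a ``structural'' new-signature discovery, and integrating a doubling trick over $\gamma_1$, since neither the architecture of $\cN^\star$ nor the exact value of $\gamma_1$ is revealed to the learner. The lower bound is mostly a packaging argument once one has the TS-packing's separating hyperplanes in hand.
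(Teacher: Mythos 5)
Your lower bound argument is essentially the paper's: take a maximum $\TS(d,\eps)$ packing, build a two-hidden-layer net whose first layer is the packing's separating hyperplanes and whose upper layers realize an arbitrary labeling of the region signatures, then let the adversary contradict every prediction. The $1/\eps^2$ additive term from the classical perceptron lower bound is also as in the paper. This part is sound.

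Your structural lemma for the upper bound --- that the number of distinct first-layer signatures realized on $S$ is at most $\TS(d,\gamma_1)$, proven by picking one representative per signature and checking they form a TS-packing --- is correct and matches the paper's Lemma~\ref{lem:characterization-Z-upperbound}. But from there your upper bound has a genuine gap. You describe a learner that ``maintains perceptron-style estimates for the implicit first-layer boundaries together with a candidate label for each discovered signature class,'' attributing each mistake either to a perceptron update or to the discovery of a new signature. This is not an algorithm: the learner receives only the pair $(x_t,y_t)$ as feedback, so when it errs it has no way to tell (a) whether the error was geometric (wrong region) or labeling (right region, wrong label), (b) if geometric, which of its tracked hyperplanes to update and in which direction, or (c) whether instead it should open a new signature class. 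You flag this yourself as ``the main obstacle,'' but you never resolve it, and there is no obvious way to resolve it by a direct state-machine argument. The paper resolves precisely this attribution problem by running a multiclass Weighted Majority over a combinatorially constructed class of experts $\Expert_{G,L_G}$, where each expert commits in advance to an update schedule (a ``manual'' $\mat{p}\in\{0,1\}^T$ with at most $1/\gamma_1^2$ ones) for each of $g$ simulated perceptrons, plus a labeling rule; Proposition~\ref{prop:meta-best-expert} then guarantees one expert makes few mistakes, and Proposition~\ref{prop:wm-bound} converts that into a bound for the aggregate. The paper also needs the companion bound $g\le|\cZ|\le\TS(d,\gamma_1)$ (Lemma~\ref{lem:characterization-planes-num} via Lemmas~\ref{lem:characterization-nonempty-neighbors}--\ref{lem:characterization-Z-lowerbound}) so that the expert class is small enough for the $\log|\cE|$ term not to dominate; your proposal derives only the signature-count half and never bounds the number of \emph{neurons} that must be tracked. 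Without the expert-aggregation machinery (or an equivalent substitute) the proposed direct algorithm is not well-defined, and the claimed $\tilde O(\TS(d,\gamma_1)/\gamma_1^2)$ bound does not follow.
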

We prove Theorem~\ref{thm:intro-characterization} in Section~\ref{sec:characterization}.
Note that the difference between the upper and lower bounds is roughly quadratic in the worst case.
Furthermore, the bounds on $\TS(d,\epsilon)$ given in Theorem~\ref{thm:ts-general-bounds} imply that when $\gamma_1$ is sufficiently small, $\TS(d,\gamma_1)$ is much larger than $1/\gamma_1^2$, giving tighter bounds in this case. When $\gamma_1$ is large, the dependence on $1/\gamma_1^2$ in the upper bound could be relatively more significant, but not catastrophic.

Note that the mistake bound demonstrates no dependence on the size of the label set $\cY$, which is common in multiclass online learning \cite{daniely2015multiclass, branzei2019online, hanneke2023multiclass}. On the other hand, as stated in Theorem~\ref{thm:ts-general-bounds}, $\TS(d,\epsilon)$ is exponential in $d$ for small $\epsilon$, and even for $\epsilon=1/2$ it is at least linear in $d$. This implies that to obtain dimension-free mistake bounds, we must further restrict the network and input sequence. We describe two such restricted settings we have been studying, as well as the derived results, in the following two sections.

\subsubsection{Improved bound in the multi-index setting} \label{sec:main-results-multi}

In the \emph{multi-index} model, we assume that the target function $\Phi^\star\colon B(\mathbb{R}^d) \to \cY$ calculated by the target network is restricted in the following way. There exist $k \ll d$ many \emph{unknown} orthonormal signals $\mat{s^{(1)}}, \ldots, \mat{s^{(k)}} \in \mathbb{R}^d$ and a function $\phi^\star\colon B(\mathbb{R}^k) \to \cY$ such that for every $x\in B(\mathbb{R}^d)$, we have $\Phi^\star(x) = \phi^\star(\langle \mat{s^{(1)}}, x \rangle, \ldots, 
\langle \mat{s^{(k)}}, x \rangle)$. In simple words, while the range of $\Phi^\star$ is $B(\mathbb{R}^d)$, the value of $\Phi^\star(x)$ is not arbitrary but depends only on an unknown $k$-dimensional projection of $x$.

The motivation of studying this setting lies in the following conjectured phenomenon: There are natural learning tasks with seemingly high-dimensional input, that in fact hides a low-dimensional structure explaining their behavior \cite{goldt2020modeling}. This conjectured phenomenon might partly explain why deep learning mechanisms do well on some high-dimensional learning tasks, with low sample complexity that does not match the high-dimensional input.
Consequently, this model has gained significant interest in the community, and is extensively studied in the past few years, especially in the context of stochastic optimization \cite{arous2021online, ba2022high, bietti2022learning, bietti2023learning, damian2024computational, dandi2024benefits, lee2024neural, arnaboldi2024repetita, cornacchia2025low}.

We study online learning of neural networks in this so-called multi-index model.
We prove that even though the signals $\mat{s^{(1)}}, \ldots, \mat{s^{(k)}}$ are unknown and $k \ll d$, it turns out that the upper bound of Theorem~\ref{thm:intro-characterization} holds\footnote{The lower bound trivially holds.} with $k$ replacing $d$, assuming a multi-index model.

\begin{theorem} \label{thm:intro-multi-index}
    In the multi-index model with $k$ many unknown signals, there exists a learner $\Lrn$ such that for any target network with input dimension $d$ and realizable input sequence $S$:
    \[
    \M(\Lrn, S) = \tilde{O} \mleft( \frac{\TS(k,\gamma_1)}{\gamma_1^2} \mright).
    \]
\end{theorem}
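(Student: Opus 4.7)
The plan is to reduce the multi-index problem in $\mathbb{R}^d$ to the $k$-dimensional characterization of Theorem~\ref{thm:intro-characterization}, handling the unknown signal subspace $V = \operatorname{span}(\mathbf{s}^{(1)}, \ldots, \mathbf{s}^{(k)})$ via aggregation over a family of candidate subspaces. The conceptual starting point is that if $V$ were known, the learner could simply project every instance $x_t$ to $\pi(x_t) \in V \cap B(\mathbb{R}^d) \subseteq B(\mathbb{R}^k)$ and run the learner from Theorem~\ref{thm:intro-characterization} in $k$ dimensions. The target function on the projected instances is $\phi^\star$, and a candidate target network is obtained by replacing each first-layer neuron $(\mathbf{w}_i, b_i)$ of $\cN^\star$ with the projection $(\mathbf{w}_i^{\parallel}, b_i)$ onto $V$, renormalized to unit norm. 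A short geometric calculation, together with the fact that $\Phi^\star(x) = \Phi^\star(\pi(x))$, would show that this $k$-dimensional network computes $\phi^\star$ and has first-layer margin $\Omega(\gamma_1)$ on the projected sequence, so Theorem~\ref{thm:intro-characterization} in dimension $k$ yields the bound $\tilde{O}(\TS(k,\gamma_1)/\gamma_1^2)$.

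To handle unknown $V$, I would use a meta-learner that maintains a family of candidate $k$-dimensional subspaces and runs one copy of the characterization learner per candidate, aggregating predictions via a standard experts-style technique (weighted majority, or a deterministic \SOA-based aggregator appropriate for the realizable setting). The crux is to construct the family adaptively and economically: one restricts attention to candidate subspaces spanned by small collections of past instances (or more carefully, of mistake-inducing instances), up to a $\gamma_1$-scale discretization on the Stiefel manifold. This keeps the number of candidates polynomial in the current mistake count and $1/\gamma_1$, with no explicit dependence on $d$. The aggregator's mistake bound is then the best candidate's mistake bound plus a $\log$-of-cover-size factor, which is polylogarithmic and absorbed into the $\tilde{O}$.

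The main obstacle I anticipate is guaranteeing that the adaptive family contains, after a bounded number of mistakes, some subspace $V'$ close enough to $V$ that Step 1 gives the desired mistake bound for the corresponding expert. This requires arguing that $V$ is sufficiently \emph{witnessed} by the data on which the meta-learner errs — intuitively, each mistake reveals new information about $V$, progressively refining the candidate family until it contains a sufficiently good approximation. A cleaner alternative, which sidesteps explicit covers entirely, is to re-analyze the algorithm of Theorem~\ref{thm:intro-characterization} directly: under the multi-index assumption every first-layer neuron's sign pattern on the observed instances depends only on its projection onto $V$, so the effective packing controlling the analysis becomes the $(k,\gamma_1)$-TS-packing of $\{\pi(x_1), \ldots, \pi(x_T)\}$ rather than the $(d,\gamma_1)$-TS-packing of $\{x_1,\ldots,x_T\}$, replacing $d$ by $k$ throughout the final bound.
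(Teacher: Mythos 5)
Your second, ``cleaner alternative'' is in the same spirit as the paper's argument, but its central claim — that \emph{every} first-layer neuron's sign pattern on $S$ depends only on the instance's projection onto $V = \operatorname{span}(\mat{s^{(1)}},\ldots,\mat{s^{(k)}})$ — is false as stated, and the missing step is precisely where the real work lives. A neuron $\mat{w}$ of $\cN^\star$ can have a nonzero component $\mat{w}^\perp$ orthogonal to $V$, in which case $\sign(\langle\mat{w},x\rangle)=\sign(\langle\mat{w}^\parallel,\pi(x)\rangle+\langle\mat{w}^\perp,x^\perp\rangle)$ genuinely depends on $x^\perp$. The multi-index assumption only forces the \emph{composite} function $\Phi^\star$ to factor through $\pi$, not each neuron individually. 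What the paper proves instead (Lemma~\ref{lem:multi-index-k-dim-neurons}) is that the \emph{minimal important subcollection} $G$ of first-layer neurons is already contained in $V$: if some $\mat{w}\in G$ had a component outside $V$, then — using the structural fact (Lemma~\ref{lem:characterization-nonempty-neighbors}) that every neuron in a minimal $G$ separates two non-empty regions of $S$ carrying \emph{different} labels — one can construct two points that agree on all coordinates in $V$ but land on opposite sides of $\mat{w}$ and hence in differently-labeled regions, contradicting the multi-index assumption. Once $G\subset V$, truncating the representatives of each non-empty region to their $V$-coordinates yields a $(k,\gamma_1)$-TS-packing (Lemma~\ref{lem:multi-index-Z-upperbound}), and the rest of the argument from Theorem~\ref{thm:intro-characterization} goes through with $d$ replaced by $k$. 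Your sketch skips this pruning step and therefore does not establish the $k$-dimensional bound.

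Your first route (aggregating over a family of candidate $k$-dimensional subspaces) is a genuinely different plan, but you have not resolved the obstacle you yourself flag: a $\gamma_1$-scale discretization of the Stiefel manifold $V_k(\mathbb{R}^d)$ has size exponential in $dk$, so the $\log$-of-cover term reintroduces dependence on $d$; restricting to subspaces spanned by mistake-inducing instances helps but you would still need to show that some such subspace is close enough to $V$ that the projected-net argument applies, and you would need a quantitative version of the projected-net claim (margin preserved up to constants, same TS-packing bound) that your proposal only gestures at. As written, the proposal does not contain a complete proof of the theorem.
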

We prove Theorem~\ref{thm:intro-multi-index} in Section~\ref{sec:multi-index}.
Theorem~\ref{thm:ts-general-bounds} implies $\TS(k,\gamma_1) \leq (1.5/\gamma_1)^k$. Therefore, if $k$ is, say, some universal constant, then Theorem~\ref{thm:intro-multi-index} implies a guaranteed mistake bound of only $\poly(1/\gamma_1)$.

The proof idea of Theorem~\ref{thm:intro-multi-index} is that while the $(d,\epsilon)$-TS-packing number of the range of $\Phi^\star$ does not change, the labeling of large packings made by $\Phi^\star$ cannot be too complicated, and in fact behaves as if the labeling was made by a function with the range $B(\mathbb{R}^k)$.

\subsubsection{Improved bound with a large margin everywhere}
It is natural to study the mistake bound as a function of $\gamma:= \gamma(\cN^\star, S)$, which is the minimal margin over \emph{all} neurons and all input instances. In more detail, for every neuron and any input instance $x$, the neuron classifies the input coming from the previous layer, and this classification also has the same natural definition of margin as in the first layer. The minimal margin $\gamma$ is the minimal value observed in all these classification margins. When $\gamma$ is large, a significantly better mistake bound than of Theorem~\ref{thm:intro-characterization} can be proved.

\begin{theorem} \label{thm:intro-everywhere-margin}
    There exists a learner $\Lrn$, such that for any $d \in \mathbb{N}$, for any target network with input dimension $d$, and for any realizable input sequence $S$:
    \[
    \M(\Lrn, S) = \tilde{O} \mleft(\frac{\log |\cY|}{\gamma^{4L+2}} \mright),
    \]
    where $L$ is the depth of the network.
\end{theorem}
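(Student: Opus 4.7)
The plan is to prove the bound by induction on the depth $L$. Let $M_L(\gamma, Y)$ denote the best mistake bound for depth-$L$ networks with margin $\gamma$ everywhere and $Y$ output labels. I aim to establish a per-layer blow-up $M_L \le \tilde{O}(M_{L-1}/\gamma^4)$ on top of a base case $M_1 = \tilde{O}(\log Y / \gamma^2)$, which unrolls to the claimed $\tilde{O}(\log Y / \gamma^{4L+2})$.

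For the base case ($L = 1$), the target is essentially a halfspace in $B(\R^d)$ with margin $\gamma$ composed with a decoder to $\cY$. Each binary decision is learnable via Perceptron with $O(1/\gamma^2)$ mistakes, and the $\log Y$ factor arises by learning each bit of a binary encoding of the label separately (or equivalently, running a multiclass $\SOA$ over $O(\log Y)$ binary classifiers).

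For the inductive step, decompose the target as $\Phi^\star = g \circ h$, where $h(x) = (\sigma_1(x), \ldots, \sigma_{n_1}(x))$ is the first hidden layer of $n_1$ halfspaces of margin $\gamma$, and $g$ is a depth-$(L-1)$ subnetwork with margin $\gamma$ everywhere that takes values in $\cY$. I would construct a feature map $\Psi_L \colon B(\R^d) \to V_L$ into a Hilbert space $V_L$, defined recursively from $\Psi_{L-1}$ via a polynomial approximation of $\sign$ on $\{|t| \geq \gamma\}$ of degree $\tilde{O}(1/\gamma)$, such that (i) $\Phi^\star$ is realized as a bounded-norm linear threshold in $V_L$, and (ii) its margin in $V_L$ is lower-bounded by a polynomial in $\gamma$ that is independent of the widths $n_\ell$. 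Running a kernelized Perceptron in $V_L$ then yields a mistake bound of $(\mathrm{radius}/\mathrm{margin})^2 = \tilde{O}(M_{L-1}/\gamma^4)$, completing the induction.

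The main obstacle will be that the widths $n_\ell$ of the hidden layers are a priori unbounded, yet the final mistake bound must be width-independent. A naive Perceptron at a layer-$\ell$ neuron whose inputs lie in $\{\pm 1\}^{n_{\ell-1}}$ with Euclidean margin $\gamma$ would incur $n_{\ell-1}/\gamma^2$ mistakes, which is unacceptable. Overcoming this requires exploiting both the unit $\ell_2$-norm of each neuron's weight vector and the layer-level margin condition to argue that the ``effective dimension'' of each neuron's decision is only $\poly(1/\gamma)$, independent of $n_{\ell-1}$. Carefully composing the polynomial $\sign$-approximations across $L$ layers while tracking how the lifted-space margin degrades at each step, and keeping all relevant norms bounded by $\poly(1/\gamma)$ alone, is the technical heart of the argument and is precisely where the $\gamma^{4L+2}$ exponent emerges.
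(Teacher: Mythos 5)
Your approach is genuinely different from the paper's, and unfortunately I believe it has a gap that is not merely technical bookkeeping but fatal to the $\poly(1/\gamma)$ rate.

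The crux of your plan is to replace each $\sign$ by a polynomial approximation of degree $\tilde{O}(1/\gamma)$, lift to a polynomial feature space, and run kernelized Perceptron, claiming that the norm of the lifted weight vector can be kept $\poly(1/\gamma)$. This last claim is the problem. For any polynomial $p(t) = \sum_j c_j t^j$ that approximates $\sign$ to within, say, $\gamma/2$ on the annulus $\{\gamma \le |t| \le 1\}$, the Perceptron mistake bound in the lifted space is governed by $\mleft(\|w\| \cdot \|\Psi(x)\| / \gamma' \mright)^2$, and for the natural polynomial-kernel construction one has $\|w\|\cdot\|\Psi(x)\| \ge \sum_j |c_j|$ by Cauchy--Schwarz, regardless of how the per-degree kernel weights $a_j$ are tuned. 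The sum of absolute coefficients of such a $\sign$-approximating polynomial is super-polynomial (indeed exponential) in $1/\gamma$: $p$ must be nearly $+1$ on all of $[\gamma,1]$, nearly $-1$ on $[-1,-\gamma]$, and (because $|p(t)|\le \sum_j|c_j|$ on $[-1,1]$) bounded by $\sum_j|c_j|$ on $(-\gamma,\gamma)$, a combination that forces $\sum_j |c_j| = \exp(\Omega(\poly(1/\gamma)))$ by standard degree/weight tradeoff arguments for polynomial threshold functions. So the per-layer blow-up you need ($\tilde{O}(1/\gamma^4)$) would instead be $\exp(\tilde{\Omega}(1/\gamma))$, giving a mistake bound nowhere near $\gamma^{-(4L+2)}$. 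Note that the degree of the approximation is \emph{not} the relevant parameter for Perceptron; the RKHS norm is, and for $\sign$ these two quantities behave very differently. You correctly flag the width-independence as the main obstacle, but the proposed resolution (polynomial kernels) is precisely what the known polynomial-weight lower bounds rule out.

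The paper's proof avoids this trap by a completely different mechanism. Rather than composing polynomial approximations layer by layer, it uses \emph{uniform convergence} to prune the network to a small set of ``important'' first-layer neurons. Concretely (Lemma~\ref{lem:uc-shallow}), view the hidden neurons as a domain and each data point $x\in S$ as a hypothesis $h_x(\mat{v}) = \sign(\langle x, \mat{v}\rangle)$; the margin assumption forces $\VC(\{h_x\}) \le 1/\gamma^2$. Sampling $g=\tilde{O}(1/\gamma^4)$ hidden neurons from the distribution whose mass on neuron $i$ is proportional to $|o_i|$ (the corresponding weight of the output neuron) and invoking Theorem~\ref{thm:uc} gives a subset $G$ whose sign-of-majority reproduces the output on all of $S$. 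Iterating this $L$ times from the output back to the first hidden layer (Lemma~\ref{lem:uc-general-depth-binary}) identifies $g^L = \tilde{O}(1/\gamma^{4L})$ first-layer neurons that determine each output bit, and then the meta-learner of Section~\ref{sec:meta} (Weighted Majority over experts, each expert running a Perceptron-like $\Neuron_{\mat p}$ for each candidate neuron plus a region-labeling function) learns those neurons with an extra $1/\gamma^2$ factor, yielding $\tilde{O}(\log|\cY|/\gamma^{4L+2})$. In short: the paper never lifts to a kernel space or composes polynomial approximations; it prunes to $\poly(1/\gamma)$ first-layer neurons (this is where the width-independence comes from) and then learns those directly. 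If you want to pursue your route, the place to concentrate is whether the margin-everywhere assumption can somehow circumvent the coefficient-weight lower bound for $\sign$ approximation --- I do not see how it can.
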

We prove Theorem~\ref{thm:intro-everywhere-margin} in Section~\ref{sec:everywhere-margin}.

A lower bound of $\Omega(\min\{1/\gamma^2, d\})$ for some networks and input sequences is easily implied by the well-known lower bound for online linear classification. Therefore, when $L$ is small, the bound of Theorem~\ref{thm:intro-everywhere-margin} is fairly good. Determining whether the exponential dependence on $L$ is necessary remains open.
The proof of Theorem~\ref{thm:intro-everywhere-margin} uses a method to significantly reduce (when $\gamma$ is large) the number of neurons in the target network, which is, to the best of our knowledge, novel. Our method relies on the celebrated \emph{uniform convergence} theorem \cite{vapnik1971uniform}. See Section~\ref{sec:tech-everywhere-margin} for more details.

\subsubsection{Adaptive learning} \label{sec:intro-adaptive}
Suppose that a learner has a mistake bound of $1/\gamma^b$ guaranteed by one of the theorems presented above, where $\gamma$ is the relevant definition of margin and $b$ is the relevant exponent. We note here that the algorithms used to prove the upper bounds described so far assume that $\gamma, b$ are known and given in advance. In Section~\ref{sec:adapt-to-margin}, we show how to remove this assumption, in the price of some polynomial degradation in the mistake bound. We stress that a standard doubling trick (which usually causes only a constant degradation in the mistake bound) is insufficient here, since there are two unknown parameters.

As a by-product, not assuming knowledge of $\gamma, b$ in fact allows us to not even know which of Theorem~\ref{thm:intro-multi-index} or Theorem~\ref{thm:intro-everywhere-margin} guarantees a better mistake bound, and still achieve it, up to polynomial factors. 

\begin{theorem} \label{thm:intro-adaptive}
    For some target network $\cN^\star$ and input sequence $S$, let $M_1, M_2$ be the mistake bounds guaranteed by the non-adaptive algorithms providing the mistake bounds of Theorem~\ref{thm:intro-multi-index} and Theorem~\ref{thm:intro-everywhere-margin}, respectively. Then, there exists an algorithm, that without any prior knowledge on $\cN^\star$ or $S$ enjoys a mistake bound of
    \[
    \M(\Lrn, S) = O \mleft((\min\{M_1, M_2\})^4 \mright). 
    \]
\end{theorem}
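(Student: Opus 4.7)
The plan is to design a meta-algorithm that is oblivious to the parameters $k, \gamma_1, \gamma, L$ and even to which of Theorems~\ref{thm:intro-multi-index} and \ref{thm:intro-everywhere-margin} yields the smaller bound. Let $M^\star := \min\{M_1, M_2\}$. The scheme combines a grid over parameter guesses with a doubling schedule on a mistake budget.

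First, I would construct a discrete set of candidate sub-algorithms: for each integer $k$ (resp.\ $L$) and each $\gamma \in \{2^{-1}, 2^{-2}, 2^{-3}, \ldots\}$, instantiate the algorithm of Theorem~\ref{thm:intro-multi-index} (resp.\ Theorem~\ref{thm:intro-everywhere-margin}) with those guessed parameters. Each candidate has an associated \emph{promised} mistake bound, computed from the closed-form expressions in the respective theorem. I then proceed in phases $i = 1, 2, \ldots$ with budget $B_i := 2^i$: in phase $i$, let $\cC_i$ be the finite subset of candidates whose promised bound is at most $B_i$, and run its members one at a time, following a candidate $E$ until its in-phase mistake count exceeds $B_i$, then switching to the next member of $\cC_i$. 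If $\cC_i$ is exhausted, advance to phase $i+1$.

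For the analysis, the key claim is that once $B_i \geq M^\star$, at least one candidate in $\cC_i$ has a \emph{true} guarantee $\leq B_i$, because a geometric-ratio-$2$ grid contains some $\gamma'$ with $\gamma'/2 \leq \gamma^\star \leq \gamma'$, and the bounds in both theorems degrade only by a constant factor when $\gamma$ is replaced by $\gamma/2$ (using monotonicity of $\TS(d,\cdot)$ for Theorem~\ref{thm:intro-multi-index} and the $\gamma^{-O(L)}$ shape for Theorem~\ref{thm:intro-everywhere-margin}). Consequently, in phase $i^\star := \lceil \log_2 M^\star \rceil + O(1)$ the meta-algorithm will eventually latch onto a candidate that never exceeds the $B_{i^\star}$ budget, and the game is effectively won from that point on. The total number of mistakes is at most $\sum_{i \leq i^\star} |\cC_i| \cdot B_i$, which by a geometric sum is $O(|\cC_{i^\star}| \cdot B_{i^\star}) = O(|\cC_{i^\star}| \cdot M^\star)$. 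To bound $|\cC_{i^\star}|$, observe that the promised bound grows at least as fast as $(1/\gamma)^k$ or $(1/\gamma)^L$, so any candidate with $k \geq c\log M^\star$ or $L \geq c\log M^\star$ is automatically excluded by the condition ``promised bound $\leq B_{i^\star}$,'' while the $\gamma$-grid contributes only $O(\log M^\star)$ relevant entries. A loose polynomial count yields $|\cC_{i^\star}| = O((M^\star)^{3})$ (being deliberately wasteful is fine), whence the total mistake count is $O((M^\star)^4)$.

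The main obstacle I anticipate is the parameter-degradation step: one must verify, especially for Theorem~\ref{thm:intro-everywhere-margin}, that replacing $(\gamma^\star, L)$ by a grid point $(\gamma', L')$ with $\gamma'/2 \leq \gamma^\star \leq \gamma'$ and $L' = L$ only inflates $\log|\cY|/\gamma^{4L+2}$ by a constant factor, which is a careful but routine calculation. A secondary nuisance is the bookkeeping needed to enumerate $\cC_i$ in a computable order and to handle the fact that the correct $L$ or $k$ is a priori unbounded; this is resolved by the observation that both parameters can be capped at $O(\log B_i)$ without losing any candidate that would fit the promise budget. With these pieces in hand, the polynomial overhead absorbs into the $(M^\star)^4$ bound stated in the theorem.
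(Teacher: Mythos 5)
Your approach takes a genuinely different route from the paper. The paper's proof is essentially a one-liner given Sections~\ref{sec:multiclass-wm} and~\ref{sec:adapt-to-margin}: wrap each of the two non-adaptive algorithms in $\Adap$ to obtain two adaptive experts with bounds $O(M_1^4)$ and $O(M_2^4)$, then run the multiclass Weighted Majority of Figure~\ref{fig:WM} on these two experts; with only two experts the $\log n$ term is a constant, giving $O(\min\{M_1,M_2\}^4)$. You instead propose a single phased doubling scheme over a geometric grid of $(k,\gamma_1)$ and $(L,\gamma)$, which dispenses with WM and does not invoke $\Adap$ at all.

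There is, however, a concrete error in the step you flagged but dismissed as routine. Your key claim is that once $B_i \geq M^\star := \min\{M_1,M_2\}$, some candidate in $\cC_i$ already has a true guarantee $\leq B_i$, on the grounds that ``the bounds in both theorems degrade only by a constant factor when $\gamma$ is replaced by $\gamma/2$.'' This is false: for the everywhere-margin bound $\log|\cY|/\gamma^{4L+2}$, halving $\gamma$ multiplies the bound by $2^{4L+2}$, which is not a constant; when $\gamma^\star \leq 1/2$ it is of order $(1/\gamma^\star)^{4L+2}$, i.e.\ comparable to $M^\star$ itself. The multi-index bound $\TS(k,\gamma_1)/\gamma_1^2 \leq (1.5/\gamma_1)^k/\gamma_1^2$ has the same issue, with a $2^{\Theta(k)}$ inflation. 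So the good candidate does not enter $\cC_i$ at $B_i \approx M^\star$ but only around $B_i \approx (M^\star)^2$. The final numerical bound $O((M^\star)^4)$ can still be salvaged (after this correction, the geometric sum gives roughly $\tilde{O}((M^\star)^2)$, and your deliberately loose $|\cC_{i^\star}| = O((M^\star)^3)$ is a red herring—$|\cC_{i^\star}|$ is really only polylogarithmic), but the ``constant-factor degradation'' claim as stated is wrong and would need to be replaced by a quantitative argument that the degradation is bounded by $O(M^\star)$. The paper's $\Adap$ avoids this pitfall entirely by gridding along the curve $1/\gamma^b = X$ and locating a crossing point where both $1/\gamma$ and $b$ become valid bounds simultaneously, rather than gridding $\gamma$ and the exponent independently as you do.
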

This result is obtained by simply executing a multiclass version of the \emph{Weighted Majority} algorithm of \cite{littlestone1994weighted} (described in Section~\ref{sec:multiclass-wm}) on the adaptive version (described in Section~\ref{sec:adapt-to-margin}) of the algorithms providing the mistake bounds of Theorem~\ref{thm:intro-multi-index} and Theorem~\ref{thm:intro-everywhere-margin} as experts.

\subsubsection{Agnostic learning}
Our results and analysis focus on the realizable case, but can be adopted to the \emph{agnostic} setting, where the adversary is allowed to ``lie" and provide responses not perfectly matching to the target network. A different, and perhaps more common point of view on agnostic learning is that the adversary never lies, but the true labels do not match any target function. Since the expressivity of neural networks is very strong, we adopt the first point of view which is somewhat more natural in our context. That is, we assume that there exists a target network producing the labels, but the adversary changes the correct label to a different label in some of the rounds. The identity and number of rounds in which $\Adv$ tampers with the data is unknown. Our goal is to minimize the learner's \emph{regret},  defined as
\[
\Reg(\Lrn, S) = \mathbb{E} \mleft[ \M(\Lrn, S) - \sum_{t=1}^T 1[\Phi^\star(x_t) \neq y_t] \mright],
\]
for any (not necessarily realizable) input sequence $S$.
The expectation is taken over $\Lrn$'s randomness. In contrast to the realizable case, in the agnostic case we must allow the learner to randomized its predictions in order to achieve $o(T)$ regret \cite{cover65}. The following regret bound is obtained by applying the agnostic-to-realizable reduction of \cite{hanneke2023multiclass}.

\begin{proposition} \label{prop:intro-agnostic}
    There exists a learner $\Lrn$, such that for any (not necessarily realizable) input sequence $S$ of length $T \geq 2M$ that has a guaranteed mistake bound $M$ by a learner $\Lrn'$  in the realizable case (without any labels being altered by the adversary):
    \[
    \Reg(\Lrn, S) = \tilde{O} \mleft( \sqrt{M T} \mright).
    \]
\end{proposition}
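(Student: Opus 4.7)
The plan is to apply the agnostic-to-realizable reduction of Hanneke, Moran, Shafer, and Yehudayoff~\cite{hanneke2023multiclass} as a black box. That reduction transforms any deterministic realizable online learner for a multiclass hypothesis class $\cH \subseteq \cY^{B(\R^d)}$ with mistake bound $M$ into a randomized agnostic learner whose expected number of mistakes on any input sequence of length $T$ exceeds that of the best hypothesis in $\cH$ by at most $\tilde{O}(\sqrt{MT})$.

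To apply it here, I would first fix a hypothesis class $\cH$ with two properties: (i) $\cH$ contains the target network $\Phi^\star$ used to define $\Reg$, and (ii) the realizable learner $\Lrn'$ is guaranteed a realizable mistake bound of $M$ whenever the labels come from any $h \in \cH$. For each of the upper-bound results stated earlier, a natural choice of $\cH$ exists: it can be taken to consist of all networks of the same architecture as $\Phi^\star$ satisfying the relevant margin assumption against the input sequence $S$ (e.g., first-layer margin at least $\gamma_1(\cN^\star, S)$ for Theorem~\ref{thm:intro-characterization}, and analogously for Theorems~\ref{thm:intro-multi-index} and~\ref{thm:intro-everywhere-margin}). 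With such $\cH$ fixed, feeding $\Lrn'$ into the reduction yields an agnostic learner $\Lrn$ whose expected regret against any fixed hypothesis in $\cH$ is $\tilde{O}(\sqrt{MT})$; specializing this to $\Phi^\star \in \cH$ gives exactly the bound on $\Reg(\Lrn, S)$ claimed in the proposition. The hypothesis $T \geq 2M$ merely places us in the regime where $\sqrt{MT} \geq M$, so that the reduction's guarantee is the meaningful one.

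The main subtlety I anticipate is verifying that the theorem of~\cite{hanneke2023multiclass} can genuinely be invoked with $\Lrn'$ as an opaque oracle and with $\cH$ chosen as above. Since that reduction is stated in terms of the existence of a realizable mistake-bounded learner, and does not rely on structural properties of $\cH$ (such as finite Littlestone dimension or explicit membership oracles) beyond access to such a learner, the invocation is direct. No further calculations or modifications to our realizable algorithms are required: the proof of the proposition is essentially a one-line reduction, with the bulk of the work already carried out in~\cite{hanneke2023multiclass}.
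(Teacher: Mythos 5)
Your plan and the paper's proof share the same starting point — the agnostic-to-realizable reduction of \cite{hanneke2023multiclass} — but you treat their Theorem~4 as a black box, whereas the paper explicitly says it must be adapted by ``closely following the proof \ldots and just replacing the Littlestone dimension with $M$.'' That phrasing signals that the theorem statement in \cite{hanneke2023multiclass} is parameterized by the Littlestone dimension of a fixed hypothesis class, not by an arbitrary per-sequence mistake bound $M$. Your workaround of fixing $\cH$ to be all networks satisfying the relevant margin constraint against $S$ does not close this gap: the Littlestone dimension of such a class is a worst-case quantity over \emph{all} sequences realizable by $\cH$, while $M$ is only a guarantee for the particular ordered sequence $S$; the two need not be comparable, and $\LD(\cH)$ could easily be much larger than $M$ (or infinite). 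Moreover, $\Reg$ here is measured against the single fixed target $\Phi^\star$ rather than against the best hypothesis in any class, so importing a hypothesis class at all is an unnecessary detour.

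What the paper actually does is re-run the expert-simulation argument from the proof of \cite{hanneke2023multiclass} directly with $\Lrn'$ as an oracle: for each subset $I \subseteq [T]$ with $|I| \leq M$ and each labeling $L \colon I \to \cY$, define an expert $E_{I,L}$ that simulates $\Lrn'$, overrides its prediction with $L(t)$ on rounds $t \in I$, and feeds its own emitted label back to $\Lrn'$ as if it were correct. Because $\Lrn'$ makes at most $M$ mistakes on $S$ labeled by $\Phi^\star$, taking $I$ to be exactly the set of error rounds and $L(t) = \Phi^\star(x_t)$ yields an expert that agrees with $\Phi^\star$ on all of $S$, and the total number of experts is at most $(T|\cY|)^M$. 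Running a randomized multiplicative-weights (Hedge-type) aggregator over this pool gives expected regret $O(\sqrt{T \log |\cE|}) = \tilde{O}(\sqrt{MT})$ against the perfect expert, which is exactly $\Reg(\Lrn, S)$. You would need to make this construction explicit (or at least note that the proof, not the theorem statement, of \cite{hanneke2023multiclass} transfers with $M$ in place of the Littlestone dimension) rather than asserting the invocation is direct.
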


Proposition~\ref{prop:intro-agnostic} allows us to handle the most general case in which realizability does not hold, and the parameters $\gamma,b$ from Section~\ref{sec:intro-adaptive} are not given to the learner. In this setting, if $M$ is the mistake bound guaranteed by Theorem~\ref{thm:intro-adaptive}, then Proposition~\ref{prop:intro-agnostic} guarantees the regret bound $\tilde{O} \mleft( \sqrt{M T} \mright)$ for all $T \geq 2M$.

The proof of the proposition is given by closely following the proof of Theorem~4 of \cite{hanneke2023multiclass}, and just replacing the Littlestone dimension with $M$.

\subsection{Related work}\label{sec:related-work}

We overview and compare to previous work generally related to online learning of neural networks in Section~\ref{sec:related-online}. Our results, especially Theorem~\ref{thm:intro-characterization} and Theorem~\ref{thm:intro-multi-index} are strongly related to the TS-packing number. Finding bounds on the TS-packing number is a geometric problem which is interesting on its own right. We overview some known results related to it in Section~\ref{sec:related-geometric}. Our bounds also rely on the possibility to identify a small set of ``important" neurons in the target net, and then use only on those neurons when learning the target function. This technique reminds us of  the known ``pruning" methodology which is extensively studied in the literature. We overview related work on pruning in Section~\ref{sec:related-pruning}.

\subsubsection{Previous work on online learning of neural networks} \label{sec:related-online}
In \cite{sahoo2017online}, an online learning algorithm for neural networks is proposed, and tested experimentally.
Theoretical analysis of regret bounds for \emph{randomized neural networks} was performed by \cite{chen2023regret, wang2024incremental}.  

Most related to our work, the work of \cite{rakhlin2015online} gave regret bounds for online learning of neural networks when the activation is Lipschitz and the loss function is convex and Lipschitz (by joining Theorem~8 and Proposition~15 in \cite{rakhlin2015online}). Although this is still quite different from our setting, which assumes sign activation and the $0/1$ loss (which is usually used in classification problems), we may compare their regret bound to the bound obtained in this work under the extended margin assumption. Specifically, if the depth of the network is $L$, the output is binary, and a margin of $\gamma$ is assumed for all neurons, a regret bound of $\tilde{O}\mleft( \sqrt{T}/ \gamma^{O(L)} \mright)$ is obtained by joining Theorem~\ref{thm:intro-everywhere-margin} and Proposition~\ref{prop:intro-agnostic}. The regret bound of \cite{rakhlin2015online} in this setting is $\tilde{O}\mleft( C_{\ell}\sqrt{T \log d} \mleft(\frac{B}{\gamma} \mright)^{O(L)} \mright)$, where $C_{\ell}$ is the Lipschitz constant of the loss function, $d$ is the input dimension and $B$ is an upper bound on the $1$-norm of the weight vectors. To see this, note that the actual bound given by \cite[Theorem~8 and Proposition~15]{rakhlin2015online} is $\tilde{O} \mleft( C_{\ell} \sqrt{T \log d}(B\cdot C_a)^{O(L)} \mright)$ where $C_a$ is the Lipschitz constant of the activation, but the margin assumption allows us to replace $\sign$ with a $C_a$-Lipschtiz function for some $C_a \geq 1/\gamma$. Although the result of \cite{rakhlin2015online} applies to a more general setting, our bound has a few advantages:
\begin{enumerate}
    \item It does not depend on the input dimension nor the $1$-norm of the weight vectors, which could a priori depend on the network's width.
    \item It is given by an explicit algorithm rather than a minimax analysis.
    \item It applies to the non-convex $0/1$ loss function. An analogue result for linear classifiers was proved in \cite[Section~5]{bendavid2009agnostic}.
    \item It is given by an agnostic-to-realizable reduction (Proposition~\ref{prop:intro-agnostic}). Therefore, it is finite (independent of $T$) in the realizable case (Theorem~\ref{thm:intro-everywhere-margin}).
\end{enumerate}

\subsubsection{Related geometric results}\label{sec:related-geometric}
The bounds in Theorem~\ref{thm:intro-characterization} and Theorem~\ref{thm:intro-multi-index} are given in terms of the TS-packing number. The investigation of totally separable packing problems in geometry literature dates back to the 40's \cite{goodman1945circle, hadwiger1947nonseparable}, and the totally-separable notion is due to Erdős, who has made some conjectures with respect to those problems, according to \cite{goodman1945circle}. The works \cite{toth1973totally, kertesz1988totally} proved bounds on the density of TS-packing of circles ($2$-dimensional balls) and balls ($3$-dimensional balls), respectively. The packings considered in those works are very similar to our TS-packings, with the main difference being that we are more interested in high dimensions, as this is the typical case when dealing with neural networks. The interested reader may refer to the recent thorough survey \emph{on separability in discrete geometry} \cite{bezdek2024separability} for more information.

\subsubsection{Neural networks pruning} \label{sec:related-pruning}

Pruning is a popular practical paradigm used to reduce the number of computation elements in a neural network, which is useful in practice for a variety of reasons, such as reducing infrastructure costs. There is extremely vast literature on the pruning paradigm: more than three thousand papers just between 2020 and 2024, according to \cite{cheng2024survey}. We refer the interested reader to the surveys \cite{blalock2020state, cheng2024survey} for more information and references.

The pruning method in our work is a bit different from standard, practically used pruning techniques. In standard pruning, the net is pruned and then trained again as is to recover its precision (this is sometimes called ``\emph{fine-tuning}"). In this work, we identify a (desirably small) subset of the neurons that is necessary to compute the target function calculated by the network, and then learn the target function, possibly without relying on the actual architecture of the original network.

\section{Technical Overview}

In this section, we informally describe the main ideas used to prove our results.
We start by describing a general approach that is common to quite a few of the proofs in this paper. We think of a neural network as a pipeline with two stations:
\begin{enumerate}
    \item The first station, implemented by the first hidden layer, partitions the unit ball to $2^\ell$ many \emph{regions} (which some of them might be empty), where $\ell$ is the number of neurons in the first hidden layer, denoted with $\cL$. Each region is specified by a region-specifying vector $\mat{r}:=\mat{r}^{(\cL)}\in \{\pm 1\}^\ell$. That is, the region of a point $x\in B(\mathbb{R}^d)$ is specified by $\mat{r}$ if for every $i \in [\ell]$  it holds that $r_i = \sign(\langle \mat{v_i}, x\rangle +b_i)$, where $(\mat{v_i}, b_i)$ is the $i$'th neuron of $\cL$.
    \item The second station uses all other layers to implement some function $f\colon \{\pm 1\}^\ell \to \cY$.
\end{enumerate}

For any point $x\in B(\mathbb{R}^d)$, we denote the region-specifying vector of $x$ by $\mat{r}(x)$. The function $\Phi^\star$ calculated by the target network is thus the composition $f\circ \mat{r}$. That is, $\Phi^\star(x) = f(\mat{r}(x))$ for all $x\in B(\mathbb{R}^d)$.

The above point of view is at the heart of the high-level strategy used to prove the mistake bounds in this paper:
\begin{enumerate}
    \item Learn the partition of $B(\mathbb{R}^d)$ to regions.
    \item Learn the label of every region.
\end{enumerate}

To implement this strategy, we first describe a meta-learner that uses a multiclass version of the \emph{Weighted Majority} algorithm of \cite{littlestone1994weighted}, which has good guarantees if executed with an appropriate expert class. Then, we provide specific expert classes to run the meta-learner with, for the sake of obtaining the stated mistake bounds.

Naively, the number of mistakes made when learning the partition of $B(\mathbb{R}^d)$ into regions might depend on $\ell$. Since $\ell$ might be very large, this could be a significant bottleneck of the mistake bound. Therefore, a main idea in most of the bounds is to reduce the number of neurons in the net such that only neurons which are required to properly partition $B(\mathbb{R}^d)$ to regions are considered.

\paragraph{Section organization.} In Section~\ref{sec:tech-meta} we overview our meta-learner for learning neural networks online. This algorithm is the main framework used to prove the mistake bounds in this paper. In Section~\ref{sec:tech-characteriztion}, we describe how to use the meta-learner in order to prove the upper bound of Theorem~\ref{thm:intro-characterization}, and we also include a brief explanation of the lower bound. In Section~\ref{sec:tech-multi} we explain how to improve the upper bound of Theorem~\ref{thm:intro-characterization} in the multi-index model, and in Section~\ref{sec:tech-everywhere-margin} we explain how to improve it when an extended margin assumption holds.

\subsection{Technical overview: Meta-learner} \label{sec:tech-meta}
The meta-learner executes a multiclass version the weighted majority ($\WM$) algorithm of 
\cite{littlestone1994weighted}. This algorithm aggregates predictions of a class $\cE$ of experts, to a unified prediction strategy with a mistake bound that depends logarithmically on the size of the class, and linearly on the mistake bound of a best expert. In our case, each expert from $\cE$ implements some partition of $B(\mathbb{R}^d)$ to regions and a labeling function labeling those regions. To obtain good bounds, we need to make sure that:
\begin{enumerate}
    \item $\cE$ is not too large.
    \item At least one expert does not make too many mistakes.
\end{enumerate}
In a nutshell, we are able to make sure that both items hold in the instances of the meta-learner we use, by:
\begin{enumerate}
    \item Showing that there are not too many possible partitions of the input sequence $S \subset B(\mathbb{R}^d)$ to different regions in $B(\mathbb{R}^d)$.
    \item Making sure that every possible partition of the input sequence $S \subset B(\mathbb{R}^d)$ to regions is implemented by some expert $E$, and that the labeling function of an expert with the correct partition to regions is accurate enough as well. 
\end{enumerate}
The first item enables $\cE$ to be reasonably small, and the second item is necessary so that at least one expert $E^\star$ will perform well in the task of predicting the labels of $S$. We use three different instances of the meta-learner, for three different setups: general (Theorem~\ref{thm:intro-characterization}), multi-index (Theorem~\ref{thm:intro-multi-index}), and everywhere-margin (Theorem~\ref{thm:intro-everywhere-margin}).

%In all setups, a main part of the proof idea is to partition $B(\mathbb{R})^d$ to a small number of regions $g$ by hyperplanes $G = \{(\mat{w_1},b_1), \ldots, (\mat{w_g}, b_g)\} \subset \cL$ taken from the first hidden layer of the target network, such that there exists a function $f\colon\{\pm 1\}^g \to \cY$ satisfying $\Phi^\star(x) = f(\mat{r}(x))$ for all $x\in S$. Crucially, in some cases $g$ is much smaller than $\ell$, the width of the first hidden layer. We call $G$ the set of \emph{important neurons}. This idea is similar to \emph{pruning}, a widely used practical idea where the goal is to reduce the number of neurons in the target network.

\subsection{Technical overview: Characterization} \label{sec:tech-characteriztion}
\subsubsection{Upper bound}
In Theorem~\ref{thm:intro-characterization} we show that the optimal mistake bound for every target net $\cN^\star$ with input dimension $d$, and for every input sequence $S$, is not much larger than $\TS(d,\gamma_1)$. In order to prove this bound, we use the fact that in any partition of $B(\mathbb{R}^d)$ to regions implemented by $\cN^\star$, at most $\TS(d,\gamma_1)$ regions actually contain points from $S$, otherwise $S$ induces a $(d,\gamma_1)$-TS-packing which is larger than $\TS(d,\gamma_1)$, and this is a contradiction. Armed with this argument, we can also prove that there exists an important set of neurons $G$ of size at most $\TS(d,\gamma_1)$. This allows us to construct a good enough expert class $\cE$ to execute the meta-learner with.

\subsubsection{Lower bound}
The lower bound follows from the ``two stations" point of view explained in the beginning of the section. We take a $(d,\epsilon)$-TS-packing of size $\TS(d,\epsilon)$ as the input sequence $S$, and show that for every $\{0,1\}$-labeling of $S$ there exists a network $\cN^\star$ realizing it, such that $\gamma_1\geq \epsilon$. The neurons in the first hidden layer of $\cN^\star$ are the hyperplanes induced by the TS-packing $S$. The second hidden layer consists of neurons determining which regions induced by the first hidden layer are labeled $0$, and which are labeled $1$. This implies a lower bound of $\TS(d,\gamma_1)$.

\subsection{Technical overview: The multi-index model} \label{sec:tech-multi}
The proof of the mistake bound for the multi-index model follows the same lines of the proof of the upper bound in the general characterization result. The main difference is that since $\Phi^\star$ in fact depends only on $k \ll d$ orthonormal directions, the arguments outlined for the general case actually hold with $k$ replacing $d$. We show that if this is not the case, we can construct a $(k,\gamma_1)$-TS-packing of size larger than $\TS(k,\gamma_1)$, which is of course a contradiction.

\subsection{Technical overview: large margin everywhere} \label{sec:tech-everywhere-margin}

Let us focus on the case where the target net has a single hidden layer and calculates a binary function $\Phi^\star\colon B(\mathbb{R}^d) \to \{\pm 1\}$. In this case, the extended margin $\gamma$ is the minimal margin over all neurons in the hidden layer and in the single output neuron. Recall that the margin of a neuron $(\mat{v}, b)$ is $\min_{x \in S} | \langle \mat{v}, \mat{x} \rangle + b|$, where $\mat{x}$ is the input that $(\mat{v}, b)$ receives from the previous layer when the input to the network is $x$. If $\gamma$ is large, a significantly better mistake bound can be proved, compared to the case where $\gamma_1$ (the minimal margin in the hidden layer) is large but the minimal margin of the output neuron is small. Below, we briefly explain what makes such an improvemnt possible. We use known terms and results from VC-theory. The unfamiliar reader may refer to Section~\ref{sec:preliminaries} for a formal background, before reading the next paragraph.

For simplicity, in the following paragraph we assume that all neurons in $\cL$ are homogeneous (have bias $b=0$). 
For every $x\in S$, define a function $h_x\colon\cL \to \{\pm 1\}$, given by $h_x = \sign(\langle  x, \mat{ v} \rangle)$ for every $\mat{ v} \in \cL$. Note that $|\langle  x, \mat{v } \rangle| \geq \gamma$ for all $x\in S, \mat{v} \in \cL$. Using the mistake bound of the known Perceptron algorithm \cite{rosenblatt1958perceptron, novikoff1962perceptrons} and the online learnability characterization of \cite{littlestone1988learning}, this implies that the VC-dimension of the class $\cH = \{h_x: x\in S\}$ is at most $1/\gamma^2$. Therefore, we may use the celebrated uniform convergence theorem of \cite{vapnik1971uniform} to obtain a small ``representing set" of $\cL$ with respect to any distribution $D$ of our choice. It remains to choose $D$ in a way that communicates the result of the output neuron for any $x\in S$, using only the neurons in the representing set. We show that this is possible with a representing set of size only $\tilde{O}(1/\gamma^4)$, by choosing the probabilities of $D$ to be proportional to the weights in the weight vector of the output neuron.

To extend this result to general networks, we first extend the result to a network of arbitrary depth that calcultes a single output neuron, by applying the explanation above from the output neuron backwards, up to the first hidden layer. This is where the exponential dependence on the network's depth in Theorem~\ref{thm:intro-everywhere-margin} comes from. To handle any label set $\cY$, we use the same idea on every one of the $\log |\cY|$ output neurons separately. This is where the logarithmic dependence on $|\cY|$ in Theorem~\ref{thm:intro-everywhere-margin} comes from.

\section{Definitions and technical background} \label{sec:preliminaries}

\subsection{Standard Notation}
We use $[n] = \{1, \ldots, n\}$. We define the \emph{sign function} for all $x\in \mathbb{R}$ as $\sign(x) = 1$ if $x \geq 0$ and $\sign(x) = -1$ otherwise.
For a natural $d$, let $B_r(\mathbb{R}^d)$ be the ball of radius $r$ in $\mathbb{R}^d$ centered at the origin. Denote $B(\mathbb{R}^d):= B_1(\mathbb{R}^d)$.
The euclidean distance between $x_1,x_2 \in \mathbb{R}^d$ is $\dist(x_1,x_2)$. The $\ell_2$ norm of $x_1$ is $\lVert x_1\rVert$. The unit vector in direction $i \in [d]$ is denoted by $\mat{e_i}$. Matrices are denoted by bold capital letters like $\mat{W}$, and vectors by bold lowercase letters like $\mat{w}$. The entries are denoted by subscript indices such as $\mat{w} = w_1, \ldots, w_d$ for $\mat{w}$ of dimension $d$. $W_{i,j}$ is the value in row $i$ and column $j$ of the matrix $\mat{W}$. For two vectors $\mat{u}, \mat{v}$ of dimension $d$, their dot product is $\langle \mat{u}, \mat{v} \rangle = \sum_{i\in [d]} u_iv_i$. 

\subsection{Concept classes}
Let $\cX$ be a (possibly infinite) \emph{domain}, and ${\cY}$ be a finite label set.
A pair $(x, y)\in \cX \times \cY$ is called an \emph{example}, and an element $x\in \cX$ is called an \emph{instance}. 
A function $h\colon \cX \to \cY$ is called a \emph{hypothesis} or a \emph{concept}.
A \emph{hypothesis class}, or \emph{concept class}, is a non-empty  set $\cH \subset \cY^{\cX}$.
A \emph{labeled input sequence} of examples  $\{(x_i,y_i)\}_{t=1}^T$ is said to be \emph{realizable} by $\cH$ if there exists $h \in \cH$ such that $h(x_t) = y_t$ for all $1 \leq i \leq T$. We say that such $h$ is \emph{consistent} with the labeled input sequence, or \emph{realizes} it. An unlabeled sequence of instances $S=x_1, \ldots, x_T$ is called an \emph{input sequence}. An input sequence $S=x_1, \ldots, x_T$ and a function $h\in \cH$ naturally defines the realizable labeled input sequence $(x_1, h(x_1)), \ldots, (x_T,h(x_T))$.

The concept classes we will consider in this paper will usually (but not always) be all functions computable by some \emph{neural network}, as formally defined in Section~\ref{sec:prelim-neural}.

\subsection{VC-theory and Uniform Convergence} \label{sec:preliminaries-uc}

In the proof of Theorem~\ref{thm:intro-everywhere-margin}, we use VC-theory, and specifically the fact that VC-classes enjoy the \emph{uniform convergence} property.

Let $\cY = \{\pm 1\}$, and let $\cH \subset \cY^{\cX}$ be a concept class. A set $x_1, \ldots,x_d \in \cX$ is \emph{shattered} by $\cH$ if for all $y_1, \ldots, y_d \in \cY$, there exists $h\in \cH$ such that $f(x_i) = y_i$ for all $i \in [d]$. The \emph{VC-dimension} of $\cH$, denoted by $\VC(\cH)$, is defined as the maximal size of a shattered set. If there is no such maximal size, then $\VC(\cH) = \infty$. If $\VC(\cH) < \infty$, we say that $\cH$ is a \emph{VC-class}.

Let $D$ be a probability distribution over $\cX$. For any $h\in\cH$, and for any sample $S = x_1, \ldots, x_m$ of instances from $\cX$, define
\[
Q_D(h) = \mathbb{E}_{x \sim D}[h(x)], \quad \hat{Q}_S(h) =  \frac{|\{x\in S: h(x) = +1\}| - |\{x\in S: h(x) = -1\}|}{|S|}.
\]
We will use the uniform convergence theorem of \cite{vapnik1971uniform}.

\begin{theorem}[Uniform convergence \cite{vapnik1971uniform}] \label{thm:uc}
   We have
   \[
   \Pr_{S=x_1, \ldots x_m \sim D}\mleft[ \sup_{h \in \cH} |Q_D(h) -  \hat{Q}_S(h)| > \epsilon \mright] \leq 8 (em/\VC(\cH))^{\VC(\cH)} e^{-m \epsilon^2/32},
   \]
   where the notation $S=x_1, \ldots x_m \sim D$ indicates that the sample $S=x_1, \ldots x_m $ is drawn i.i.d from $D$.
\end{theorem}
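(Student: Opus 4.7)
The plan is to follow the classical Vapnik--Chervonenkis route for uniform convergence: symmetrization against an independent ghost sample, introduction of Rademacher signs via exchangeability, reduction from $\cH$ to a finite restriction via Sauer--Shelah, and a Hoeffding-based union bound over that restriction. No ingredient beyond these four standard moves is needed; the whole task is to chain them together so that the stated prefactor $8(em/\VC(\cH))^{\VC(\cH)}$ and exponent $-m\epsilon^2/32$ come out correctly.

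First I would draw a ghost sample $S' = x_1', \ldots, x_m'$ i.i.d.\ from $D$, independent of $S$. For any hypothesis $h$ witnessing $|Q_D(h) - \hat{Q}_S(h)| > \epsilon$, a Chebyshev (or one-sided Hoeffding) bound shows that $|\hat{Q}_{S'}(h) - \hat{Q}_S(h)| > \epsilon/2$ with probability at least $1/2$, provided $m\epsilon^2$ exceeds a small absolute constant. Taking the supremum over $h$ and arguing conditionally on $S$ yields the symmetrization inequality
\[
\Pr\!\left[\sup_{h\in\cH} |Q_D(h) - \hat{Q}_S(h)| > \epsilon\right] \leq 2\Pr\!\left[\sup_{h\in\cH} |\hat{Q}_{S'}(h) - \hat{Q}_S(h)| > \epsilon/2\right].
\]

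Next I would exploit the exchangeability of the pairs $(x_i, x_i')$: inserting i.i.d.\ Rademacher signs $\sigma_i$ leaves the joint distribution invariant, so the right-hand side equals the probability that $\sup_{h\in\cH}\bigl|\tfrac{1}{m}\sum_i \sigma_i (h(x_i') - h(x_i))\bigr| > \epsilon/2$. Now condition on the pool $S \cup S'$: the supremum depends only on the restriction of $\cH$ to these $2m$ points, which by Sauer--Shelah has at most $(2em/d)^d$ distinct elements, where $d = \VC(\cH)$. For each fixed restricted hypothesis, the inner sum is an average of $m$ independent mean-zero variables bounded in $[-2,2]$, so Hoeffding's inequality bounds the tail at $\epsilon/2$ by $2\exp(-m\epsilon^2/32)$. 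Combining the symmetrization factor, the Sauer--Shelah count, and the union bound over the at-most-$(2em/d)^d$ restrictions yields the claimed tail estimate, up to absorbing the $2^{d+2}$ into the polynomial prefactor.

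The main obstacle is purely bookkeeping rather than conceptual. The symmetrization step needs a quantitative lower bound on $m$ to keep the constant $1/2$ honest, and the precise prefactor $8$ together with the exponent constant $32$ depends on whether the centered variables are expressed in terms of $h \in \{\pm 1\}$ or the indicator $\mathbf{1}[h = 1]$, and on the Chebyshev-versus-Hoeffding choice inside symmetrization. Once these conventions are pinned down and the $2^{d+2}$ factor from projecting onto a pool of size $2m$ is folded into the polynomial prefactor, the stated bound falls out directly.
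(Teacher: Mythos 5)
The paper supplies no proof of this theorem; it is a textbook result and the paper simply cites \cite{vapnik1971uniform}, so there is nothing in the paper to compare your argument against. Your outline follows the standard route, but there is a genuine gap at precisely the bookkeeping step you dismiss as routine. Symmetrization against a ghost sample, followed by conditioning on the pool $S \cup S'$ and applying Sauer--Shelah, counts restrictions of $\cH$ over $2m$ points, giving $(2em/d)^d$ rather than $(em/d)^d$, where $d = \VC(\cH)$. Chaining the symmetrization factor $2$, the union bound over at most $(2em/d)^d$ restrictions, and Hoeffding on range-$[-2,2]$ variables (factor $2e^{-m\epsilon^2/32}$) yields $2^{d+2}(em/d)^d e^{-m\epsilon^2/32}$. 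The factor $2^d$ is \emph{exponential} in the VC dimension and cannot be ``folded into the polynomial prefactor'' as you claim; your bound matches the stated one only when $d \le 1$.

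Obtaining the prefactor with the growth function evaluated at $m$ rather than $2m$ requires a more careful argument than vanilla double-sample symmetrization (see, e.g., Devroye, Gy\"orfi, and Lugosi, Theorem~12.5, whose statement this result matches). That said, your weaker bound of $4(2em/d)^d e^{-m\epsilon^2/32}$ would serve identically in the paper's only downstream use (the sample-size computation in Lemma~\ref{lem:uc-shallow}), where only the order $m = \tilde O(d/\epsilon^2)$ matters; it simply is not a proof of the theorem with the constants exactly as stated.
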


\subsection{Neural networks} \label{sec:prelim-neural}
Let us formally define a neural network. We mostly follow the notation of \cite{petersen2024mathematical}, as described below.
Let $L \in \mathbb{N}, d_0, \ldots, d_{L+1} \in \mathbb{N}$. We denote also $d = d_0$ and $d_{out} = d_{L+1}$. In this paper, a \emph{neural network} with the \emph{architecture} $d_0, \ldots, d_{L+1}$ is a function $\Phi:B(\mathbb{R}^{d_0}) \to \{\pm 1\}^{d_{L+1}}$ such that there exist \emph{weight matrices} $\boldsymbol{W^{(\ell)}}\in \mathbb{R}^{d_{\ell+1} \times d_{\ell}}$ and \emph{bias vectors} $\boldsymbol{b^{(\ell)}} \in \mathbb{R}^{d_{\ell + 1}}$ for all $\ell \in \{0, \ldots, L\}$, for which the following holds. Let $\boldsymbol{x^{0}} = x$, and $\boldsymbol{x^{(\ell)}} = \sign(\boldsymbol{W^{(\ell -1)}} \boldsymbol{x^{(\ell-1)}} + \boldsymbol{b^{(\ell -1)}})$ for $\ell \in [L+1]$, where the $\sign$ function is applied separately for each row. That is,  $x^{(\ell)}_i = \sign(\langle \mat{W^{(\ell -1, i)}},  \boldsymbol{x^{(\ell-1)}} \rangle + b_i)$ where $\mat{W^{(\ell -1, i)}}$ denotes the $i$'th row of $\boldsymbol{W^{(\ell-1)}}$. Then, $\Phi(x) = \boldsymbol{x^{(L+1)}}$ for all $x \in \mathbb{R}^{d}$. For technical reasons, it will be convenient to assume that the $\sign$ function generating $\mat{x^{(\ell)}}$ is normalized (multiplicatively) by $1/\sqrt{d_{\ell}}$. For simplicity of notation, we will also usually assume (unless stated otherwise) that all bias vectors are the all-$0$ vector. Whenever this is assumed, it is clear that the arguments apply also when this is not the case.

Each row $\boldsymbol{W^{(\ell, i)}}$ in $\boldsymbol{W^{(\ell)}}$ is called a \emph{neuron}. In many cases, we will fix $\ell$ and consider the rows of $\boldsymbol{W^{(\ell,i)}}$ as a sequence of separate neurons denoted by $\mat{w_1}, \ldots, \mat{w_{d_{\ell+1}}}$. We relate to those as the neurons in the $(\ell+1)$'th hidden layer. We further assume that for every neuron $\mat{w}$: $\lVert\mat{w}\rVert = 1$. We use the notation $\cN:= \cN(\mat{W^{(0)}}, \ldots, \mat{W^{(L)}})$ to refer to the neural network itself, as an ordered collection of weight matrices used to calculate the appropriate function $\Phi$ as described above.

\subsection{Online learning} \label{sec:online-learning}
Online learning is a repeated game between a learner and an adversary. The learner's goal is to classify with minimal error a stream of instances $x_1, \ldots, x_T \in \cX$. Each round $t$ of the game proceeds as follows.

\begin{enumerate}[(i)]
\item The adversary picks an instance $x_t \in \cX$, and sends it to the learner.
\item The learner predicts a value $\hat{y}_t \in \cY$.
\item The adversary picks $y_t \in \cY$ and reveals it to the learner. The learner suffers the \emph{loss} $\ind[\hat{y}_t \neq h(x_t)]$.  
\end{enumerate}

We focus on the \emph{realizable case}, where there exists an unknown \emph{target function} $h \colon \cX \to \cY$ taken from a known concept class $\cH$, such that $y_t = h(x_t)$ for all $t \in [T]$. In this work, $\cY = \{\pm 1\}^{d_{out}}$ and $\cH:= \cH(d, d_{out})$ is the class of all functions $\Phi:B(\mathbb{R}^d) \to \cY$ implementable by a neural network $\cN^\star$ with architecture satisfying $d_{L+1} = d_{out}$ and $d_0 = d$. We may also relate to $\cY$ as the set $[Y] = [2^{d_{out}}]$, where each $y\in \cY$ has a binary representation in $\{\pm 1\}^{d_{out}}$.

We model learners as functions $\Lrn \colon (\cX \times \cY)\strut^* \times \cX \rightarrow \cY$. The input of the learner has two parts: a \emph{feedback sequence} $F \in (\cX \times \cY)\strut^*$, and the current instance $x \in \cX$. The feedback sequence is naturally constructed throughout the game: in the end of every round $t$, the learner appends $(x_t, y_t)$ to the feedback sequence. The prediction of $\Lrn$ in round $t+1$ is then given by $\Lrn(F, x_{t+1})$, where $F$ is the feedback sequence gathered by the learner in rounds $1, \dots, t$.

Given a learning rule $\Lrn$ and a labeled input sequence of examples $S = (x_1,y_1),\ldots,(x_T,y_T)$, we denote the number of mistakes that $\Lrn$ makes on $S$ by
\[\M(\Lrn; S) = \sum_{i=1}^T  \ind [\hat{y}_t \neq y_t].\]
This quantity is called the \emph{mistake bound} of $\Lrn$ on $S$. Our goal is to design learners who minimize $\M(\Lrn; S)$ for every\footnote{In traditional online learning, one often requires a uniform bound $M$ on $\M(\Lrn; S)$ that applies to all realizable sequences $S$. This is not possible when learning neural networks, even for the simplest single-layer perceptron with input dimension $1$. The interested reader may refer to \cite{alon2022theory} for a unified theory handling with such cases.} input sequence $S$.

It is worth noting that fixing $S$ beforehand is usually linked with an \emph{oblivious} adversary setting, in which the adversary cannot pick the examples on the fly. However, when the learner is deterministic, the adversary can simulate the entire game on its own, since we assume that the learning algorithm is known to all. 
Thus, oblivious and adaptive adversaries are in fact equivalent, and we will refer to the adversary as being either adaptive or oblivious, depending on whichever is more convenient in the given context.

All of our algoritms are \emph{conservative}. Those are algorithms that change their working hypothesis only when making a mistake. Therefore, rounds where the algorithm makes a correct prediction may be ignored, and we assume that the number of rounds $T$ is exactly the number of mistakes. However, it is understood that the number of rounds may in fact be unbounded.

\subsubsection{Multiclass weighted majority} \label{sec:multiclass-wm}
The algorithms we present use a conservative, straightforward multiclass extension of the well-known binary weighted majority ($\WM$) algorithm of \cite{littlestone1994weighted}. To the best of our knowledge, this simple extension does not appear in the literature, so we include it here for completeness, and it is described in Figure~\ref{fig:WM}. The $\WM$ algorithm is executed with a family $\cE = \{E_1, \ldots, E_n\}$ of $n$ many experts. Similarly to the standard online learning setting presented in Section~\ref{sec:online-learning}, the setting in which $\WM$ is executed is a repeated game between a learner and an adversary, where in the beginning of each round $t$ every expert $E_i$ gives its prediction $E_i(t) \in \cY$. The learner's goal is to make as few as possible prediction mistakes compared to $L$, the minimal number of mistakes made by a single expert.

\begin{figure}
    \centering
    \begin{tcolorbox}
    \begin{center}
        \textsc{$\WM(\cE)$}
    \end{center}
    \textbf{Input:} A set of experts $\cE = \{E_1, \ldots, E_n\}$.
    \\
    \textbf{Initialize:} Set $w^{(1)}(E) = 1$ for all $E \in \cE$.
    \\
    \textbf{for $t=1,\ldots, T$:}
    \begin{enumerate}
       \item Receive expert predictions $E_i(t) \in \cY$ for all $i \in [n]$.
       \item Predict
       \[
       \hat{y}_t = \max_{y \in \cY} \sum_{i \in [n]: E_i(t) = y} w^{(t)}(E_i).
       \]
       \item Recieve $y_t$.
       \item If $y_t = \hat{y}_t$: set $w^{(t+1)}(E_i) = w^{(t)}(E_i)$ for all $i \in [n]$. 
       \item If $y_t \neq \hat{y}_t$: set $w^{(t+1)}(E_i) = w^{(t)}(E_i)/2$ for all $i$ so that $E_i(t) \neq y_t$, and $w^{(t+1)}(E_i) = w^{(t)}(E_i)$ for all other $i\in [n]$.
    \end{enumerate}
    \end{tcolorbox}
    \caption{The multiclass weighted majority algorithm.} 
    \label{fig:WM}
\end{figure}

The multiclass extension of the weighted majority algorithm has the same mistake bound as the standard binary version of the algorithm.

\begin{proposition} \label{prop:wm-bound}
    $\WM(\cE)$ makes at most $3(L + \log n)$ many mistakes where $L$ is the number of mistakes made by an expert with a minimal number of mistakes, and $n = |\cE|$.
\end{proposition}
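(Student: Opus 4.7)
The plan is to adapt the classical potential-function proof for binary weighted majority \cite{littlestone1994weighted} to the multiclass update rule. Define the total weight at the start of round $t$ as $W^{(t)} = \sum_{i \in [n]} w^{(t)}(E_i)$. Initially $W^{(1)} = n$, and at the end of the game $W^{(T+1)} \geq 2^{-L}$, because the expert with the minimum number of mistakes $L$ has weight exactly $2^{-L}$. The two-sided bound will follow by showing a constant multiplicative decrease of $W^{(t)}$ per mistake, and then solving.

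The main step is to show that whenever the algorithm errs, $W^{(t+1)} \leq (3/4) W^{(t)}$. For each label $y \in \cY$ let $W_y = \sum_{i : E_i(t) = y} w^{(t)}(E_i)$, and write $W_{\mathrm{wrong}} = W^{(t)} - W_{y_t}$. The learner picks $\hat{y}_t$ as the argmax of $W_y$, and since $\hat{y}_t \neq y_t$ we get $W_{\mathrm{wrong}} \geq W_{\hat{y}_t} \geq W_{y_t}$, hence $W_{\mathrm{wrong}} \geq W^{(t)}/2$. The update rule halves every expert whose prediction differs from $y_t$, so
\[
W^{(t+1)} = W_{y_t} + \tfrac{1}{2} W_{\mathrm{wrong}} = W^{(t)} - \tfrac{1}{2} W_{\mathrm{wrong}} \leq W^{(t)} - \tfrac{1}{4} W^{(t)} = \tfrac{3}{4} W^{(t)}.
\]
After $M$ total mistakes we obtain $W^{(T+1)} \leq n (3/4)^M$. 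Combining with $2^{-L} \leq W^{(T+1)}$ and taking $\log_2$ gives $M \leq (L + \log_2 n)/\log_2(4/3)$. Since $1/\log_2(4/3) < 3$, this is at most $3(L + \log_2 n)$, matching the claim.

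The only subtlety---and the spot I would double-check---is the argmax argument in the multiclass step: in binary weighted majority the halved experts are automatically a weighted majority, so they carry mass at least $W^{(t)}/2$ for free, but with $|\cY| > 2$ one must crucially combine the argmax property $W_{\hat{y}_t} \geq W_{y_t}$ with the feature of the update rule that halves \emph{all} experts disagreeing with $y_t$, not only those that agreed with the learner's prediction $\hat{y}_t$. If only the experts that predicted $\hat{y}_t$ were halved, the $3/4$-contraction could fail and one would be forced to pay a $|\cY|$-dependent factor; it is precisely the conservative halving of every wrong expert that keeps the multiclass mistake bound identical to the binary one.
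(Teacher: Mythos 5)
Your proof is correct and follows essentially the same potential-function argument as the paper's: you track the total weight $W^{(t)}$, use the argmax property $W_{\hat{y}_t} \geq W_{y_t}$ together with the fact that \emph{all} experts disagreeing with $y_t$ are halved to get the $3/4$-contraction on each mistake round, and then combine $2^{-L} \leq W^{(T+1)} \leq n(3/4)^M$ exactly as the paper does. One tiny imprecision (harmless to the bound): the best expert's final weight is at least, not ``exactly,'' $2^{-L}$, since it is only halved on rounds where both it and the algorithm err; the inequality you actually use is the correct one.
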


\begin{proof}
    For $y \in \cY$, let
    \[
    W^{(t)}_y = \sum_{i \in [n]: E_i(t) = y} w^{(t)}(E_i), \quad \text{and} \quad W^{(t)} = \sum_{y \in \cY} W^{(t)}_y.
    \]
    In simple words, $W^{(t)}_y$ is the sum of weights of all experts predicting $y$ in round $t$, and $W^{(t)}$ is the sum of weights of all experts in round $t$.
    By the prediction rule, in any round $t$ we have $W^{(t)}_{y_t} \leq W_t/2$. Indeed, by definition of $\hat{y}_t$ we have $W^{(t)}_{y_t} \leq W^{(t)}_{\hat{y}_t}$.
    Therefore if $W^{(t)}_{y_t} > W_t/2$ then we have $W^{(t)}_{y_t} + W^{(t)}_{\hat{y}_t} > W_t$, which is a contradiction.
    Therefore, it holds that $W^{(t)} - W^{(t)}_{y_t} \geq W^{(t)}/2$. So by the update rule, we have $W^{(t+1)} \leq W_t - \frac{1}{2}\cdot W^{(t)}/2 \leq \frac{3}{4} W^{(t)}$. On the other hand, in every round $t$ we have $W^{(t)} \geq 1/2^L$. Therefore, since $W^{(1)} = n$, for any number of mistakes $T$ we have:
    $1/2^L \leq W^{(T)} \leq n \cdot (3/4)^T$. Solving this inequality for $T$ gives the stated upper bound.
\end{proof}

\section{A meta online learner for neural networks} \label{sec:meta}

In this section, we describe a meta-learner for online learning neural networks, used to prove our upper bounds. Generally speaking, the meta-learner's main theme is to execute $\WM(\cE)$ on an appropriate class of experts $\cE$.

Let $\Phi^\star \colon B(\mathbb{R}^d) \to \cY$ be the target function, calculated by some neural network $\cN^\star$ called the \emph{target net}.  Fix the input sequence $S = x_1, \ldots, x_T \in B(\mathbb{R}^d)$. For any neuron $\boldsymbol{W^{(\ell,i)}}$ in $\cN^\star$, let
\[
\gamma_{\boldsymbol{W^{(\ell,i)}}}(S) = \min_{\boldsymbol{x^{(\ell)}}: x \in S} | \langle \boldsymbol{W^{(\ell,i)}}, \boldsymbol{x^{(\ell)}} \rangle|.
\]
This quantity is called the \emph{margin} of $\boldsymbol{W^{(\ell,i)}}$ with respect to $S$. We assume w.l.o.g that for all $\ell,i$, $\mat{W^{(\ell,i)}}$ is a \emph{maximum margin classifier} for $S$. That is, there is no other hyperplane $\mat{W}$ such that $\gamma_{\mat{W}}(S) > \gamma_{\boldsymbol{W^{(\ell,i)}}}(S)$, and both have the same sign on every input from previous layer, for all $x\in S$. In this section, we will only care about the margin of the neurons in the first hidden layer. Namely, denote
\[
\gamma_1(\cN^\star, S) = \min_{i \in [d_1]} \gamma_{\boldsymbol{W^{(0,i)}}}(S). 
\]
When the identity of $\cN^\star$ or $S$ (or both) is clear, we may omit them from the notation.

We may now describe the meta-learner in more detail.
As mentioned, the main idea of the learner is to execute $\WM(\cE)$, where $\cE$ is chosen to be sufficiently ``good". What makes a class of experts $\cE$ ``good"?
In a nutshell:
\begin{enumerate}
    \item The set $\cE$ should not be too large.
    \item At least one expert in $\cE$ will not make too many mistakes.
\end{enumerate}
If we have both guarantees with good enough numeric values, Proposition~\ref{prop:wm-bound} implies a good mistake bound.

How can we satisfy both guarantees?
Each expert in the class $\cE$ is an algorithm of type $\Expert_{G,L_G}$ described in Figure~\ref{fig:expert}, which, as suggested by its notation, is parametrized by a sequence of algorithms $G$, and a function $L_G$. The algorithms in $G$ are instances of the algorithm $\Neuron_{\mat{p}}$ described in Figure~\ref{fig:neuron}, who simulate a neuron, where each instance of it is parametrized by a different vector $\mat{p} \in \{\pm 1\}^T$, where $T$ is the mistake bound of the meta-learner when executed with $\cE$ (or, simply the number of rounds in the game, as $\WM$ is conservative). The binary vector $\mat{p}$ functions as a ``manual" for $\Neuron_{\mat{p}}$, telling which hyperplane it should converge to. This idea is inspired by the technique of \cite{bendavid2009agnostic}.

The function $L_G$ is a labeling function from the set of regions in $B(\mathbb{R}^d)$ induced by intersections of halfspaces defined by the neurons in $G$, to $\cY$. The idea is therefore to use $G$ to partition $B(\mathbb{R}^d)$ to different regions, and then identify the correct label for every region. In order to satisfy the first guarantee above, we will have to show that the ``correct" labeling function $L_G$ can be defined for a relatively small $G$.

\begin{figure}
    \centering
    \begin{tcolorbox}
    \begin{center}
        \textsc{$\Neuron_{\mat{p}}$}
    \end{center}
    \textbf{Input: $\mat{p} \in \{0,1\}^T$} .
    \\
    \textbf{Initialize:} A $d$-ary all-$0$ vector $\mat{w}$.
    \\
    \textbf{for $t=1,\ldots, T$:}
    \begin{enumerate}
        \item Compute $\hat{y}_t = \sign(\langle \mat{w}, x_t \rangle)$.
        \item If $p_t = 1$:
        \begin{enumerate}
            \item If $\hat{y_t} < 0$, update $\mat{w} := \mat{w} + x_t$.
            \item If $\hat{y_t} \geq 0$, update $\mat{w} := \mat{w} - x_t$.
        \end{enumerate}
        \item Return $\hat{y}_t$.
    \end{enumerate}
    \end{tcolorbox}
    \caption{A perceptron with updates given by the ``manual" vector $\mat{p}$.} 
    \label{fig:neuron}
\end{figure}

\begin{figure}
    \centering
    \begin{tcolorbox}
    \begin{center}
        \textsc{$\Expert_{G, L_G}$}
    \end{center}
    \textbf{Input:} A sequence $G = (\mat{p_1}, \ldots, \mat{p_g})$ of $g$ many vectors in $\{0,1\}^T$; A labeling function $L_G\colon \{\pm 1\}^g \to \cY$. 
    \\
    \textbf{for $t=1,\ldots, T$:}
    \begin{enumerate}
        \item Construct $\mat{r} \in \{\pm 1\}^g$ such that for all $i$:
        \[
        r_i = \sign(\langle \mat{q_i}, x_t\rangle),
        \]
        where $\mat{q_i}$ is the hyperplane maintained by $\Neuron_{\mat{p_i}}$.
        \item Send $L_G(\mat{r})$ to the meta-algorithm as the expert's prediction.
        \item Retrieve $y_t$ from the meta-algorithm.
        \item If $p_t = 0$ for every neuron $\mat{p} \in G$, update  $L_G(\mat{r}) = y_t$.
    \end{enumerate}
    \end{tcolorbox}
    \caption{An expert parametrized by a sequence of neurons.} 
    \label{fig:expert}
\end{figure}

\subsection{Meta mistake bound}

We would now analyze a meta mistake bound to be used in our instances of the meta algorithm.
We first define some additional notation, formalizing the idea of partitioning $B(\mathbb{R}^d)$ to regions based on $G$. For a sequence $G = (\mat{w}_1, \ldots, \mat{w}_g)$ of hyperplanes, we can partition the unit ball to a set of distinct \emph{regions} (where some of them may be empty) by region-specifying vectors of the form $\{\pm 1\}^g$, where the $i$'th bit is the sign of the $i$'th hyperplane. For a region-specifying vector $\mat{r^{(G)}}$ for $G$, we denote the \emph{region} of $\mat{r^{(G)}}$ by $R(\mat{r^{(G)}})$, which is the set of all $x \in B(\mathbb{R}^d)$ agreeing with the signs defined by $\mat{r^{(G)}}$. If the identity of the sequence of hyperplanes $G$ is clear, we omit the $(G)$ superscript. Note that for any two distinct region-specifying vectors, their appropriate regions are disjoint as they lie in different sides of at least one hyperplane.
We use the notation $\mat{r}(x)$ for the region-specifying vector of $x$. That is, for all $i \in [g]$, we have $\sign(\langle \mat{w}_i, x \rangle) = r_i(x)$. We may also abbreviate $R(x) := R(\mat{r}(x))$. Another useful notation is $S(\mat{r}) := R(\mat{r}) \cap S$.

Fix the input sequence $S$ and let $\gamma_1 := \gamma_1(\cN^\star, S)$.
Let $G^\star = (\mat{w_1}, \ldots, \mat{w_g})$ be a sequence of neurons taken from the first hidden layer of the target net, such that there exists a function $f\colon \{\pm 1\}^{g} \to \cY$, satisfying that for every $x \in S$ it holds that $f(\mat{r}(x)) = \Phi^\star(x)$. Note that taking the entire first hidden layer must satisfy this condition, and the challenge is to find smaller sequences satisfying it.
For an expert $E:=\Expert_{G,L_G}$, denote the number of mistakes it makes on the input sequence $S$ by $M(E) = M_1(E) + M_2(E)$, where $M_1(E)$ is the number of mistakes in which the $x \in S$ misclassified by $E$ satisfies $\mat{r^{(G)}}(x) \neq \mat{r^{(G^\star)}}(x)$. That is, this is the type of mistakes that occur because the region of $x$ is not correctly identified by $E$. We call those mistakes of \emph{the first type}. The mistakes of \emph{the second type} counted in $M_2(E)$ are all other mistakes. Namely, for an $x$ misclassified because of a mistake of the second type, it holds that $\mat{r^{(G)}}(x) = \mat{r^{(G^\star)}}(x)$ but $L_G(x) \neq f(x)$. That is, the mistake is caused by a local incorrect choice of $L_{G}$. We now define the type of expert classes that we use, accompanied with two propositions showing why they are ``good".

\begin{definition} \label{def:representing}
    For a number of neurons $g$ and number of rounds $T$, an expert class $\cE$ of experts of type $\Expert_{G,L_G}$ is $(g,T)$-representing if for every collection $P$ of size $g$ of vectors $\mat{p} \in \{0, 1\}^T$ with at most $1/\gamma_1^2$ many $1$'s, there exists an expert $\Expert_{G,L_G} \in \cE$ such that the vectors of $G$ are precisely those of $P$.
\end{definition}

\begin{proposition} \label{prop:meta-class-size}
    For all $g, T$ larger than a universal constant, there exists a $(g,T)$-representing class $\cE$ such that
    \[
    |\cE| \leq \mleft(T^{1/\gamma_1^2} + g\mright)^g.
    \]
\end{proposition}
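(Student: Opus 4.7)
The plan is to construct $\cE$ explicitly and then bound its size by a direct counting argument. The key initial observation is that the definition of $(g,T)$-representing constrains only the sequence of vectors $G$ that parameterizes each expert, and puts no condition whatsoever on the labeling function $L_G$. I would therefore let $\cE$ contain exactly one expert $\Expert_{G, L_G}$ per admissible ordered sequence $G = (\mat{p_1}, \ldots, \mat{p_g})$, with $L_G$ fixed to an arbitrary canonical initialization (for instance, the constantly-$0$ function). The representing property then holds by construction, and the entire task collapses to counting admissible $G$.

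Next I would count the number $N$ of vectors $\mat{p}\in \{0,1\}^T$ with at most $k := 1/\gamma_1^2$ ones:
\[
N \;=\; \sum_{i=0}^{k}\binom{T}{i}.
\]
A standard binomial-tail estimate (e.g.\ $\binom{T}{\leq k} \leq (eT/k)^{k}$) gives $N \leq T^{k}$ once $T$ exceeds a universal constant, with any lower-order slack absorbed comfortably into the additive $+g$ term appearing in the statement. Since $G$ is an ordered tuple of $g$ such vectors (repetitions allowed), the number of admissible $G$ is at most $N^{g}$, yielding
\[
|\cE| \;\leq\; N^{g} \;\leq\; \bigl(T^{1/\gamma_1^2}\bigr)^{g} \;\leq\; \bigl(T^{1/\gamma_1^2} + g\bigr)^{g},
\]
which is exactly the claimed bound.

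The only delicate point is the binomial-tail estimate in the middle step: one needs a form of the inequality $\sum_{i \leq k}\binom{T}{i} \leq T^{k}$ that is uniform in $k = 1/\gamma_1^2$ once $T, g$ are above a universal threshold, and the $+g$ slack in the statement is there precisely to absorb any subleading terms that arise (for example the $+T^{k-1}$-type corrections that appear in the $k=1$ regime). There is no substantive conceptual obstacle beyond this bookkeeping — the real content of the proof is simply the observation that the definition of representing leaves $L_G$ completely unconstrained, so that $\cE$ need only range over sparse sequences of $\mat{p}$'s rather than over sparse sequences crossed with all possible labelings $\{\pm 1\}^{g} \to \cY$.
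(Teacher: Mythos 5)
Your proposal is correct and follows essentially the same route as the paper: observe that the representing condition constrains only the sequence $G$ and not $L_G$, so fix $L_G$ arbitrarily, and then bound the number of $g$-collections of sparse vectors in $\{0,1\}^T$ (the paper counts unordered multisets via $\binom{\binom{T}{\leq 1/\gamma_1^2}+g}{g}$, you count ordered tuples via $N^g$ — both land on the stated bound). One small caution: the displayed middle step $N^g \leq (T^{1/\gamma_1^2})^g$ is not literally valid when $1/\gamma_1^2$ is small (e.g.\ $k=1$ gives $N=T+1>T$), so the chain should go directly to $N \leq T^{1/\gamma_1^2}+g$ and hence $N^g \leq (T^{1/\gamma_1^2}+g)^g$ — you flag exactly this in your closing paragraph, so it is a presentational slip rather than a gap.
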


\begin{proof}
    We choose $\cE$ who satisfies the requirements of Definition~\ref{def:representing} of minimal size: Let $\cP \subset \{0,1\}^T$ be the set of all $\{0,1\}$-valued vectors with at most $1/\gamma_1^2$ many $1$'s, and let $\cG = \{G \subset \cP: |G| = g\}$, where here $G \subset \cP$ denotes a collection taken from $\cP$ (possibly with repetitions), arbitrarily ordered as a sequence. Let $\cE = \{\Expert_{G, L_G} : G \in \cG\}$ where $L_G$ is some labeling function. Then:
    \[
    |\cE| \leq \binom{\binom{T}{\leq 1/\gamma_1^2} + g}{g} \leq \mleft(T^{1/\gamma_1^2} + g\mright)^g,
    \]
    as required.
\end{proof}

\begin{proposition} \label{prop:meta-best-expert}
    Let $g$ be the size of $G^\star$, let $T$ be the number of rounds, and let $\cE$ be a $(g,T)$-representing class. Then there exists $\Expert_{G,L_G} \in \cE$ such that:
    \[
    M_1(\Expert_{G,L_g}) \leq g/\gamma_1^2.
    \]
    Furthermore, in every round $t$ in which $\Expert_{G,L_G}$ makes a mistake, the mistake is of the first type if and only if there exists $\mat{p} \in G$ such that $p_t = 1$.
\end{proposition}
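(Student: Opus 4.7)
The plan is to construct the desired expert by directly encoding, into the manuals $\mat{p}$, the mistake schedules of standard perceptrons that learn each of the neurons of $G^\star$. Concretely, for each $i \in [g]$, consider the hypothetical execution of the classical Perceptron algorithm on the sequence $S$ with labels $y_t^{(i)} := \sign(\langle \mat{w_i}, x_t\rangle)$. Define $\mat{p_i} \in \{0,1\}^T$ by $p_{i,t} = 1$ if and only if this hypothetical perceptron makes a mistake on round $t$. Since every $\mat{w_i}$ lies in the first hidden layer of $\cN^\star$, every $x \in S$ satisfies $|\langle \mat{w_i}, x\rangle| \geq \gamma_1$ and $\lVert x\rVert \leq 1$, so by Novikoff's theorem each $\mat{p_i}$ has at most $1/\gamma_1^2$ ones. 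Setting $G = (\mat{p_1}, \ldots, \mat{p_g})$ and invoking the $(g,T)$-representing property of $\cE$ (Definition~\ref{def:representing}) yields an expert $\Expert_{G,L_G} \in \cE$ with exactly this $G$.

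The key technical step is to verify, by induction on $t$, that the weight vector $\mat{q_i}$ maintained by $\Neuron_{\mat{p_i}}$ inside this expert coincides at every round with the weight vector maintained by the standard perceptron learning $\mat{w_i}$. The induction hypothesis implies that both algorithms produce the same prediction $\hat{y}_t = \sign(\langle \mat{q_i}, x_t \rangle)$ on round $t$. If $p_{i,t} = 0$, neither algorithm updates. If $p_{i,t} = 1$, then by construction the standard perceptron errs on round $t$, so $\hat{y}_t \neq y_t^{(i)}$; a case split on the sign of $\hat{y}_t$ shows that the update rule of $\Neuron_{\mat{p_i}}$ (add $x_t$ if $\hat{y}_t < 0$, subtract otherwise) reproduces exactly the standard perceptron update $\mat{q_i} := \mat{q_i} + y_t^{(i)} x_t$. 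Consequently, the rounds on which $\Neuron_{\mat{p_i}}$ misclassifies relative to $\mat{w_i}$ (i.e., $\sign(\langle \mat{q_i}, x_t\rangle) \neq \sign(\langle \mat{w_i}, x_t\rangle)$) are precisely the rounds $t$ with $p_{i,t} = 1$.

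From this equivalence, the characterization of the second part of the statement is immediate: $\mat{r^{(G)}}(x_t) \neq \mat{r^{(G^\star)}}(x_t)$ if and only if some coordinate $i$ satisfies $\sign(\langle \mat{q_i}, x_t\rangle) \neq \sign(\langle \mat{w_i}, x_t\rangle)$, which by the previous paragraph happens iff $p_{i,t} = 1$ for some $\mat{p_i} \in G$. Hence, for any round $t$ in which the expert errs, the mistake is of the first type iff some $\mat{p} \in G$ has $p_t = 1$. For the quantitative bound, a first-type mistake on round $t$ forces at least one $p_{i,t}$ to equal $1$, so by a union bound
\[
M_1(\Expert_{G,L_G}) \leq \bigl|\{t : \exists i,\ p_{i,t} = 1\}\bigr| \leq \sum_{i=1}^g \bigl|\{t : p_{i,t}=1\}\bigr| \leq g/\gamma_1^2,
\]
completing the proof. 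The main conceptual hurdle is noticing that even though $\Neuron_{\mat{p}}$ never receives the true labels and updates only according to its own prediction, the manual vector $\mat{p}$ encodes enough information to make its trajectory identical to that of a fully-supervised perceptron; once this is established, Novikoff's bound and a union bound give the result.
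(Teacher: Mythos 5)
Your proof is correct and follows essentially the same route as the paper's: encode each first-layer neuron's Perceptron mistake schedule as the manual vector, invoke Novikoff to bound the number of ones, and observe that $\Neuron_{\mat{p}}$ then tracks the supervised Perceptron exactly, so the disagreement rounds are precisely those with $p_t=1$. The only cosmetic omission is that the $(g,T)$-representing property guarantees an expert whose $G$ contains \emph{the same collection} of vectors but possibly in a different order, so the paper introduces a permutation matching the indices of $G$ to those of $G^\star$; your argument works verbatim once that relabeling is acknowledged.
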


\begin{proof}
Consider an execution of the known perceptron algorithm of \cite{rosenblatt1958perceptron} on the labeled input sequence $S(\mat{w_j}) = (x_1, \sign (\langle \mat{w_j}, x_1 \rangle)), \ldots, (x_T, \sign (\langle \mat{w_j}, x_T\rangle))$, for some $\mat{w_j}\in G^\star$. By the perceptron's mistake bound guarantee \cite{novikoff1962perceptrons}, it will make at most $1/\gamma_1^2$ many mistakes. Therefore, there exists a vector $\mat{p} \in \{0,1\}^T$ with at most $1/\gamma_1^2$ many $1$'s, such that $p_t = 1$ if and only if the perceptron algorithm errs on round $t$ when executed on the input sequence $S(\mat{w_j})$. Therefore, for that $\mat{p}$, algorithm $\Neuron_{\mat{p}}$ classifies $S(\mat{w_j})$ correctly everywhere except for rounds $t$ with $p_t=1$.
Thus, there exists an expert $\Expert_{G,L_G}$ such that $G = (\mat{p_1}, \ldots, \mat{p_g})$ is a sequence of vectors in $\{0,1\}^T$ satisfying the following: There exists a permutation $p\colon [g] \to [g]$, such that for any $j \in [g]$, $\sign(\langle \mat{w_{p(j)}}, x_t \rangle) = \sign(\langle \mat{q_j}, x_t \rangle)$ for all $t \in [T]$ except for at most $1/\gamma_1^2$, where $\mat{q_j}$ is the hyperplane maintained by $\Neuron_{\mat{p_j}}$. The role of the permutation $p$ is simply to match between the indices of $G^\star$ and the indices of $G$. By summing over all neurons, the total number of rounds where $\mat{r^{(G)}}(x_t) \neq \mat{r^{(p(G^\star))}}(x_t)$ is at most $g/\gamma_1^2$, where the superscript $p(G^\star)$ means that the region specifying vector's entries are according to the order induced by the permutation $p$. The same discussion implies also the ``furthremore" part of the lemma.
\end{proof}

\section{A quantitative characterization of online learning neural networks} \label{sec:characterization}

In this section, we describe an instance of our meta-learner that has mistake bound close to optimal, when no special assumptions on the target network are assumed. Unfortunately, while this mistake bound is close to optimal, it might be very large. In the following sections, we will place further natural restrictions on the input sequence and/or the target function, and obtain better mistake bounds. We first introduce a geometric definition that will play an important role in the proved bounds.

\begin{definition}
    A \emph{$(d,\epsilon)$-totally-separable packing}, or $(d,\epsilon)$-TS-packing, for short, is a set of distinct points $x_1, \ldots, x_T \in B(\mathbb{R}^d)$ satisfying the following. For all distinct $i,j \in [T]$ there exists a hyperplane $\boldsymbol{w} \in \mathbb{R}^d$ such that:
    \begin{enumerate}
        \item $\|\boldsymbol{w}\| = 1$.
        \item $\langle \boldsymbol{w}, x_i \rangle = - \langle \boldsymbol{w}, x_j \rangle$.
        \item $\min_{i \in [T]}\{| \langle \boldsymbol{w}, x_i \rangle|\} \geq \epsilon$.
    \end{enumerate}
\end{definition}

For simplicity, we did not mention that in the formal definition, but any hyperplane is allowed to have a non-zero bias.
The \emph{$(d,\eps)$-totally-separable packing number}, or $(d,\epsilon)$-TS-packing number, for short, denoted as $\TS(d,\epsilon)$ is the maximal number $T$ such that there exist distinct $x_1, \ldots, x_T \in B(\mathbb{R}^d)$ which form a $(d,\epsilon)$-TS-packing.

In simple words $\TS(d,\epsilon)$ is the maximal number of disjoint $d$-dimensional $\epsilon$-balls that can be packed in $B_{1+\epsilon}(\mathbb{R}^d)$ such that the interiors of every two balls are separated by a hyperplane that does not intersect with any of the interiors of other balls. We may now prove Theorem~\ref{thm:intro-characterization}.

\subsection{Upper bound of Theorem~\ref{thm:intro-characterization}}

\begin{theorem} \label{thm:characterization-upper-bound}
    There exists a learner $\Lrn$ such that for any target function $\Phi^\star$ computed by a target net $\cN^\star$, and any realizable input sequence $S$:
    \[
    \M(\Lrn, S) = \tilde{O}\mleft(\TS(d,\gamma_1)/\gamma_1^2 \mright).
    \]
\end{theorem}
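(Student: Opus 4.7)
The plan is to instantiate the meta-learner of Section~\ref{sec:meta} with a $(g,T)$-representing expert class for $g$ on the order of $\TS(d,\gamma_1)$, and then invoke the weighted-majority bound (Proposition~\ref{prop:wm-bound}). The decisive combinatorial input is a bound on the complexity of the first-layer partition of $\cN^\star$ restricted to $S$.

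First I would show that the number $N$ of non-empty regions in the partition of $S$ induced by the first hidden layer $\cL$ of $\cN^\star$ is at most $\TS(d,\gamma_1)$. Picking one representative $z_{\mat{r}}$ from each non-empty region, any two such representatives $z_{\mat{r}},z_{\mat{r}'}$ are separated by some $\mat{w}_i \in \cL$ with $|\langle \mat{w}_i, x\rangle + b_i| \geq \gamma_1$ for every $x\in S$. Shifting the bias of $\mat{w}_i$ (allowed by the TS-packing definition) so that the resulting hyperplane becomes equidistant from $z_{\mat{r}}$ and $z_{\mat{r}'}$ produces a witness for the TS-packing property for this pair. Doing so for every pair shows that the representatives form a $(d,\gamma_1)$-TS-packing, so $N\leq\TS(d,\gamma_1)$.

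Next, by a greedy construction I would extract from $\cL$ a subsequence $G^\star$ of at most $N$ neurons whose induced partition refines the restriction of the $\cL$-partition to $S$: starting with $G^\star=\emptyset$, as long as two occupied regions share the same $G^\star$-signature, add any $\mat{w}_i \in \cL$ separating them. The process terminates with $|G^\star|\leq N-1$ and automatically yields a function $f$ with $f(\mat{r}^{(G^\star)}(x))=\Phi^\star(x)$ for every $x\in S$, as demanded by the meta-learner. I then run the meta-learner with $g = \TS(d,\gamma_1)$ and the $(g,T)$-representing class from Proposition~\ref{prop:meta-class-size}, so that $|\cE|\leq (T^{1/\gamma_1^2}+g)^g$.

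To conclude, Proposition~\ref{prop:meta-best-expert} provides an expert $E^\star$ corresponding to $G^\star$ whose first-type mistakes satisfy $M_1(E^\star)\leq g/\gamma_1^2$, and whose second-type mistakes occur only in rounds where every $\mat{p}\in G$ has $p_t=0$; in those rounds the $G$-partition coincides with the $G^\star$-partition, so the update rule overwrites $L_G(\mat{r})$ with the correct value $y_t$. Each region therefore contributes at most one second-type mistake before stabilizing, yielding $M_2(E^\star)\leq N\leq\TS(d,\gamma_1)$. Plugging $L=O(\TS(d,\gamma_1)/\gamma_1^2)$ and $\log|\cE|=\tilde{O}(\TS(d,\gamma_1)/\gamma_1^2)$ into Proposition~\ref{prop:wm-bound} and solving the resulting implicit inequality in $T$ gives $\M(\Lrn, S)=\tilde{O}(\TS(d,\gamma_1)/\gamma_1^2)$. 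The main obstacle I anticipate is the bias-shift in the first step: the TS-packing definition demands the separating hyperplane be equidistant from the pair while retaining margin at least $\gamma_1$ from all other points of $S$, and a naive translation of the original neuron can shrink that margin. The fix is either a more careful midpoint or max-margin construction, or absorbing a constant-factor loss in $\gamma_1$ into the $\tilde{O}(\cdot)$ notation.
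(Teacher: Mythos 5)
Your proof follows essentially the same approach as the paper's: bound the number of occupied first-layer regions of $S$ by $\TS(d,\gamma_1)$ via a TS-packing construction (cf.\ Lemma~\ref{lem:characterization-Z-upperbound}), extract a small sub-collection $G^\star$ of first-layer neurons refining the partition, and feed a $(g,T)$-representing expert class into the meta-learner so that Proposition~\ref{prop:meta-best-expert} and Proposition~\ref{prop:wm-bound} finish the job exactly as in Lemmas~\ref{lem:characterization-best-expert-bound} and~\ref{lem:characterization-mw-mb}. Your greedy extraction of $G^\star$ (repeatedly adding a separating neuron while two occupied regions collide) is a clean, slightly more direct substitute for the paper's minimal-$g$-plus-forest argument of Lemmas~\ref{lem:characterization-nonempty-neighbors} and~\ref{lem:characterization-Z-lowerbound}; both yield $|G^\star|\le N-1<\TS(d,\gamma_1)$, and the paper's more elaborate route mainly earns its keep later when Lemma~\ref{lem:characterization-nonempty-neighbors} is reused in the multi-index section. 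Regarding the obstacle you flag: it is real, but note that the paper's own proof of Lemma~\ref{lem:characterization-Z-upperbound} verifies only conditions 1 and 3 of the TS-packing definition and silently omits condition 2 (equidistance), which seems to be an imprecision in the formal definition rather than a gap you need to fill --- the paper's informal characterization of $\TS(d,\epsilon)$ as a count of disjoint $\epsilon$-balls with separating hyperplanes missing every ball does not impose equidistance at all, so conditions 1 and 3 alone are what actually get used.
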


%\subsubsection{Description of the expert class}
%The class of experts is constructed in two ``layers". The first layer consists of the most basic primitives, which are neurons. Each neuron is parametrized by a binary vector $\mat{p} \in \{0,1\}^T$ with at most $1/\gamma_1^2$ many $1$'s. In every round $t$, the neuron $\mat{p}$ returns as output the prediction $\hat{y}(\mat{p})_t \in \{\pm 1\}$ of the standard perceptron algorithm, and then updates the perceptron as if the true label is $\hat{y}(\mat{p})_t$ if and only if $p_t = 0$. The second layer consists the actual experts, each parametrized by a collection $G$ of at most $\TS(d,\gamma_1)$ neurons chosen from the set of neurons, and a labeling function  $L_G$. The function $L_G$ will be updated in the course of execution.
%At first, $L_G \equiv 1$. Every $\mat{r} \in \{\pm 1\}^{\TS(d,\gamma)}$ corresponds to a region in $B(\mathbb{R}^d)$ (which might be empty), where $r_i$ specifies the sign of the $i$'th neuron in $G$ in that region. The region specified by $\mat{r}$ is denoted by $\cR(r)$.
%In every round $t$, each expert provides the prediction that $L_G$ gives for $x_t$.

The following lemma is central in the proof of Theorem~\ref{thm:characterization-upper-bound}.

\begin{lemma} \label{lem:characterization-planes-num}
    There exists a subsequence $G = (\mat{w_1}, \ldots, \mat{w_g})$ of the neurons in the first hidden layer of $\cN^\star$, such that:
    \begin{enumerate}
        \item $g \leq |\cZ| \leq \TS(d,\gamma_1)$, where $\cZ = \{\mat{r^{(G)}} \in \{\pm 1\}^g: S(\mat{r}) \neq \emptyset\}$.
        \item There exists a function $f\colon \{\pm 1\}^{g} \to \cY$, satisfying that for every $x \in S$ it holds that $f(\mat{r}(x)) = \Phi^\star(x)$.
    \end{enumerate}
\end{lemma}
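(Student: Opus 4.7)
The plan is to prove both items by first bounding the number of ``useful'' regions carved by the entire first hidden layer, and then greedily extracting a short subsequence $G$ that reproduces this same partition on $S$.

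First I would let $G^{\text{full}} = (\mat{w_1}, \ldots, \mat{w_{d_1}})$ denote the full first hidden layer of $\cN^\star$, and let $\cZ^{\text{full}}$ collect all region-specifying vectors $\mat{r^{(G^{\text{full}})}}$ with $S(\mat{r^{(G^{\text{full}})}}) \neq \emptyset$. Pick a representative $x_{\mat{r}} \in S(\mat{r})$ for each $\mat{r} \in \cZ^{\text{full}}$. The key claim is that these representatives form a $(d,\gamma_1)$-TS-packing: for any two distinct representatives $x_{\mat{r}}, x_{\mat{r}'}$ there is some neuron $(\mat{w}_k, b_k) \in G^{\text{full}}$ assigning them opposite signs, and by the definition of $\gamma_1$ this unit-norm hyperplane has Euclidean distance at least $\gamma_1$ from every point of $S$, hence from every representative. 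Re-centering the bias between the two representatives (or appealing to the equivalent formulation of TS-packings given in the introduction) yields the remaining conditions, so $|\cZ^{\text{full}}| \leq \TS(d, \gamma_1)$.

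Next I would construct $G \subseteq G^{\text{full}}$ by a greedy refinement. Initialize $G = ()$; while the partition of $S$ induced by $G$ is strictly coarser than the one induced by $G^{\text{full}}$, there must exist two points $x, x' \in S$ in the same current $G$-region but distinct $G^{\text{full}}$-regions. Pick any $\mat{w}_k \in G^{\text{full}}$ separating them and append it to $G$; this splits their shared region into two nonempty pieces, so $|\cZ^G|$ strictly increases. Since $|\cZ^G|$ starts at $1$ and is bounded above by $|\cZ^{\text{full}}|$, the process terminates in at most $|\cZ^{\text{full}}| - 1$ steps, and upon termination the $G$-partition of $S$ coincides with the $G^{\text{full}}$-partition of $S$, giving $|\cZ| = |\cZ^{\text{full}}| \leq \TS(d,\gamma_1)$ and $g \leq |\cZ| - 1 \leq |\cZ|$. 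This proves Item~1.

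Item~2 is then immediate from the two-stations decomposition of $\cN^\star$: the target function $\Phi^\star$ depends only on $\mat{r^{(G^{\text{full}})}}(x)$, and by the termination condition any two points of $S$ lying in a common $G$-region also lie in a common $G^{\text{full}}$-region. Defining $f(\mat{r}) := \Phi^\star(x)$ for any $x \in S(\mat{r})$ (and arbitrarily on $\{\pm 1\}^g \setminus \cZ$) is therefore well-defined and gives $f(\mat{r}(x)) = \Phi^\star(x)$ for all $x \in S$. The one delicate point is the TS-packing step: matching the formal definition's equidistance condition requires re-biasing the separating first-layer hyperplane, which could in principle reduce its distance to some third representative below $\gamma_1$. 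Resolving this is the main obstacle, and the cleanest route is to invoke the equivalent characterization of $\TS(d,\gamma_1)$ from the introduction (``separates $x_i$ from $x_j$ and has distance $\geq \epsilon$ from every $x_k$''), for which the unadjusted hyperplane suffices. The remainder of the argument is bookkeeping.
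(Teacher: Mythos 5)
Your proof is correct and the core upper bound $|\cZ| \leq \TS(d,\gamma_1)$ is proved essentially the same way as the paper (pick one representative per nonempty region, observe that the separating first-layer neuron already has margin $\gamma_1$ against every point of $S$). Where you diverge is the bound $g \leq |\cZ|$. The paper fixes $g$ to be the \emph{minimal} size for which Item~2 holds, proves in Lemma~\ref{lem:characterization-nonempty-neighbors} that minimality forces, for every coordinate $j$, a pair of $j$-neighboring nonempty regions with \emph{distinct} $f$-values, and then runs a graph argument: the pairs form a forest on the vertex set $\cZ$, so $|\cZ| \geq g+1$. You instead build $G$ greedily from the empty sequence, appending a separating neuron whenever the current $G$-partition of $S$ is coarser than the $G^{\text{full}}$-partition; since each step strictly increases the number of realized $G$-regions, which is bounded by $|\cZ^{\text{full}}|$, you also get $g \leq |\cZ| - 1$. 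Your route is more constructive and arguably cleaner as a standalone proof of this lemma. What the paper's route buys is the stronger structural fact in Lemma~\ref{lem:characterization-nonempty-neighbors}(2) — that a \emph{minimal} $G$ has, for each neuron, a witnessing pair of neighboring regions with different labels — which the paper explicitly reuses in the multi-index argument (Lemma~\ref{lem:multi-index-k-dim-neurons}). Your greedy $G$ does not automatically carry that property, so if you later needed the multi-index extension you would have to redo the minimality argument anyway. Your observation about the mismatch between the formal symmetric TS-packing definition and the introduction's ``separates and keeps distance $\geq \epsilon$ from every point'' characterization is well taken; the paper's own proof of Lemma~\ref{lem:characterization-Z-upperbound} verifies only the latter and never re-centers the bias, so you are on equal footing there.
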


Before we can prove lemma~\ref{lem:characterization-planes-num}, we prove an auxiliary lemma. We say that $\mat{r}, \mat{r'} \in \{\pm 1\}^g$ are \emph{$j$-neighbors} if $r_i = r'_i \iff i \neq j$. Let $g$ be the minimal number for which the conditions in Lemma~\ref{lem:characterization-planes-num} holds. Note that $g$ is well-defined, since in the worst case it is the size of the entire collection of neurons in the first hidden layer of $\cN^\star$.

\begin{lemma} \label{lem:characterization-nonempty-neighbors}
    For every $j \in [g]$ there exist $j$-neighbors $\mat{r}, \mat{r'} \in \{\pm 1\}^g$ so that:
    \begin{enumerate}
        \item Both $S(\mat{r})$ and $S(\mat{r'})$ are non-empty.
        \item $f(\mat{r}) \neq f(\mat{r'})$.
    \end{enumerate}
\end{lemma}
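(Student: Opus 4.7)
The plan is to contradict the minimality of $g$ by showing that, under the negation of the conclusion, the neuron $\mat{w_j}$ is \emph{redundant} and can be removed from $G$ without losing either of the two conditions of Lemma~\ref{lem:characterization-planes-num}. Concretely, suppose toward a contradiction that there exists $j \in [g]$ such that for every pair of $j$-neighbors $\mat{r}, \mat{r'} \in \{\pm 1\}^g$ with $S(\mat{r}) \neq \emptyset$ and $S(\mat{r'}) \neq \emptyset$ we have $f(\mat{r}) = f(\mat{r'})$. I will build a sequence $G'$ of size $g-1$ and a labeling $f' : \{\pm 1\}^{g-1} \to \cY$ witnessing both conditions, contradicting minimality.

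Set $G' = (\mat{w_1}, \ldots, \mat{w_{j-1}}, \mat{w_{j+1}}, \ldots, \mat{w_g})$. For each $\mat{s} \in \{\pm 1\}^{g-1}$, let $\mat{s^+}, \mat{s^-} \in \{\pm 1\}^g$ denote the two extensions of $\mat{s}$ with $+1$, respectively $-1$, inserted at position $j$. Define $f'(\mat{s})$ as follows: if both $S(\mat{s^+})$ and $S(\mat{s^-})$ are empty, set $f'(\mat{s})$ arbitrarily; otherwise, the standing assumption says that $f$ takes a single common value on those of $\mat{s^+}, \mat{s^-}$ whose region is non-empty, and we set $f'(\mat{s})$ to equal this value. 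This is the key step where the contradictory hypothesis is used: it guarantees that $f'$ is well-defined.

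To verify condition~2 for $(G', f')$, fix $x \in S$ and write $\mat{r} = \mat{r^{(G)}}(x)$, $\mat{s} = \mat{r^{(G')}}(x)$; note $\mat{s}$ is obtained from $\mat{r}$ by dropping coordinate $j$. Since $x \in S(\mat{r})$, the extension $\mat{s^{r_j}} = \mat{r}$ has non-empty region, so by construction $f'(\mat{s}) = f(\mat{r}) = \Phi^\star(x)$, as required. For condition~1, observe that $\cZ' \subseteq \{\pm 1\}^{g-1}$ is precisely the image of $\cZ$ under the projection forgetting coordinate $j$, so $|\cZ'| \leq |\cZ| \leq \TS(d,\gamma_1)$; and if we had $|\cZ'| < g-1$, then by pigeonhole two coordinates of $G'$ would agree on all of $\cZ'$, allowing a further reduction and thus contradicting the already-established $g \leq |\cZ|$ for the minimal $G$. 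Hence $(G', f')$ satisfies both conditions of Lemma~\ref{lem:characterization-planes-num}, contradicting minimality of $g$.

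The main subtlety in this plan is simply the well-definedness of $f'$ on a vector $\mat{s}$ whose \emph{both} extensions $\mat{s^+}, \mat{s^-}$ have non-empty regions; this is exactly what the contradictory hypothesis rules out, so no further work is needed once the construction is set up correctly. Everything else is bookkeeping: the verification of condition~2 is a direct unpacking of definitions, and condition~1 transfers from $G$ to $G'$ automatically because $|\cZ'| \leq |\cZ|$ and the pigeonhole argument underlying $g \leq |\cZ|$ applies equally to $G'$.
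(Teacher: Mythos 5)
Your core argument is the same as the paper's: assume for contradiction that some $j$ has no ``witnessing'' pair of $j$-neighbors, remove $\mat{w_j}$ from $G$, and define $f'$ on the $(g-1)$-bit region vectors by pulling back along the projection that drops coordinate $j$. The contradiction hypothesis is exactly what makes $f'$ well-defined on any vector both of whose extensions have non-empty region, and your verification of condition~2 for $(G', f')$ is correct and essentially identical to the paper's.

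The one piece that does not work is the detour through condition~1. The minimality of $g$ in this argument is minimality with respect to condition~2 alone (existence of a labeling function $f$ compatible with $\Phi^\star$); the inequality $g \leq |\cZ|$ in condition~1 is not part of what one minimizes over, but rather a \emph{consequence} proved afterward in Lemma~\ref{lem:characterization-Z-lowerbound}, whose proof invokes the present lemma. So you do not need to verify $g-1 \leq |\cZ'|$ at all, and your attempt to do so is circular: you appeal to ``the already-established $g \leq |\cZ|$,'' but that fact is downstream of the lemma you are proving. (The pigeonhole step is also not justified as stated: $|\cZ'| < g-1$ does not by itself force two coordinates of $G'$ to agree or to be negations of each other on all of $\cZ'$; the paper needs the full forest argument of Lemma~\ref{lem:characterization-Z-lowerbound} for the analogous claim, and that argument in turn rests on the present lemma.) Dropping that paragraph entirely leaves a correct proof that matches the paper's.
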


\begin{proof}
    Suppose that for some $j\in [g]$, for all $j$-neighbors $\mat{r}, \mat{r'} \in \{\pm 1\}^g$ one of the following conditions holds:
    \begin{enumerate}
        \item At least one of $S(\mat{r}), S(\mat{r'})$ is empty.
        \item $f(\mat{r}) = f(\mat{r'})$.
    \end{enumerate}
    Then, we show that we may remove $\mat{w}_j$ from $G$ and define a new function $f'\colon \{\pm 1\}^{g-1} \to \cY$ instead of $f$ (as defined in Lemma~\ref{lem:characterization-planes-num}) as follows. For $\mat{r} \in \{\pm 1\}^g$, let $\mat{r}\backslash\{j\} \in \{\pm 1\}^{g-1}$ be the vector which is identical to $\mat{r}$ without the $j$'th entry.
    Let $\mat{r} \in \{\pm 1\}^g$ such that $S(\mat{r}) \neq \emptyset$. Define $f'(\mat{r}\backslash\{j\}) = f(\mat{r})$. By assumption, either that $S(\mat{r'}) = \emptyset$ or that $f(\mat{r'}) = f(\mat{r})$, where $\mat{r'}$ is the $j$-neighbor of $\mat{r}$ and therefore using the value of $f(\mat{r})$ for the vector $\mat{r}\backslash\{j\}$ does not violate the requirements from $f$.
    We now only consider region-specifying vectors of length $g-1$, and for all $x \in S$ we have $f'(\mat{r^{G \backslash \{\mat{w}_j\}}}(x)) = \Phi^\star(x)$. This contradicts the minimality of $g$.
\end{proof}

We note that in this section we do not need the second item of Lemma~\ref{lem:characterization-nonempty-neighbors}, but it will be useful in the following section, when proving an improved bound for the multi-index model. Let $\cZ$ be as defined in Lemma~\ref{lem:characterization-nonempty-neighbors}, and denote $Z = |\cZ|$. In order to prove Lemma~\ref{lem:characterization-planes-num}, we will lower and upper bound $Z$, starting with the lower bound.

\begin{lemma} \label{lem:characterization-Z-lowerbound}
    We have $Z \geq g+1$. 
\end{lemma}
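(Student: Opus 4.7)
The plan is to reduce the claim to a dimension-counting argument by viewing $\cZ$ as a subset of $\R^g$ and exploiting the $j$-neighbor pairs supplied by Lemma~\ref{lem:characterization-nonempty-neighbors}. Concretely, I would fix an arbitrary basepoint $\mat{r}^0 \in \cZ$ (which exists, since at least one $\mat{r}$ with $S(\mat{r}) \neq \emptyset$ must exist as $S \neq \emptyset$) and consider the linear span
\[
V := \operatorname{span}\mleft\{ \mat{r} - \mat{r}^0 : \mat{r} \in \cZ \mright\} \subseteq \R^g.
\]
The goal is to show $V = \R^g$, from which $|\cZ| - 1 \geq \dim V = g$ follows immediately, giving $Z = |\cZ| \geq g+1$.

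To show $V = \R^g$, I would verify that each standard basis vector $\mat{e}_j \in \R^g$ lies in $V$. For each coordinate $j \in [g]$, Lemma~\ref{lem:characterization-nonempty-neighbors} produces $j$-neighbors $\mat{r}, \mat{r}' \in \{\pm 1\}^g$ with both $S(\mat{r}), S(\mat{r}') \neq \emptyset$, i.e., both $\mat{r}, \mat{r}' \in \cZ$. Since they agree on every coordinate except the $j$-th, their difference satisfies $\mat{r} - \mat{r}' = \pm 2 \mat{e}_j$. Writing $\mat{r} - \mat{r}' = (\mat{r} - \mat{r}^0) - (\mat{r}' - \mat{r}^0)$, both summands lie in $V$, so $\pm 2 \mat{e}_j \in V$, and hence $\mat{e}_j \in V$. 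Ranging over all $j \in [g]$ shows that all standard basis vectors lie in $V$, so $V = \R^g$ as desired.

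The only subtlety is the base case $g = 0$, where the claim $Z \geq 1$ is immediate since $S$ is non-empty so at least the empty region-specifying vector appears in $\cZ$. Otherwise the argument above is essentially calculation-free and uses Lemma~\ref{lem:characterization-nonempty-neighbors} as a black box. I do not anticipate a real obstacle: the one thing to double-check is that Lemma~\ref{lem:characterization-nonempty-neighbors} indeed guarantees both endpoints of the pair lie in $\cZ$ (it does, as ``$S(\mat{r}), S(\mat{r}')$ non-empty'' is precisely the definition of $\mat{r}, \mat{r}' \in \cZ$), and that item~2 of that lemma is not needed here, only item~1.
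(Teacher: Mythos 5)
Your proof is correct, and it takes a genuinely different route from the paper's. The paper proves the bound graph-theoretically: it builds a graph $Q$ on the region-specifying vectors with edges $P_1,\ldots,P_g$ (the $j$-neighbor pairs from Lemma~\ref{lem:characterization-nonempty-neighbors}), verifies that $Q$ is simple, then shows it is acyclic by arguing via Hamming distances that any two vertices joined by a simple path of length $k$ are at Hamming distance exactly $k$ (so a cycle of length $\geq 3$ would force a contradiction), and finally invokes the vertex--edge inequality for forests. Your argument replaces all of this with a dimension count on the affine span: the $j$-neighbor pair gives $\pm 2\mat{e}_j$ as a difference of two elements of $\cZ$, so after recentering at any $\mat{r}^0 \in \cZ$ the span of $\cZ - \mat{r}^0$ contains every $\mat{e}_j$ and hence equals $\R^g$, which forces $|\cZ| \geq g+1$. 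The two proofs are morally related (the acyclicity of $Q$ is a disguised independence statement), but yours is shorter, avoids the forest casework, and is essentially self-contained linear algebra; the paper's version gives a bit more structural information (e.g.\ equality exactly when $Q$ is a tree), though that is not used. Both correctly rely only on item~1 of Lemma~\ref{lem:characterization-nonempty-neighbors}, and your remarks about the base case and about which parts of the lemma are needed are accurate.
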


\begin{proof}
    By Lemma~\ref{lem:characterization-nonempty-neighbors}, for every $j \in [g]$ there exists an unordered pair of region-specifying vectors $P_j = \{\mat{r_{(j,+)}}, \mat{r_{(j,-)}}\}$ where the $j$'th entry of $\mat{r_{(j,+)}}$ is $+1$, the $j$'th entry of $\mat{r_{(j,-)}}$ is $-1$, both agree on all other entries and both are in $\cZ$. Therefore, it suffices to show that $|U| \geq g+1$ where $U = \bigcup_{j \in [g]} P_j = \{\mat{r_{(1,+)}}, \mat{r_{(1,-)}}, \ldots, \mat{r_{(g,+)}}, \mat{r_{(g,-)}}\}$. Denote the distinct objects of $U$ by $\{u_1, \ldots, u_m\}$.

    We define an undirected graph $Q$ with the set of vertices being  $U = \{u_1, \ldots, u_m\}$ and the set of edges $P_1, \ldots, P_g$. First, note that $Q$ is a simple graph. Indeed, by definition of $Q$, for any $j$, we have $\mat{r_{(j,+)}} \neq \mat{r_{(j,-)}}$ and therefore $Q$ contains no self-loops. Furthermore,  for all $i,j \in [g]$, $P_i \neq P_j$ and therefore $Q$ contains no parallel edges. We now argue that $Q$ contains no cycles and therefore is a forest. To show that fact, we argue that for any two different vertices $u,v$ such that there exists a simple path of $k$ edges connecting them, we have $\Ham(u,v) = k$, where $\Ham(u,v)$ is the hamming distance between $u,v$. Let $P_{i_1}, \ldots, P_{i_k}$ be the edges of the path. Since $Q$ is simple and the path is simple, all $i_1, \ldots, i_k$ are distinct. Therefore, precisely the bits of indices $i_1, \ldots, i_k$ are flipped between $u$ and $v$.
    We now use that claim to prove that $Q$ contains no (simple) cycles. Suppose that $u_1, u_2, \ldots, u_k, u_1$ is a simple cycle of length $k \geq 3$. Then, the path $u_2, \ldots, u_k, u_1$ is a simple path of length $k-1$, and thus $\Ham(u_2,u_1) = k-1$. On the other hand, that path $u_1,u_2$ is a simple path of length $1$, and thus $\Ham(u_2,u_1) = 1$. It holds that $1 \neq k-1$ for all $k \geq 3$, and thus we have reached a contradiction.

    To conclude, $Q$ is a forest with $g$ many edges. It is well-known that the number of vertices in a forest with $g$ many edges is at least $g+1$, proving the stated bound. As a side note, the bound holds as equality in the case where $Q$ is a tree. 
\end{proof}

\begin{lemma} \label{lem:characterization-Z-upperbound}
    We have $Z \leq \TS(d, \gamma_1)$.
\end{lemma}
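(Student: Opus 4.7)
The plan is to exhibit a $(d, \gamma_1)$-TS-packing of cardinality $Z$, which immediately gives $Z \leq \TS(d, \gamma_1)$. For each $\mat{r} \in \cZ$ I would pick an arbitrary representative $x_{\mat{r}} \in S(\mat{r})$; this is possible because $S(\mat{r}) \neq \emptyset$ by definition of $\cZ$. Since the regions $\{R(\mat{r})\}_{\mat{r} \in \cZ}$ are pairwise disjoint (being distinguished by the sign pattern induced by $G$), the collection $\{x_{\mat{r}}\}_{\mat{r} \in \cZ}$ consists of $Z$ distinct points in $B(\mathbb{R}^d)$.

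The key step will be to verify that these representatives form a $(d, \gamma_1)$-TS-packing. For any two distinct $\mat{r}, \mat{r'} \in \cZ$, I would choose a coordinate $j \in [g]$ on which they differ and use the first-layer neuron $(\mat{w_j}, b_j)$ as the separating hyperplane. Three things then need to be checked: first, $\|\mat{w_j}\| = 1$, which holds by the paper's standing assumption that every neuron is $\ell_2$-normalized; second, $(\mat{w_j}, b_j)$ linearly separates $x_{\mat{r}}$ from $x_{\mat{r'}}$, which follows directly from the definition of the region map since the two representatives lie on opposite sides of the $j$-th hyperplane; and third, $|\langle \mat{w_j}, x_{\mat{r''}}\rangle + b_j| \geq \gamma_1$ for every $\mat{r''} \in \cZ$, which is immediate from the definition $\gamma_1 = \min_i \gamma_{\mat{W^{(0,i)}}}(S)$ applied to the representatives, all of which lie in $S$.

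The one small technicality is matching the symmetric normalization in the formal TS-packing definition, which requires $\langle \mat{w}, x_{\mat{r}}\rangle + b = -(\langle \mat{w}, x_{\mat{r'}}\rangle + b)$ rather than just linear separation. I would handle this by translating the bias so that the shifted hyperplane perpendicularly bisects the pair $\{x_{\mat{r}}, x_{\mat{r'}}\}$, and then check that the translation preserves the $\gamma_1$-margin against every other representative (intuitively, the shifted plane still lies strictly between the two involved representatives, so the one-sided slack available in each original halfspace is preserved for any third point). I do not expect this step to be a genuine obstacle: the construction is essentially a direct unpacking of the first-layer margin, with the geometric content being that $\gamma_1$ itself supplies a ready-made family of separating hyperplanes at distance at least $\gamma_1$ from every point of $S$.
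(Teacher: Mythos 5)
Your core argument is the same as the paper's: pick one representative from each non-empty region, note that they are distinct points of $B(\mathbb{R}^d)$ because the regions are disjoint, and for each pair use a neuron $\mat{w_j} \in G$ whose sign they disagree on as the separating hyperplane. The unit-norm and $\gamma_1$-margin claims are verified exactly as the paper does.

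Your last paragraph, however, contains a genuine error. You claim that shifting the bias so the hyperplane bisects $x_{\mat{r}}$ and $x_{\mat{r'}}$ ``preserves the $\gamma_1$-margin against every other representative.'' This is false: a third representative on the side of the farther of the two chosen points can lose margin, and in fact the shifted hyperplane can pass right through it. Concretely, take $d = 1$, $\gamma_1 = 0.1$, the hyperplane $\langle \mat{w}, x\rangle = x$ (bias $0$), and the three representatives $-0.1$, $0.9$, $0.4$; all have margin $\geq 0.1$ from the original hyperplane. Bisecting the pair $\{-0.1,\ 0.9\}$ moves the hyperplane to $x = 0.4$, which hits the third point exactly. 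The ``one-sided slack'' intuition fails because the slack is measured relative to the \emph{original} hyperplane, and a third point at distance exactly $\gamma_1$ on the side toward which the plane is moving has no slack at all.

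That said, this does not invalidate your proof, because you are fixing a problem the paper itself does not bother with. The formal TS-packing definition in Section~\ref{sec:characterization} includes the symmetric condition $\langle\mat{w}, x_i\rangle = -\langle\mat{w}, x_j\rangle$, but the informal definition in the introduction (and the one actually used everywhere in the paper, including in the paper's own proof of this lemma) only asks for linear separation plus a $\gamma_1$ distance from every point of the packing. The paper's proof, like the core of yours, verifies only separation and distance and never addresses the symmetric normalization. So the right reading is that the symmetric condition is an overstatement in the formal definition rather than something to be satisfied; your instinct to worry about it was reasonable, but the proposed fix does not work in general, and it is also unnecessary.
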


\begin{proof}
    It suffices to construct a $(d,\gamma_1)$-TS-packing of size $Z$. We construct such a packing as follows. Denote $\cZ = \{\mat{r_1}, \ldots \mat{r_Z}\}$. We define the set $P = \{x_1, \ldots, x_Z\}$ where for all $i \in [Z]$, $x_i \in S(\mat{r_i})$ is chosen arbitrarily from the (non-empty) set $S(\mat{r_i})$. The set $P$ is well-defined by definition of $\cZ$. Let us show that $P$ is a $(d,\gamma_1)$-TS-packing of size $Z$. It is clear that $|P| = Z$ and all points in $P$ are in $B(\mathbb{R}^d)$, since every instance is taken from a different region in $B(\mathbb{R}^d)$. To prove all other conditions, let $i,j \in [Z]$ distinct. So $x_i \in S(\mat{r_i}), x_j \in S(\mat{r_j})$. Therefore, there exists an index $k$ that  $\mat{r_i}, \mat{r_j}$ disagree on, which means that $\sign(\langle \mat{w_k}, x_i \rangle) = -\sign(\langle \mat{w_k}, x_j \rangle)$. In addition, $\|\mat{w_k} \| = 1$ by definition of the target net, and $\min_{i \in P} \{|\langle\mat{w_k}, x_i \rangle|\} \geq \gamma_1$ since $P \subset S$ and by the definition of $\gamma_1$.
\end{proof}

\begin{proof}[Proof of Lemma~\ref{lem:characterization-planes-num}]
    Immediate from the jointing of Lemma~\ref{lem:characterization-Z-lowerbound} and Lemma~\ref{lem:characterization-Z-upperbound}.
\end{proof}

We use a minimal size $(g,T)$-representing class $\cE$ as defined in Definition~\ref{def:representing}, with $g = \TS(d, \gamma_1)$ and $L_G \equiv 1$ for all $G$.

\begin{lemma} \label{lem:characterization-best-expert-bound}
    There exists an expert $E \in \cE$ who makes at most $2\TS(d, \gamma_1)/\gamma_1^2$ many mistakes.
\end{lemma}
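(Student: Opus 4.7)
The plan is to combine Lemma~\ref{lem:characterization-planes-num} with Proposition~\ref{prop:meta-best-expert} to produce a candidate expert, and then bound its two kinds of mistakes separately. First I take $G^\star = (\mat{w_1}, \ldots, \mat{w_{g^\star}})$ to be the subsequence of first-hidden-layer neurons guaranteed by Lemma~\ref{lem:characterization-planes-num}, whose length satisfies $g^\star \leq |\cZ| \leq \TS(d,\gamma_1)$ and which comes equipped with a labeling $f\colon \{\pm 1\}^{g^\star} \to \cY$ consistent with $\Phi^\star$ on $S$. Padding $G^\star$ with copies of the all-zero vector (whose associated $\Neuron$ never updates and thus never alters the induced region structure) so that its length equals $\TS(d,\gamma_1)$, Proposition~\ref{prop:meta-best-expert} produces an expert $E := \Expert_{G,L_G} \in \cE$ whose first-type mistakes satisfy $M_1(E) \leq g^\star / \gamma_1^2 \leq \TS(d,\gamma_1)/\gamma_1^2$.

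To bound $M_2(E)$, I would use the ``furthermore'' clause of Proposition~\ref{prop:meta-best-expert}: every second-type mistake happens in a round $t$ with $p_t = 0$ for all $\mat{p} \in G$. In such a round no $\Neuron_{\mat{p}}$ updates its maintained hyperplane, and the correspondence used inside the proof of Proposition~\ref{prop:meta-best-expert} supplies a permutation $\pi$ for which $\sign(\langle \mat{q_j}, x_t\rangle) = \sign(\langle \mat{w_{\pi(j)}}, x_t\rangle)$ for every $j$. Consequently $\mat{r^{(G)}}(x_t)$ is exactly the $\pi$-permutation of $\mat{r^{(G^\star)}}(x_t)$, and the update in Step~4 of Figure~\ref{fig:expert} sets $L_G(\mat{r^{(G)}}(x_t))$ to $y_t = \Phi^\star(x_t) = f(\mat{r^{(G^\star)}}(x_t))$, i.e., the correct label for the entire $G^\star$-region containing $x_t$.

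Finally I would observe that this entry of $L_G$ stays correct for the rest of the game: $L_G$ is modified only during second-type rounds, and for any later second-type round $t'$ whose input $x_{t'}$ lies in a $G^\star$-region different from that of $x_t$, the same $\pi$-correspondence forces $\mat{r^{(G)}}(x_{t'}) \neq \mat{r^{(G)}}(x_t)$, so a different entry of $L_G$ is touched. Thus each non-empty $G^\star$-region contributes at most one second-type mistake, giving $M_2(E) \leq |\cZ| \leq \TS(d,\gamma_1)$ via Lemma~\ref{lem:characterization-Z-upperbound}. Summing and using $\gamma_1 \leq 1$ delivers $M(E) \leq \TS(d,\gamma_1)/\gamma_1^2 + \TS(d,\gamma_1) \leq 2\TS(d,\gamma_1)/\gamma_1^2$. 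The main subtlety I expect to handle carefully is that the expert's hyperplanes $\mat{q}$ can drift during first-type rounds sitting between two second-type rounds; I must verify via the $\pi$-correspondence (which is restored at each second-type round) that such drift cannot cause an already-corrected entry of $L_G$ to be accessed by an input from a different $G^\star$-region.
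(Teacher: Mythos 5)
Your proof is correct and follows the paper's own argument: apply Lemma~\ref{lem:characterization-planes-num} together with Proposition~\ref{prop:meta-best-expert} to get $M_1(E)\le \TS(d,\gamma_1)/\gamma_1^2$, then bound $M_2(E)\le |\cZ|\le \TS(d,\gamma_1)$ by arguing that each second-type mistake happens in a round where $\mat{r^{(G)}}(x_t)$ matches the true $G^\star$-region (since no $\Neuron$ updates and the per-round guarantee of Proposition~\ref{prop:meta-best-expert} applies), so the $L_G$-entry is set to the correct value and cannot be spoiled by later writes. The zero-vector padding step you add to make $G^\star$ fit a $(\TS(d,\gamma_1),T)$-representing class is a detail the paper glosses over, and your resolution is the natural one; your worry about drift between second-type rounds is answered exactly as you suspect, by the per-round (not per-phase) nature of the correspondence in Proposition~\ref{prop:meta-best-expert}.
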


\begin{proof}
By Lemma~\ref{lem:characterization-planes-num} and Proposition~\ref{prop:meta-best-expert}, there exists an expert $E:= \Expert_{G,L_G}$ such that $M_1(E) \leq \TS(d,\gamma_1)/\gamma_1^2$.

We now bound $M_2(E)$. Let $t$ be a round in which $E$ makes a mistake of the second type. By the ``furthermore" part of Proposition~\ref{prop:meta-best-expert} we may assume that $p_t = 0$ for all $\mat{p} \in G$. By definition of $\Expert_{G,L_G}$, in such rounds $\Expert_{G,L_G}$ updates $L_G(\mat{r^{(G)}}) = y_t$. By definition of a mistake of the second type, we have $\mat{r^{(G)}}(x_t) = \mat{r^{(p(G^\star))}}(x_t)$ (where $\mat{r^{(p(G^\star))}}$ is defined as in Proposition~\ref{prop:meta-best-expert}), which means that $f(\mat{r^{(G)}}(x_t)) = y_t$, for $f$ defined in Lemma~\ref{lem:characterization-planes-num}. Therefore, for any other round $t'$ such that $\mat{r^{(G)}}(x_{t'}) = \mat{r^{(G)}}(x_{t})$ and $E$ does not make a mistake of the first type, $E$ will predict $y_t$ for $x_{t'}$, which is correct. Therefore, $E$ will make at most $Z = |\cZ|$  mistakes of the second type, one for each region $\mat{r}$ such that $S(\mat{r}) \neq \emptyset$. Therefore, $M_2(E) \leq \TS(d,\gamma_1)$, by Lemma~\ref{lem:characterization-planes-num}. The total number of mistakes made by $E$ is thus at most
\begin{equation}
    M_1(E) + M_2(E) \leq \TS(d, \gamma_1)/\gamma^2 + \TS(d,\gamma_1) \leq 2\TS(d,\gamma_1)/\gamma_1^2, \label{eq:best-expert-mb}
\end{equation}

That completes the proof.
\end{proof}

\begin{lemma} \label{lem:characterization-mw-mb}
    We have
    \[
    M(\WM(\cE), S) \leq \max\{ 16 \cdot \TS(d, \gamma_1)/\gamma_1^2\cdot \log (\TS(d, \gamma_1)/\gamma_1^2), C\},
    \]
    where $C$ is a universal constant.
\end{lemma}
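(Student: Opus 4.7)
The plan is to simply combine the three ingredients we have just built: the generic weighted-majority guarantee (Proposition~\ref{prop:wm-bound}), the size bound on the representing class (Proposition~\ref{prop:meta-class-size}), and the best-expert bound (Lemma~\ref{lem:characterization-best-expert-bound}). Write $M := \M(\WM(\cE), S)$ and $\tau := \TS(d, \gamma_1)$. Since $\WM$ is conservative, the number of rounds equals $M$, so we may instantiate Proposition~\ref{prop:meta-class-size} with $T = M$ and with $g = \tau$ (the size of $G$ chosen in Lemma~\ref{lem:characterization-planes-num}), obtaining
\[
|\cE| \le \bigl(M^{1/\gamma_1^2} + \tau\bigr)^{\tau}.
\]
Plugging $L \le 2\tau/\gamma_1^2$ from Lemma~\ref{lem:characterization-best-expert-bound} and this size bound into Proposition~\ref{prop:wm-bound}, I obtain
\[
M \;\le\; 3\bigl(L + \log |\cE|\bigr) \;\le\; \frac{6\tau}{\gamma_1^2} + 3\tau \log\bigl(M^{1/\gamma_1^2} + \tau\bigr).
\]

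The only real (and entirely routine) issue is that this inequality is self-referential in $M$. I would proceed in two regimes. If $M$ is below the universal constant $C$ in the statement, there is nothing to prove. Otherwise, for $M$ large enough (in particular $M \ge \tau^{\gamma_1^2}$, which is harmless because $\tau^{\gamma_1^2} \le \tau$ and $\gamma_1 \le 1$), I bound $\log(M^{1/\gamma_1^2} + \tau) \le (\log M)/\gamma_1^2 + O(1)$, so the recurrence becomes
\[
M \;\le\; A \log M + B, \qquad A \;=\; \tfrac{3\tau}{\gamma_1^2}, \quad B \;=\; O\bigl(\tfrac{\tau}{\gamma_1^2}\bigr).
\]

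Now I would invoke the elementary fact that any inequality of the form $M \le A \log M + B$ (with $A$ bounded away from $0$ and $B = O(A)$) implies $M \le c \cdot A \log A$ for an explicit constant $c$: substituting the ansatz $M = 16 A \log A$ and using $\log(16 A \log A) \le 6 \log A$ for all $A$ larger than a universal constant shows the ansatz is consistent, and the inequality $M \le A \log M + B$ being monotone in $M$ then gives the upper bound. Substituting $A = 3\tau/\gamma_1^2$ and absorbing the constant $3$ into the $16$ yields the claimed $16 \cdot (\tau/\gamma_1^2) \log(\tau/\gamma_1^2)$. The maximum with a universal constant $C$ absorbs both the trivial small-$M$ regime and the constant-factor slack in the $\log$ simplification, completing the proof.

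There is no real obstacle; the only mildly delicate point is making sure the self-referential use of $T = M$ in Proposition~\ref{prop:meta-class-size} is legitimate, which is exactly why that proposition was phrased in terms of $T$ and why conservativity of $\WM$ was emphasized in Section~\ref{sec:online-learning}.
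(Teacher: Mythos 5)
Your proof is correct and follows essentially the same route as the paper's: both instantiate Proposition~\ref{prop:meta-class-size} with $T$ equal to the number of mistakes (using conservativity), plug the class-size bound and the best-expert bound from Lemma~\ref{lem:characterization-best-expert-bound} into Proposition~\ref{prop:wm-bound}, and then resolve the resulting self-referential inequality $T \le A\log T + B$. The one slip is the final constant accounting: with your $A = 3\tau/\gamma_1^2$, the ansatz $M = 16A\log A$ gives $48\,(\tau/\gamma_1^2)\log(3\tau/\gamma_1^2)$, and ``absorbing the constant $3$ into the $16$'' goes the wrong way, so as written you only get a larger leading constant than the stated $16$. This is easily repaired by choosing the ansatz $M = 16\,(\tau/\gamma_1^2)\log(\tau/\gamma_1^2)$ directly (as the paper implicitly does) and checking the contradiction for $\tau/\gamma_1^2$ above a universal constant; the structure of the argument is otherwise the same.
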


\begin{proof}
    First, by Proposition~\ref{prop:meta-class-size} we have
    \[
    |\cE| \leq (T^{1/\gamma^2} + \TS(d, \gamma_1))^{\TS(d, \gamma_1)}.
    \]
    By Lemma~\ref{lem:characterization-best-expert-bound}, there exists an expert who makes at most $2{\TS(d, \gamma_1)/\gamma_1^2}$ many mistkaes. By the mistake bound of $\WM(\cE)$ in Proposition~\ref{prop:wm-bound}, we have:
    \[
    T \leq 3\mleft( \frac{2\TS(d, \gamma_1)}{\gamma_1^2} + \frac{\TS(d, \gamma_1)}{\gamma_1^2} \log T + \TS(d, \gamma_1) \log \TS(d, \gamma_1) \mright).
    \]
    For any $T > 15 \cdot \TS(d, \gamma_1)/\gamma_1^2\cdot \log (\TS(d, \gamma_1)/\gamma_1^2)$, the above inequality is a contradiction if $\TS(d, \gamma_1)/\gamma_1^2$ is larger than a universal constant. Othrewise, the above inequality is a contradiction for any $T$ larger than a universal constant. This proves the stated bound.
\end{proof}

\begin{proof}[Proof of Theorem~\ref{thm:characterization-upper-bound}]
    Immediate from Lemma~\ref{lem:characterization-mw-mb}.
\end{proof}

\subsection{Lower bound of Theorem~\ref{thm:intro-characterization}}

The idea is similar to universality and expressivity results for neural networks (see, e.g.,\ \cite{shalev2014understanding}). Concretely, the lower bound relies on showing that every binary function on a $(d,\gamma_1)$-TS-packing can be expressed without violating the minimal $\gamma_1$ margin constraint in the first hidden layer of the target net.

\begin{theorem} \label{thm:characterization-lower-bound}
     For any learner $\Lrn$, and for any $\eps >0, d \geq 1/\eps^2$, there exists a network with input dimension $d$ and a realizable input sequence $S$ such that $\gamma_1 \geq \eps$ and
    \[
    \M(\Lrn, S) = \Omega(\TS(d,\epsilon) + 1/\epsilon^2).
    \]
\end{theorem}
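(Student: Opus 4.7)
The plan is to establish the two summands $\TS(d,\eps)$ and $1/\eps^2$ separately and then take the maximum, which is within a constant of their sum. The $1/\eps^2$ term is the classical perceptron-style lower bound: when $d \geq 1/\eps^2$, one can exhibit a single-neuron target network with margin at least $\eps$ and an adaptively chosen sequence of unit-ball inputs forcing any learner to err $\Omega(1/\eps^2)$ times; this is a standard fact (see, e.g., the arguments underlying Novikoff's theorem) and does not interact with the deeper layers of the net.

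The main work is to obtain the $\TS(d,\eps)$ term. Let $S=\{x_1,\dots,x_T\}\subset B(\mathbb{R}^d)$ be a $(d,\eps)$-TS-packing realizing $T=\TS(d,\eps)$. The adversary presents these points in a fixed order, and upon seeing the learner's prediction $\hat y_t\in\{\pm 1\}$, replies with $y_t=-\hat y_t$, forcing a mistake in each round. The crux is to show that no matter which adaptive labeling $(y_t)_{t=1}^T$ emerges, there is a target net $\cN^\star$ with $\gamma_1(\cN^\star,S)\geq\eps$ realizing it. I would construct $\cN^\star$ in two stages. In the first hidden layer, install one neuron $(w_{ij},b_{ij})$ for each unordered pair $\{i,j\}\subset[T]$, taken to be the hyperplane guaranteed by the TS-packing definition: $\|w_{ij}\|=1$, $\sign(\langle w_{ij},x_i\rangle+b_{ij})=-\sign(\langle w_{ij},x_j\rangle+b_{ij})$, and $\min_{k}|\langle w_{ij},x_k\rangle+b_{ij}|\geq\eps$. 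The third property gives $\gamma_1\geq\eps$, and the second ensures that the $T$ region-specifying vectors $\mat{r}(x_1),\dots,\mat{r}(x_T)\in\{\pm 1\}^{\binom{T}{2}}$ are pairwise distinct (they disagree in coordinate $\{i,j\}$). Consequently, the desired labeling $\mat{r}(x_t)\mapsto y_t$ extends arbitrarily to a Boolean function $F\colon\{\pm 1\}^{\binom{T}{2}}\to\{\pm 1\}$, and any such $F$ is implementable by a sign-activated feedforward network (for example, via a DNF-style two-hidden-layer construction in which each gate exactly recognizes a single region vector and an OR layer unions the $+1$-labeled regions). Stacking these deeper layers on top of the first hidden layer yields $\cN^\star$.

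The main obstacle is the bookkeeping around the norm/bias conventions that the preliminaries impose: each first-layer neuron is required to have unit weight-norm, the bias must lie in $[-1,1]$, and the margin is measured with respect to the (possibly $1/\sqrt{d_\ell}$-rescaled) incoming signal. The TS-packing definition directly provides unit-norm $w_{ij}$, and since $\|x_k\|\leq 1$ the useful biases automatically satisfy $|b_{ij}|\leq 1+\eps$ (larger biases would leave all $x_k$ on one side and can be discarded), so after a cosmetic renormalization the first-layer assumptions hold. The deeper layers impose no constraint on $\gamma_1$ and their margins need not be analyzed because the theorem statement bounds only the first-layer margin. Combining the two constructions shows that the learner is forced into $\Omega(\TS(d,\eps))$ mistakes in one scenario and $\Omega(1/\eps^2)$ in the other, which together yield $\M(\Lrn,S)=\Omega(\TS(d,\eps)+1/\eps^2)$ as claimed.
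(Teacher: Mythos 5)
Your proposal takes essentially the same approach as the paper's proof. Both take a maximum $(d,\eps)$-TS-packing as the input sequence, force a mistake on every round via the adaptive adversary, and then realize the resulting arbitrary $\{\pm 1\}$-labeling by a target net whose first hidden layer consists of the separating hyperplanes guaranteed by the TS-packing definition (giving $\gamma_1\geq\eps$), with deeper layers implementing the region-to-label lookup as a DNF: one second-layer neuron exactly recognizing each $+1$-labeled region vector, followed by a single OR output neuron. The paper is merely more explicit about the $1/\sqrt{g}$ and $1/\sqrt{d_2}$ normalizations and the exact bias values ($\tfrac{1}{2g}-1$ and $1-\tfrac{1}{d_2}$) needed to make the region-recognizer and OR gates work under the paper's conventions; your proposal gestures at the same bookkeeping but leaves those numerical choices implicit. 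The $1/\eps^2$ term is handled identically (classical perceptron lower bound), and summing the two terms up to constants is the same observation in both.
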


\begin{proof}
    The dependence on $1/\epsilon^2$ is implied by the known lower bound showing that the Perceptron algorithm is optimal.

    Let us concentrate in the dependence on $\TS(d,\epsilon)$.
    Let $x_1, \ldots, x_T$ be a maximal $(d,\epsilon)-$TS-packing. For simplicity of notation, we prove the lower bound for the case where all induced separating hyperplanes for $x_1, \ldots, x_T$ are homogeneous, and the same argument works when this is not the case. The adversary uses $S = x_1, \ldots, x_T$ as the input sequence of instances, and forces a mistake on the learner in every round. It remains to show that for every binary labeling $y_1, \ldots, y_T \in \{\pm 1\}$ of $S$ there exists a network $\cN^\star$ of input dimension $d$ who realizes it with $\gamma_1 \geq \epsilon$. Let $G = (\mat{w^{(1)}}, \dots, \mat{w^{(g)}})$ be a sequence of separating hyperplanes induced by the TS-packing $S$, and let $y_1, \ldots, y_T \in \{\pm 1\}$ be a binary labeling of $S$. We construct a net $\cN^\star$ with two hidden layers as follows. The first hidden layer consists of all hyperplanes in $G$. The second hidden layer is constructed as follows. Let $f:\{\pm\}^g \to \{\pm 1\}$ be a binary function such that for every $x_t \in S$, $f(\mat{r}^{(G)}(x)) = y_t$. The neurons in the second hidden layer are all $\mat{r} \in \{\pm 1\}^g$ such that $f(\mat{r}) = 1$, multiplicatively normalized by $1/\sqrt{g}$, and with an added bias of $\frac{1}{2g} - 1$.
    The output layer consists of a single output neuron with all weights being $1/\sqrt{d_2}$ (recall that $d_2$ is the width of the second hidden layer). The bias of the output neuron is $1 - 1/d_2$. 
    
    Clearly, $\gamma_1 \geq \epsilon$. It remains to show that $\Phi^\star(x_t) = y_t$ for all $x_t \in S$, where $\Phi^\star$ is the function computed by $\cN^\star$.
    Let $x_t \in S$, and let us calculate $\Phi^\star(x_t)$. First note that the input for the second hidden layer is precisely $\frac{1}{\sqrt{g}}\mat{r}(x_t)$. Now, for every neuron $(\mat{q(r)}, \frac{1}{2g} - 1)$ in the second hidden layer added for some $\mat{r}\in \{\pm 1\}^g$ in the first hidden layer, we have 
    \[
    \sign\mleft(\mleft\langle \mat{q}(\mat{r}), \frac{1}{\sqrt{g}} \mat{r}(x_t) \mright\rangle - 1 +\frac{1}{2g} \mright) = 
    \begin{cases}
        1 & \mat{r} = \mat{r}(x_t), \\
        -1 & \mat{r} \neq \mat{r}(x_t).
    \end{cases}
    \]
    Therefore, if $y_t = 0$, then the output of every neuron in the second hidden layer is $-1/\sqrt{d_2}$ and therefore the output neuron will get the value
    \[
    \sign \mleft(- d_2 \frac{1}{\sqrt{d_2}} \cdot \frac{1}{\sqrt{d_2}} + 1 - \frac{1}{d_2} \mright) = \sign\mleft(- \frac{1}{d_2} \mright) = -1.
    \]
    Otherwise, if $y_t = 1$, then the output of every neuron in the second hidden layer is $-1/\sqrt{d_2}$, except for the neuron with weights $\mat{q}(\mat{r}(x))$ which exists by the definition of the network. Therefore the output neuron will get the value
    \[
    \sign \mleft(- (d_2 - 1) \frac{1}{\sqrt{d_2}} \cdot \frac{1}{\sqrt{d_2}} + \frac{1}{\sqrt{d_2}} \cdot \frac{1}{\sqrt{d_2}} + 1 - \frac{1}{d_2} \mright) = \sign \mleft( \frac{1}{d_2} \mright) = 1,
    \]
    completeing the proof.
\end{proof}

Applying the lower bound on $\TS(d, \epsilon)$ from Theorem~\ref{thm:ts-general-bounds} to the lower bound on the mistake bound proved above shows that the mistake bound can be exponential in $d$ for small $\gamma_1$, and linear in $d$ even for constant $\gamma_1$. This inevitable dependence on $d$ requires further assumptions on the target net and the input sequence in order to get dimension-free mistake bounds. We study such assumptions in the next sections.

\section{The multi-index model} \label{sec:multi-index}

In this section, we prove Theorem~\ref{thm:intro-multi-index}.
For that matter, we prove variations of the lemmas proved in Section~\ref{sec:characterization}, but with $d$ replaced by $k$. The core idea allowing this is the fact that a labeling made by $\phi^\star$ (as defined in Section~\ref{sec:main-results-multi}) for a $(d,\epsilon)$-TS-packing induces a labeling of a $(k,\epsilon)$-TS-packing, and thus its level of complication depends on $k$ rather than on $d$.

We begin by stating an appropriate version of Lemma~\ref{lem:characterization-planes-num}.

\begin{lemma} \label{lem:multi-index-planes-num}
    There exists a subsequence $G = (\mat{w_1}, \ldots, \mat{w_g})$ of the hidden neurons (hyperplanes) in the first hidden layer of $\cN^\star$, such that:
    \begin{enumerate}
        \item $g \leq |\cZ| \leq \TS(k,\gamma_1)$, where $\cZ = \{\mat{r^{(G)}} \in \{\pm 1\}^g: S(\mat{r}) \neq \emptyset\}$.
        \item There exists a function $f\colon \{\pm 1\}^{g} \to \cY$, satisfying that for every $x \in S$ it holds that $f(\mat{r}(x)) = \Phi^\star(x)$.
    \end{enumerate}
\end{lemma}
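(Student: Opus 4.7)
The plan is to mirror the proof of Lemma~\ref{lem:characterization-planes-num}, with $k$ replacing $d$ via the multi-index structure $\Phi^\star = \phi^\star\circ \pi$, where $\pi(x) = U^T x$ and $U \in \R^{d\times k}$ is the matrix whose columns are the orthonormal signals $\mat{s^{(1)}},\ldots,\mat{s^{(k)}}$. I take $G = (\mat{w_1},\ldots,\mat{w_g})$ to be a minimum-size subsequence of the first hidden layer of $\cN^\star$ satisfying condition~(2). The auxiliary Lemma~\ref{lem:characterization-nonempty-neighbors} carries over verbatim, since its proof only uses the minimality of $G$ and never references $d$. The lower bound $g \leq |\cZ|$ then follows from the forest argument of Lemma~\ref{lem:characterization-Z-lowerbound}, which depends only on the first item of Lemma~\ref{lem:characterization-nonempty-neighbors} (nonemptiness of the $j$-neighbor pairs) and is completely insensitive to the ambient dimension.

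For the upper bound $|\cZ|\leq \TS(k,\gamma_1)$, and as flagged in Section~\ref{sec:tech-multi}, I plan to argue by contradiction: assuming $|\cZ|>\TS(k,\gamma_1)$, I will construct a $(k,\gamma_1)$-TS-packing in $B(\R^k)$ of size $|\cZ|$, violating the definition of $\TS(k,\gamma_1)$. The construction mimics that of Lemma~\ref{lem:characterization-Z-upperbound}: for each $\mat{r}\in \cZ$ pick a representative $x_{\mat{r}}\in S(\mat{r})$ and set $y_{\mat{r}} = \pi(x_{\mat{r}})\in B(\R^k)$, and for a pair $\mat{r}\neq \mat{r'}$ that disagree on coordinate $j$ take the normalized projected neuron $\pi(\mat{w_j})/\lVert \pi(\mat{w_j})\rVert$ as the candidate separating hyperplane in $\R^k$. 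It then remains to verify the two TS-packing conditions (distinct points and $\gamma_1$-margin).

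The main obstacle, which I expect to be the crux of the proof, is twofold. First, the points $y_{\mat{r}}$ must be pairwise distinct. The multi-index assumption gives this for free whenever $f(\mat{r})\neq f(\mat{r'})$, since then $\phi^\star(y_{\mat{r}})\neq \phi^\star(y_{\mat{r'}})$ forces $y_{\mat{r}}\neq y_{\mat{r'}}$; for same-labeled regions, I plan to exploit the freedom in choosing the representative inside each $S(\mat{r})$ via a Hall-type marriage argument, showing that any failure of distinctness would allow two regions in $\cZ$ to be ``merged'' and some neuron of $G$ to be dropped, contradicting minimality. Second, the margin $|\langle \pi(\mat{w_j})/\lVert \pi(\mat{w_j})\rVert,\, y_{\mat{r}}\rangle|$ is not automatically at least $\gamma_1$, because the decomposition $\langle \mat{w_j},x\rangle = \langle \pi(\mat{w_j}),\pi(x)\rangle + \langle (I-P)\mat{w_j},(I-P)x\rangle$ (with $P = UU^T$) allows the signal component to be much smaller than $\gamma_1$. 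To overcome this, I plan to first reduce to the case where every $\mat{w_j}$ lies entirely in the signal subspace: since $\phi^\star$ is itself computable by a sign-activation net on $B(\R^k)$ (by absorbing the linear map $U$ into $\cN^\star$'s first layer), one obtains an equivalent $d$-dimensional target network whose first-layer weight vectors are of the form $U\mat{v_j}$ for unit vectors $\mat{v_j}\in \R^k$. For such a signal-aligned net the identity $\langle U\mat{v_j},x\rangle = \langle \mat{v_j},\pi(x)\rangle$ holds exactly, so both signs and margins transfer cleanly to $\R^k$ and the desired $(k,\gamma_1)$-TS-packing falls out immediately. The final and most delicate subtlety is to verify that this ``signal-aligned'' reduction preserves the original value of $\gamma_1 = \gamma_1(\cN^\star,S)$; since $\gamma_1$ is governed by the maximum-margin assumption on the first-layer classifiers, which depends only on the function realized on $S$, I expect this to go through, but care is needed when a neuron $\mat{w_j}$ does not respect the fibers of $\pi$ on $S$.
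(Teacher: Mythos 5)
Your lower bound $g \leq |\cZ|$ is handled correctly and identically to the paper: take a minimal $G$ satisfying condition~(2), observe that Lemma~\ref{lem:characterization-nonempty-neighbors} and the forest argument of Lemma~\ref{lem:characterization-Z-lowerbound} depend only on minimality and are dimension-agnostic.

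The upper bound is where your approach breaks. The reduction you propose --- ``absorbing $U$ into $\cN^\star$'s first layer'' to get an equivalent network whose first-layer weights lie in the signal subspace --- is not valid. Replacing $\mat{w_j}$ by its projection $P\mat{w_j}$ (equivalently, using the $k$-dimensional weight $U^T\mat{w_j}$ fed $\pi(x)$) changes the sign that this neuron outputs on a given $x$, because $\sign(\langle \mat{w_j}, x\rangle)$ and $\sign(\langle P\mat{w_j}, x\rangle)$ need not agree when $\mat{w_j}$ has a nontrivial component orthogonal to the signal subspace. The multi-index assumption says that the \emph{composed} function $\Phi^\star = \phi^\star\circ\pi$ factors through $\pi$; it says nothing about the outputs of \emph{individual} first-layer neurons, which may very well depend on the orthogonal complement. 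So your ``signal-aligned equivalent network'' may compute a different function, and the max-margin normalization does not rescue this: you cannot even assert that $\phi^\star$ is computable by a sign-activation net, let alone one achieving margin $\gamma_1$ on $\pi(S)$. The caveat you flag at the end (``care is needed when a neuron does not respect the fibers of $\pi$ on $S$'') is exactly the failure point, and it does not ``go through.''

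The missing idea is the paper's Lemma~\ref{lem:multi-index-k-dim-neurons}: rather than rewriting the network to be signal-aligned, one proves that a \emph{minimal} $G$ is \emph{automatically} signal-aligned, i.e., $w_i = 0$ for all $i > k$ and all $\mat{w}\in G$. Crucially, this proof uses the second item of Lemma~\ref{lem:characterization-nonempty-neighbors} --- the one about $j$-neighbors with \emph{different} $f$-values --- which your proposal never invokes. The argument: if $w_{k+1}\neq 0$ for some $\mat{w}\in G$, then near a point on the hyperplane $\mat{w}$ inside the union of two nonempty, differently-labeled $j$-neighbor regions, one can perturb only coordinate $k+1$ to jump from one region to the other. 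The perturbation leaves $\pi(x)$ fixed, so by the multi-index assumption $\Phi^\star$ cannot change, contradicting the label difference. Once signal-alignment of $G$ is established, the projection map transfers signs and margins exactly ($\langle \mat{w_j},x\rangle = \langle \mat{w_j},Px\rangle$ when $\mat{w_j}$ is in the signal subspace), so distinctness of the projected representatives and the $\gamma_1$ margin both fall out immediately, and the Hall-type argument you sketched for distinctness is unnecessary.
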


As in Section~\ref{sec:characterization}, we consider $g$ as the minimal size of a subsequence of the neurons in the first hidden layer for which the conditions in Lemma~\ref{lem:multi-index-planes-num} hold. Therefore, Lemma~\ref{lem:characterization-nonempty-neighbors} and Lemma~\ref{lem:characterization-Z-lowerbound} can be used as is in also the multi-index setting. The proof of the improved upper bound is mainly due to the following lemma.

\begin{lemma} \label{lem:multi-index-Z-upperbound}
    We have $Z \leq \TS(k, \gamma_1)$.
\end{lemma}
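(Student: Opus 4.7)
The plan is to mirror the proof of Lemma~\ref{lem:characterization-Z-upperbound}, producing a $(k, \gamma_1)$-TS-packing of size $Z$ directly in $B(\mathbb{R}^k)$. Let $U = [\mat{s^{(1)}} \mid \cdots \mid \mat{s^{(k)}}] \in \mathbb{R}^{d \times k}$ be the matrix whose columns are the (unknown) orthonormal signals, and let $\pi(x) := U^T x$ denote the orthogonal projection onto the signal subspace, viewed as a contraction $B(\mathbb{R}^d) \to B(\mathbb{R}^k)$. The multi-index hypothesis gives $\Phi^\star = \phi^\star \circ \pi$. For each $\mat{r} \in \cZ$ pick a representative $x_{\mat{r}} \in S(\mat{r})$ and set $y_{\mat{r}} := \pi(x_{\mat{r}})$.

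First I would verify that, with an appropriate choice of representatives, the $y_{\mat{r}}$ are pairwise distinct. For pairs with $f(\mat{r}) \neq f(\mat{r'})$ this is automatic, since $\phi^\star(y_{\mat{r}}) = f(\mat{r}) \neq f(\mat{r'}) = \phi^\star(y_{\mat{r'}})$. For pairs with $f(\mat{r}) = f(\mat{r'})$, a persistent coincidence $\pi(S(\mat{r})) \cap \pi(S(\mat{r'})) \neq \emptyset$ essentially means the two regions are indistinguishable from the standpoint of the $k$-dimensional projection; combined with equal labels, this would allow a coordinate of $G$ to be removed, contradicting the minimality used to set up Lemma~\ref{lem:multi-index-planes-num}.

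Second, for every distinct pair $\mat{r}, \mat{r'} \in \cZ$, I would exhibit a unit-norm separating hyperplane in $\mathbb{R}^k$ with margin at least $\gamma_1$ from every $y_{\mat{r''}}$. The natural candidate is $\tilde{\mat{w}}_j := U^T \mat{w}_j$, normalized, where $j$ is any coordinate on which $\mat{r}$ and $\mat{r'}$ disagree. The clean identity $|\langle \mat{w}_j, x \rangle| = |\langle \tilde{\mat{w}}_j, \pi(x) \rangle|$, valid whenever $\mat{w}_j \in \mathrm{span}(U)$, would then transfer the margin guarantee from $\mathbb{R}^d$ to $\mathbb{R}^k$. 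The plan is therefore to reduce WLOG to the case where every $\mat{w}_j \in G$ lies in $\mathrm{span}(U)$, using the maximum-margin assumption on $\mat{w}_j$ (from Section~\ref{sec:meta}) together with the multi-index structure to discard the orthogonal component $\mat{w}_j^\perp$ without altering the sign pattern on $S$ or the $\gamma_1$-margin.

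The main obstacle I expect is justifying this reduction — showing that, up to replacement, the neurons of $G$ can be taken inside the signal subspace while preserving the margin. A priori $\mat{w}_j^\perp$ can contribute nontrivially to $\langle \mat{w}_j, x \rangle$ for $x \in S$, so naively projecting onto $\mathrm{span}(U)$ may flip signs or shrink margins. The argument must combine minimality of $G$ (to rule out genuine dependence on the orthogonal component) with the max-margin hypothesis (to identify the replacement uniquely). Once this reduction is established, the distinctness step and the margin step both transfer verbatim from the proof of Lemma~\ref{lem:characterization-Z-upperbound}, producing a $(k, \gamma_1)$-TS-packing of size $Z$ and hence $Z \leq \TS(k, \gamma_1)$.
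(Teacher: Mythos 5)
Your proposal has the right skeleton — project the representatives into $\mathbb{R}^k$, check that they stay distinct, and transfer the $\gamma_1$-margin through the projection — and you correctly identify that everything hinges on showing that the neurons of $G$ can be taken to lie in $\mathrm{span}(U)$. That is exactly the crux the paper isolates. But you explicitly leave this step as an acknowledged gap, and the tool you propose for it (the maximum-margin assumption) is not what the paper uses and does not obviously work: the sign pattern that $\mat{w}_j$ realizes on $S$ is arbitrary, and if the points of $S$ have nontrivial components orthogonal to $\mathrm{span}(U)$, the max-margin hyperplane for that sign pattern can genuinely live outside $\mathrm{span}(U)$; projecting it onto the signal subspace can flip signs on $S$ and shrink the margin, so ``discarding $\mat{w}_j^\perp$ without altering the sign pattern or the margin'' is not something max-margin gives you.

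The paper instead proves (its Lemma~\ref{lem:multi-index-k-dim-neurons}) that after the w.l.o.g.\ rotation $\mat{s^{(i)}} = \mat{e_i}$, every $\mat{w}\in G$ \emph{already} has $w_i = 0$ for $i>k$ — no replacement is needed. The argument is a direct contradiction built from minimality of $G$, not from max-margin: by Lemma~\ref{lem:characterization-nonempty-neighbors}, each neuron of $G$ separates two non-empty $G$-regions carrying different $f$-labels. If that neuron had a non-zero component in some coordinate $i>k$, one can place a small ball straddling its boundary inside the union of those two regions and pick two points in that ball differing \emph{only} in coordinate $i>k$; these land on opposite sides, hence in regions with different labels, yet they agree on all signal coordinates, contradicting the multi-index hypothesis. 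Once this structural fact is in hand, the distinctness and margin steps you outline go through verbatim (your case split on $f(\mat{r})$ versus $f(\mat{r'})$ in the distinctness check becomes unnecessary), and the packing transfers exactly as in Lemma~\ref{lem:characterization-Z-upperbound}. So the route you sketch is repairable, but the missing step needs the minimality/geometric argument, not max-margin.
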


To prove this lemma, we need an additional crucial property of the neurons in $G$, proved in the lemma below. For simplicity and convenience of proof, we assume w.l.o.g until the rest of the section that $\mat{s^{(i)}} = \mat{e_i}$ for all $i \in [k]$. Indeed, if this is not the case, we can rotate the entire system to this state.

\begin{lemma} \label{lem:multi-index-k-dim-neurons}
    For any $\mat{w} \in G$, $w_i = 0$ for all $i > k$.
\end{lemma}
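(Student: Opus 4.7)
The plan is a proof by contradiction that uses the minimality of $G$ (via Lemma~\ref{lem:characterization-nonempty-neighbors}) together with the multi-index identity $\Phi^\star = \phi^\star \circ P$, where $P$ is the orthogonal projection onto the first $k$ coordinates after the WLOG rotation of the signals to $\mat{e_1}, \ldots, \mat{e_k}$.

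Assume for contradiction that some $\mat{w_j} \in G$ has $w_i \neq 0$ for an index $i > k$. Applying Lemma~\ref{lem:characterization-nonempty-neighbors} at $j$ — and in particular using its second item, which the remark following that lemma explicitly flags as useful in this section — yields $j$-neighbours $\mat{r}, \mat{r'} \in \cZ$ and witnesses $x \in S(\mat{r})$, $x' \in S(\mat{r'})$ with $\Phi^\star(x) \neq \Phi^\star(x')$. The multi-index structure of $\Phi^\star$ then forces $P(x) \neq P(x')$.

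Exploiting the assumption $\mat{w_j}^{> k} \neq 0$, I would construct an auxiliary point $x^\dagger = (P(x), z) \in B(\R^d)$ whose region-specifying vector relative to $G$ equals $\mat{r'}$ and whose sign under every other first-layer neuron of $\cN^\star$ (including those outside $G$) matches that of $x'$. A natural recipe is to set $z = x'^{> k}$, which already gives the desired signs for every neuron other than $\mat{w_j}$, and then perturb along the direction $\mat{w_j}^{> k}/\lVert \mat{w_j}^{> k}\rVert$ by an amount just large enough to flip the $j$-th sign but small enough to leave the other signs intact; the margin $\gamma_1$ on $S$ provides the quantitative cushion needed to control every other neuron simultaneously, and a harmless rescaling returns $x^\dagger$ into the unit ball. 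With $x^\dagger$ in hand, the first-layer output of $\cN^\star$ on $x^\dagger$ coincides with its output on $x'$, so $\cN^\star(x^\dagger) = \cN^\star(x') = \Phi^\star(x')$. But $P(x^\dagger) = P(x)$ gives $\Phi^\star(x^\dagger) = \Phi^\star(x) \neq \Phi^\star(x')$, contradicting the assumption that $\cN^\star$ computes $\Phi^\star$ globally on $B(\R^d)$.

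The main obstacle is the joint sign-control when constructing $x^\dagger$: the perturbation along $\mat{w_j}^{> k}$ must flip exactly the $j$-th sign of $G$ while leaving every other first-layer sign (inside and outside $G$) untouched, and $x^\dagger$ must remain in the unit ball. The cleanest phrasing is likely an open-set intersection argument: the set of valid $z$ is the intersection of one open half-space condition (for flipping the $j$-th sign) with finitely many open half-space conditions (for preserving the other signs), each of which is non-empty near $z = x'^{> k}$ by the margin guarantee, so a compactness/continuity argument yields a simultaneously admissible $z$ and closes the proof.
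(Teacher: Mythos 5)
Your high-level plan matches the paper's: construct two nearby points that agree on their first $k$ coordinates (so the multi-index assumption forces $\Phi^\star$ to agree) yet lie on opposite sides of $\mat{w}_j$ (so $\Phi^\star$ should differ), and derive a contradiction. You correctly invoke the second item of Lemma~\ref{lem:characterization-nonempty-neighbors}, which the paper flags for exactly this purpose. But your construction of $x^\dagger$ has genuine gaps, and the paper avoids all of them by working locally around the hyperplane rather than interpolating between $x$ and $x'$.

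The central problem is your claim that setting $z = x'^{>k}$ ``already gives the desired signs for every neuron other than $\mat{w}_j$.'' You yourself deduce that $P(x) \neq P(x')$, and for any $\mat{w}_i \in G$ with $i \neq j$ whose first $k$ coordinates are not all zero, the replacement of $P(x')$ by $P(x)$ changes $\langle \mat{w}_i, \cdot\rangle$ by $\langle \mat{w}_i^{\leq k}, P(x) - P(x')\rangle$, which has no reason to be small, let alone zero. So $(P(x), x'^{>k})$ may already lie on the wrong side of several $G$-hyperplanes before any perturbation is applied. The margin $\gamma_1$ doesn't help you here: it is a guarantee about points in $S$, and $x^\dagger$ is not in $S$, so the ``quantitative cushion'' you appeal to is not available. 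Finally, the rescaling of $x^\dagger$ into $B(\R^d)$ is not harmless: $\Phi^\star(cx) = \phi^\star(cP(x))$ is not scale-invariant, so the rescaling breaks the very identity $\Phi^\star(x^\dagger) = \Phi^\star(x)$ you need.

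The paper sidesteps all of this by never leaving a small neighborhood. It observes that $R(\mat{r}) \cup R(\mat{r'})$ is convex (being an intersection of $g - 1$ halfspaces with the unit ball), that both of its halves are nonempty, and therefore the hyperplane $\mat{w}_j$ cuts through its interior. One can thus place a small ball $B$ of radius $\epsilon$ entirely inside $R(\mat{r}) \cup R(\mat{r'})$ with center $c$ on $\mat{w}_j$. Moving $\pm (\epsilon/2)\, \mat{e}_{k+1}$ from $c$ gives two points still inside $B$ (hence inside the unit ball and in $R(\mat{r})$ and $R(\mat{r'})$ respectively, since $w_{k+1} \neq 0$) whose first $k$ coordinates are literally identical. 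This gives the contradiction without any global interpolation, sign bookkeeping over the rest of the layer, or rescaling. If you want to salvage your approach, replacing ``take $z = x'^{>k}$ and perturb'' by ``choose $c$ on $\mat{w}_j$ in the interior of the convex set $R(\mat{r}) \cup R(\mat{r'})$ and perturb only in the $\mat{e}_{k+1}$ direction'' is exactly the fix.
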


\begin{proof}
    Suppose towards contradiction that $w_{k+1} \neq 0$ for some $\mat{w} \in G$. Suppose w.l.o.g that the index of $\mat{w}$ in $G$ is $1$. By Lemma~\ref{lem:characterization-nonempty-neighbors}, there exist $\mat{r}, \mat{r'} \in \{\pm 1\}^g$ such that $r_1 = 1, r'_1 = -1$, and $r_i = r'_i$ for all $i\neq 1$, and furthermore:
    \begin{enumerate}
        \item $S(\mat{r}), S(\mat{r'})$ are both non-empty.
        \item $f(\mat{r}) \neq f(\mat{r'})$ (for $f$ defined in Lemma~\ref{lem:multi-index-planes-num}).
    \end{enumerate}
     Therefore, also $R(\mat{r}), R(\mat{r'})$ are both non-empty. Therefore, there exists $\epsilon > 0$ so that there exists a ball $B$ with the following properties:
    \begin{enumerate}
        \item $\mat{w}$ intersects with the center of $B$.
        \item $B$ has radius $\epsilon$.
        \item $B \subset R(\mat{r}) \cup R(\mat{r'})$.
    \end{enumerate}
    Let $c$ be the center of $B$. Note that $c$ has the following properties:
    \begin{enumerate}
        \item $\langle \mat{w}, c \rangle = 0$.
        \item $\lVert c \rVert \leq 1-\epsilon$.
    \end{enumerate}
    Assume w.l.o.g that $w_{k+1} > 0$.  Consider two points $c(\mat{r}), c(\mat{r'})$. The point $c(\mat{r})$ is identical to $c$ except that $c(r)_{k+1} = c_{k+1} + \epsilon/2$. The point $c(\mat{r'})$ is identical to $c$ except that $c(r')_{k+1} = c_{k+1} - \epsilon/2$. Note that $c(\mat{r}), c(\mat{r'}) \in B$  and furthermore that
    \[
    \langle \mat{w}, c(\mat{r}) \rangle > 0, \quad \langle \mat{w}, c(\mat{r'}) \rangle < 0,
    \]
    since $\langle \mat{w}, c \rangle = 0$.
    Therefore, we have $c(\mat{r}) \in R(\mat{r})$ and $c(\mat{r'}) \in R(\mat{r'})$, and since $f(\mat{r}) \neq f(\mat{r'})$ we have
    \begin{equation}
        \Phi^\star(c(\mat{r})) \neq \Phi^\star(c(\mat{r'})). \label{eq:non-equal-regions}
    \end{equation}
    On the other hand, $c(r)_i = c(r')_i$ for all $i \leq k$. Therefore, by the multi-index assumption
    \[
    \Phi^\star(c(\mat{r})) = \phi^\star(c(\mat{r})) = \phi^\star(c(\mat{r'})) = \Phi^\star(c(\mat{r'})),
    \]
    which contradicts \eqref{eq:non-equal-regions}.
\end{proof}

We may now prove Lemma~\ref{lem:multi-index-Z-upperbound}.

\begin{proof} [Proof of Lemma~\ref{lem:multi-index-Z-upperbound}]
    It suffices to construct a $(k,\gamma_1)$-TS-packing of size $Z$. Denote $\cZ = \{\mat{r_1}, \ldots, \mat{r_Z}\}$. Let $P' = \{x'_1, \ldots, x'_Z\}$, where for all $i\in [Z]$, $x'_i$ is chosen arbitrarily from the non-empty set $S(\mat{r_i}$. Let $P = \{x_1, \ldots, x_Z\}$, where for every $i \in [Z]$, let $x_i$ be as $x'_i$, but only with its first $k$ indices. Let us show that  $P = \{x_1, \ldots, x_Z\}$ is a $(k, \gamma_1)$-TS-packing of size $Z$. First, for all $i$, $x'_i \in B(\mathbb{R}^d)$, and therefore $x_i \in B(\mathbb{R}^k)$. To show that $|P| = Z$, we need to show that $x_i \neq x_j$ for all distinct $i,j$. Indeed, since $x'_i,x'_j$ are taken from distinct $S(\mat{r_i}, S(\mat{r_j}$, there exists $\mat{w} \in G$ so that $\sign(\langle \mat{w}, x'_i \rangle) \neq \sign \langle \mat{w}, x'_j \rangle$. Therefore, $x'_i,x'_j$ must differ in at least one index where $\mat{w}$ is not zeroed. Recall that $w_t = 0$ for all $t > k$ by Lemma~\ref{lem:multi-index-k-dim-neurons}, and thus they differ in some index smaller than $k+1$, and therefore $x_i \neq x_j$. So far, we have established that $P \subset B(\mathbb{R}^k)$ and that its size is $Z$. It remains to show that total separability holds with the separation parameter $\gamma_1$. We define $G_{\leq k}$ to be the same as $G$, only that all indices larger than $k$ are removed from each vector in $G$ which is then scaled accordingly to have norm $1$, making them all $k$-dimensional. For $\mat{w} \in G$, denote the appropriate trimmed vector in $G_{\leq k}$ by $\mat{w}_{\leq k}$.  Let $i,j \in [Z]$ be distinct indices. From the same argument already given, there exists $\mat{w}_{\leq k} \in G_{\leq k}$ so that $\sign(\langle \mat{w}_{\leq k}, x_i \rangle) \neq \sign \langle \mat{w}_{\leq k}, x_j \rangle$. Finally, we claim that $\min_{i \in P} |\langle \mat{w}_{\leq k}, x_i \rangle | \geq \gamma_1$. Indeed, since $P' \subset S$ we have $\min_{i \in P'} |\langle \mat{w}, x'_i \rangle | \geq \gamma_1$, which proves the claim.
\end{proof}

\begin{proof}[Proof of Lemma~\ref{lem:multi-index-planes-num}]
    Immediate from the joint of Lemma~\ref{lem:characterization-Z-lowerbound} and Lemma~\ref{lem:multi-index-Z-upperbound}.
\end{proof}

\begin{lemma} \label{lem:multi-index-best-expert-bound}
    There exists an expert who makes at most $2\TS(k, \gamma_1)/\gamma_1^2$ many mistakes.
\end{lemma}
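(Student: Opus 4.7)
The plan is to mirror the proof of Lemma~\ref{lem:characterization-best-expert-bound} almost verbatim, with the improved bounds of Lemma~\ref{lem:multi-index-planes-num} (where $k$ replaces $d$) taking the place of Lemma~\ref{lem:characterization-planes-num}. The $(g,T)$-representing class $\cE$ used by the meta-learner is chosen as in Section~\ref{sec:characterization}, now with parameter $g = \TS(k,\gamma_1)$, since Lemma~\ref{lem:multi-index-planes-num} guarantees a subsequence $G^\star$ of neurons in the first hidden layer of $\cN^\star$ of size at most $\TS(k,\gamma_1)$ together with a labeling function $f$ that agrees with $\Phi^\star$ on $S$.

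Concretely, I would first invoke Proposition~\ref{prop:meta-best-expert} together with Lemma~\ref{lem:multi-index-planes-num} to produce an expert $E := \Expert_{G,L_G} \in \cE$ whose perceptron-style ``manuals" $\mat{p_1},\ldots,\mat{p_g}$ encode the mistake rounds of the perceptron algorithm run against each neuron $\mat{w_j}\in G^\star$. By Proposition~\ref{prop:meta-best-expert}, this gives
\[
M_1(E) \leq g/\gamma_1^2 \leq \TS(k,\gamma_1)/\gamma_1^2,
\]
and moreover the ``furthermore" clause implies that in every second-type mistake round $t$, $p_t = 0$ for all $\mat{p}\in G$, so $\Expert_{G,L_G}$ performs the update $L_G(\mat{r^{(G)}}(x_t)) = y_t$.

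Next, I would bound $M_2(E)$. In any round $t$ where $E$ makes a mistake of the second type we have $\mat{r^{(G)}}(x_t) = \mat{r^{(p(G^\star))}}(x_t)$, so the label assigned by $L_G$ is corrected to $y_t = f(\mat{r^{(G)}}(x_t))$. Hence for any later round $t'$ with $\mat{r^{(G)}}(x_{t'}) = \mat{r^{(G)}}(x_t)$ on which $E$ does not err of the first type, $E$ outputs the correct label. Consequently, the number of second-type mistakes is at most the number of distinct region-specifying vectors $\mat{r}$ with $S(\mat{r})\neq\emptyset$, i.e.\ $Z \leq \TS(k,\gamma_1)$ by Lemma~\ref{lem:multi-index-planes-num}.

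Combining the two estimates exactly as in \eqref{eq:best-expert-mb} yields
\[
M(E) = M_1(E) + M_2(E) \leq \TS(k,\gamma_1)/\gamma_1^2 + \TS(k,\gamma_1) \leq 2\TS(k,\gamma_1)/\gamma_1^2,
\]
as required. There is no genuine obstacle here: once Lemma~\ref{lem:multi-index-planes-num} is in hand, the perceptron-based construction of $G$ and the two-type mistake accounting are identical to the general case. The only point to state carefully is that the $(g,T)$-representing class $\cE$ is allowed to use $g = \TS(k,\gamma_1)$ (rather than $\TS(d,\gamma_1)$), which is legitimate because Lemma~\ref{lem:multi-index-planes-num} shows such a small $G^\star$ exists inside $\cN^\star$'s first hidden layer in the multi-index setting.
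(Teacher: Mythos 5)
Your proposal is correct and follows exactly the same route as the paper, which itself simply states that the proof of Lemma~\ref{lem:characterization-best-expert-bound} carries over verbatim with Lemma~\ref{lem:multi-index-planes-num} replacing Lemma~\ref{lem:characterization-planes-num} (and hence $k$ replacing $d$). You have merely unfolded that reference into an explicit argument, and every step matches the paper's intended reasoning.
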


\begin{proof}
    We apply the exact same arguments as in the proof of Lemma~\ref{lem:characterization-best-expert-bound}, with the only difference of applying Lemma~\ref{lem:multi-index-planes-num} instead of Lemma~\ref{lem:characterization-planes-num} in \eqref{eq:best-expert-mb}.
\end{proof}

\begin{lemma} \label{lem:multi-index-mw-mb}
    We have
    \[
    M(\WM(\cE), S) \leq \max\{ 16 \cdot \TS(k, \gamma_1)/\gamma_1^2\cdot \log (\TS(k, \gamma_1)/\gamma_1^2), C\},
    \]
    where $C$ is a universal constant.
\end{lemma}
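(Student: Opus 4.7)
The plan is to mirror the proof of Lemma~\ref{lem:characterization-mw-mb} almost verbatim, with $d$ replaced by $k$ throughout, since all the building blocks for the multi-index setting are already in place. Concretely, I will apply the multi-index versions of the two key inputs to Proposition~\ref{prop:wm-bound}: an upper bound on $|\cE|$ from Proposition~\ref{prop:meta-class-size}, and an upper bound on the number of mistakes of a best expert from Lemma~\ref{lem:multi-index-best-expert-bound}.

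First, I would instantiate the meta-learner with a minimal size $(g,T)$-representing class $\cE$ for $g = \TS(k,\gamma_1)$, exactly as was done in Section~\ref{sec:characterization} but with $\TS(d,\gamma_1)$ replaced by $\TS(k,\gamma_1)$. Proposition~\ref{prop:meta-class-size} then yields
\[
|\cE| \leq \bigl(T^{1/\gamma_1^2} + \TS(k,\gamma_1)\bigr)^{\TS(k,\gamma_1)}.
\]
Second, by Lemma~\ref{lem:multi-index-best-expert-bound}, some expert in $\cE$ makes at most $2\TS(k,\gamma_1)/\gamma_1^2$ mistakes. Plugging both estimates into the weighted majority guarantee of Proposition~\ref{prop:wm-bound} gives
\[
T \leq 3\!\left( \frac{2\TS(k,\gamma_1)}{\gamma_1^2} + \frac{\TS(k,\gamma_1)}{\gamma_1^2}\log T + \TS(k,\gamma_1)\log \TS(k,\gamma_1) \right).
\]

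To finish, I would solve this implicit inequality in $T$ exactly as in Lemma~\ref{lem:characterization-mw-mb}: for any $T > 15 \cdot \TS(k,\gamma_1)/\gamma_1^2 \cdot \log(\TS(k,\gamma_1)/\gamma_1^2)$ the $\log T$ term on the right is dominated, and the inequality fails once $\TS(k,\gamma_1)/\gamma_1^2$ exceeds a universal constant; otherwise $T$ is bounded by a universal constant. This yields the stated bound with the factor $16$ and absolute constant $C$.

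There is essentially no new obstacle here, since the genuinely new content of the multi-index setting (showing $Z \leq \TS(k,\gamma_1)$ via the $k$-dimensional projection argument of Lemma~\ref{lem:multi-index-k-dim-neurons} and Lemma~\ref{lem:multi-index-Z-upperbound}) has already been absorbed into Lemma~\ref{lem:multi-index-best-expert-bound}. The only care needed is bookkeeping of constants when solving the implicit inequality in $T$; this is entirely routine and identical to the analogous step in Lemma~\ref{lem:characterization-mw-mb}, so I would simply refer the reader to that calculation.
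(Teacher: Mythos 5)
Your proof is correct and follows essentially the same approach as the paper's, which also simply invokes the proof of Lemma~\ref{lem:characterization-mw-mb} with $\TS(d,\gamma_1)$ replaced by $\TS(k,\gamma_1)$ and Lemma~\ref{lem:characterization-best-expert-bound} replaced by Lemma~\ref{lem:multi-index-best-expert-bound}. You merely make the substitution explicit by writing out the two inputs to Proposition~\ref{prop:wm-bound} and the resulting implicit inequality in $T$, which matches the paper's intent exactly.
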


\begin{proof}
    We apply the same arguments used in the proof of Lemma~\ref{lem:characterization-mw-mb}, with only replacing $\TS(d, \gamma_1)$ with $\TS(k, \gamma_1)$, and Lemma~\ref{lem:characterization-best-expert-bound} with Lemma~\ref{lem:multi-index-best-expert-bound}.
\end{proof}

\begin{proof}[Proof of Theorem~\ref{thm:intro-multi-index}]
    Immediate from Lemma~\ref{lem:multi-index-mw-mb}.
\end{proof}

\section{Learning with large margin everywhere} \label{sec:everywhere-margin}

Fix the input sequence $S \subset (\mathbb{R}^d \times \cY)\strut^*$. Recall the definition of a neuron's margin from Section~\ref{sec:characterization}. For any neuron $\boldsymbol{W}^{(\ell,i)}$ in $\cN^\star$, let
\[
\gamma_{\boldsymbol{W}^{(\ell,i)}}(S) = \min_{\boldsymbol{x}^{(\ell)}: x \in S} | \langle \boldsymbol{W}^{(\ell,i)}, \boldsymbol{x}^{(\ell)} \rangle|.
\]

We now define the minimal margin in the entire net:
\[
\gamma(\cN^\star, S) = \min_{\ell \in \{0, \ldots, L\}} \min_{i \in [d_{j+1}]} \gamma_{\boldsymbol{W}^{(\ell,i)}}(S). 
\]
When the identity of $\cN^\star$ or $S$ (or both) is clear, we may omit them from the notation.

We will now prove Theorem~\ref{thm:intro-everywhere-margin}. In section~\ref{sec:everywhere-margin-single-hidden}, we explain how to prune the network based on its minimal margin in the case of a network that implements binary classification and has a single hidden layer. Then, in Section~\ref{sec:everywhere-margin-general-case} we explain how to extend the technique to the general case. Finally, in Section~\ref{sec:everywhere-margin-algorithm} we give the learning algorithm itself.

\subsection{Margin-based pruning with a single hidden layer and two labels} \label{sec:everywhere-margin-single-hidden}
Let $\cN^\star$ be the target net, and suppose that $\cN^\star$ has a single hidden layer of width $\ell$, and that it implements a binary function $\Phi^\star\colon B(\mathbb{R}^d) \to \{\pm 1\}$ (and therefore has a single output neuron). The collection of hidden neurons is denoted by $\cL = (\mat{v_1}\ldots, \mat{v_\ell})$.

The main idea allowing the mistake bound of Theorem~\ref{thm:intro-everywhere-margin} is that the hidden layer of $\cN^\star$ in fact contains only $\tilde{O}(1/\gamma^4)$ ``important" neurons. As usual, by ``important", we mean that for every $x \in S$, $\Phi^\star(x)$ depends only on their output. This is formalized and proved in the following lemma, via uniform convergence.

\begin{lemma} \label{lem:uc-shallow}
    There exists a sequence $G = (\mat{w_1}, \ldots \mat{w_g})$ taken from the $\ell$ neurons in the hidden layer of $\cN^\star$ such that:
    \begin{enumerate}
        \item $g = \tilde{O}(1/\gamma^4)$.
        \item For all $x \in S$:
                \[
                \Phi^\star(x) = \sign \mleft( \sum_{i=1}^g \sign(\langle \mat{w_i}, x \rangle)  \mright).
                \]
    \end{enumerate}
\end{lemma}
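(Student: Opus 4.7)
}

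The plan is to follow the uniform convergence sketch given in Section~\ref{sec:tech-everywhere-margin}. Let $\mat{u}$ denote the (unit-norm) weight vector of the output neuron, so $u_i$ is the weight placed on the $i$-th hidden neuron $\mat{v_i}$. For every $x \in S$, define the function $h_x\colon \cL \to \{\pm 1\}$ by $h_x(\mat{v}) = \sign(\langle x, \mat{v}\rangle)$, and consider the concept class $\cH = \{h_x : x \in S\}$ over the domain $\cL$. The margin assumption tells us that $|\langle x, \mat{v}\rangle| \geq \gamma$ for every $x \in S$ and every $\mat{v} \in \cL$. Since both $x$ and $\mat{v}$ lie in the unit ball and $\cH$ is realized by homogeneous linear classifiers with margin $\gamma$, the Perceptron mistake bound of \cite{rosenblatt1958perceptron, novikoff1962perceptrons} shows that $\cH$ is online learnable with mistake bound $1/\gamma^2$, and hence by \cite{littlestone1988learning} we get $\VC(\cH) \leq 1/\gamma^2$.

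Next, I would introduce a distribution $D$ on $\cL$ whose probabilities are proportional to the magnitudes of the output weights: set $D(\mat{v_i}) = |u_i|/\|\mat{u}\|_1$. Define the ``signed'' version of $h_x$ as $g_x(\mat{v_i}) = \sign(u_i) \cdot h_x(\mat{v_i})$. Multiplying by a fixed $\pm 1$ labeling does not change the VC dimension, so the class $\cG = \{g_x : x \in S\}$ also satisfies $\VC(\cG) \leq 1/\gamma^2$. A direct calculation then gives
\[
Q_D(g_x) = \sum_i \frac{|u_i|}{\|\mat{u}\|_1} \sign(u_i) h_x(\mat{v_i}) = \frac{1}{\|\mat{u}\|_1} \sum_i u_i \sign(\langle \mat{v_i}, x\rangle).
\]
The numerator is exactly $\sqrt{\ell}$ times the pre-activation of the output neuron on $x$, so its absolute value is at least $\gamma \sqrt{\ell}$, while the denominator satisfies $\|\mat{u}\|_1 \leq \sqrt{\ell}\,\|\mat{u}\|_2 = \sqrt{\ell}$ by Cauchy--Schwarz. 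Combining the two yields $|Q_D(g_x)| \geq \gamma$, and crucially $\sign(Q_D(g_x)) = \Phi^\star(x)$.

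Now I would apply Theorem~\ref{thm:uc} to the class $\cG$ with accuracy parameter $\eps = \gamma/2$. Drawing $m$ i.i.d.\ samples $\mat{w_1},\ldots,\mat{w_m}$ from $D$, the probability that some $x \in S$ suffers deviation exceeding $\gamma/2$ is at most $8(em\gamma^2)^{1/\gamma^2} e^{-m\gamma^2/128}$. For $m = \tilde{O}(1/\gamma^4)$ this quantity drops below $1$, so there exists at least one realization of the sample on which $|\hat{Q}_{G'}(g_x) - Q_D(g_x)| < \gamma/2$ holds uniformly over $x \in S$. On such a sample, the empirical sign agrees with $\sign(Q_D(g_x)) = \Phi^\star(x)$ for every $x$. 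Finally, to remove the sign factor $\sign(u_i)$ and match the statement of the lemma exactly, I would replace each $\mat{w_j}$ chosen from $\mat{v_i}$ by $\sign(u_i)\mat{w_j}$ (these are still unit vectors in the first hidden layer, or differ from such by a sign), absorbing the sign into the neuron itself, so that $\sign(\langle \mat{w_j}, x\rangle) = g_x(\mat{v_i})$.

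The routine parts are the VC bound and the calculation of $Q_D(g_x)$; the only place that requires a touch of care is the conversion of the output-neuron margin into a bound on $|Q_D(g_x)|$, which relies on the normalization $\|\mat{u}\|_2 = 1$ together with the $1/\sqrt{\ell}$ rescaling of the hidden layer's output described in Section~\ref{sec:prelim-neural}. The main conceptual obstacle is recognizing that the ``right'' distribution $D$ to feed into uniform convergence is the one proportional to $|u_i|$, so that the output pre-activation becomes an expectation of a VC-bounded family of functions on $\cL$; once this is set up, the rest is essentially a textbook application of Theorem~\ref{thm:uc}.
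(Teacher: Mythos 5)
Your proof is correct and follows essentially the same route as the paper: bound $\VC(\{h_x : x \in S\})$ by $1/\gamma^2$ via Perceptron/Littlestone, reweight the hidden neurons by the output layer to make the output pre-activation an expectation, and apply Theorem~\ref{thm:uc} with accuracy $\gamma/2$. Your handling of the normalization is actually a touch more careful than the paper's: the paper sets $D(\mat{v_i}) = o_i/\sqrt{\ell}$, which sums to $1$ only when the output weights are uniform, whereas your $D(\mat{v_i}) = |u_i|/\|\mat{u}\|_1$ is always a genuine probability distribution, and the Cauchy--Schwarz step $\|\mat{u}\|_1 \le \sqrt{\ell}$ is exactly what is needed to recover $|Q_D(g_x)| \geq \gamma$; similarly, your sign-absorption $g_x(\mat{v_i}) = \sign(u_i)h_x(\mat{v_i})$ is just the explicit version of the paper's ``w.l.o.g.\ all output weights non-negative,'' both amounting to flipping the sign of a hidden neuron.
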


\begin{proof}
    We define a binary hypothesis class $\cH$ as follows. The domain of instances $\cX$ is all $\ell$ neurons in the hidden layer of $\cN^\star$. The input instances $S = x_1, \ldots, x_T$ define the hypothesis class $\cH = \{h_{x_1}, \ldots, h_{x_T}\}$ in the natural way: For an instance $x \in S$ and a neuron $\mat{v} \in \cX$, $h_x(\mat{v}) = \sign(\langle x, \mat{v} \rangle)$. By assumption,  $|\langle x, \mat{v} \rangle| \geq \gamma$ for all $\mat{v} \in \cX$, $x \in S$. Therefore, the online perceptron algorithm will make at most $1/\gamma^2$ many mistakes on any input sequence realizable by $\cH$. It is known ,for example by \cite{littlestone1988learning}, that a uniform mistake bound in the standard online setting for all realizable sequences upper bounds the $\VC$-dimension of the class. Therefore $\VC(\mathcal{H}) \leq 1/\gamma^2$.

    We now define a distribution $D$ over $\cX = \{\mat{v_1}, \ldots, \mat{v_{\ell}}\}$ (the hidden neurons). Let $\mat{o}$ be the output neuron of $\cN^\star$, and suppose w.l.o.g that all entries of $\mat{o}$ are non-negative. We define $D$ to be proportional to the weights of $\mat{o}$. Namely, for every $i \in [\ell]$ define
    \[
    D(\mat{v_{i}}) = o_i/\sqrt{\ell}.
    \]
    Let $\mat{r}(x) \in \{\pm 1\}^{\ell}$ be the output value vector of the hidden neurons when the input instance is $x \in S$, and let $u(\mat{r})$ be the real value calculated by the output neuron when $\mat{r}$ is the output of the hidden neurons. The reason we chose $D$ as the distribution above, is that $Q_D(x)$ (as defined in Section~\ref{sec:preliminaries-uc}) is precisely $u(\mat{r}(x))$ for any $x\in S$:
    \begin{align*}
        u(\mat{r}(x)) &= \sum_{i \in [\ell]} o_i\cdot \frac{1}{\sqrt{\ell}} r_i(x)  \\
                      &= \sum_{i \in [\ell]} D(\mat{v_i}) \cdot r_i(x)  \\
                      &= \sum_{i \in [\ell]} D(\mat{v_i}) \cdot \sign(\langle \mat{v_i}, x \rangle)\\
                      &= \mathbb{E}_{\mat{v_i}\sim D} [\sign(\langle \mat{v_i}, x \rangle)] \\
                      &= \mathbb{E}_{\mat{v_i}\sim D} [h_x(\mat{v_i})]\\
                      &= Q_D(x).
    \end{align*}
    Theorem~\ref{thm:uc} implies that if we draw an i.i.d sequence $G$ of $g = \ceil*{1000 \frac{\VC(\cH)}{\gamma^2} \log \mleft( \frac{\VC(\cH)}{\gamma^2} \mright)}$ many neurons (possibly with repetitions) from $D$, we have
    \[
    \Pr_{G= \mat{w_1}, \ldots \mat{w_g} \sim D}\mleft[ \sup_{h_x \in \cH} |Q_D(h_x) -  \hat{Q}_G(h_x)| > \gamma/2 \mright] \leq 3/4.
    \]
    Therefore, there exists a sequence $G$ of size $g$ of hidden neurons such that
    \begin{equation} \label{eq:uc-bound}
        \sup_{h_x \in \cH} |Q_D(h_x) -  \hat{Q}_G(h_x)| \leq \gamma/2.
    \end{equation}
    Note also that $\Phi^\star(x) = \sign(u(\mat{r}(x)))$.
    Therefore, for all $x\in S$ we have
    \begin{align*}
        \Phi^\star(x) &=\sign \mleft( u(\mat{r}(x)) \mright) \\
                      &= \sign \mleft( Q_D(h_x) \mright) \\
                      &= \sign(\hat{Q}_G(h_x))\\
                      &= \sign \mleft( \sum_{i=1}^g \sign(\langle \mat{w_i}, x \rangle) \mright),                    
    \end{align*}
    where the third equality is due to \eqref{eq:uc-bound} and the margin assumption: By the margin assumption and the second equlaity, we have $|Q_D(h_x)| = |u(\mat{r}(x))| \geq \gamma$, and therefore for any $q\in [Q_D(h_x)- \gamma/2, Q_D(h_x) + \gamma/2]$, it holds that $\sign \mleft( q \mright) = \sign \mleft( Q_D(h_x) \mright)$. Equation~\ref{eq:uc-bound} implies that  $\hat{Q}_G(h_x) \in [Q_D(h_x)- \gamma/2, Q_D(h_x) + \gamma/2]$.
    Recall that $\VC(\cH) = 1/\gamma^2$ and thus $g = \tilde{O}(1/\gamma^4)$. That completes the proof.
\end{proof}

\subsection{Margin-based Pruning in the general case} \label{sec:everywhere-margin-general-case}
We now adapt the approach from previous section that handled shallow networks and binary output to the general case.
The main building block we use to extend the pruning result from the previous section to general networks is the function $f_{g,L}\colon \{\pm 1\}^{g^L} \to \{\pm 1\}$, where $g, L \in \mathbb{N}$, which we define inductively as follows. For $i\in \{0, \ldots, L-1\}$, let $f_{g,L}^{(i)}\colon \{\pm 1\}^{g^{L-i}} \to \{\pm 1\}^{g^{L-(i+1)}}$ be defined as follows. Let $\mat{r}\in \{\pm 1\}^{g^{L-i}}$. For any index $j$ of the output $f_{g,L}^{(i)}(\mat{r})$:
\[
f_{g,L}^{(i)}(\mat{r})_j = \sign\mleft(\sum_{i=g\cdot (j-1)+1}^{g\cdot j} r_i \mright).
\]

In simple words, $f_{g,L}^{(i)}(\mat{r})$ takes every consecutive $g$ entries in $\mat{r}$ and uses the sign of their majority as a new entry in the output vector.

We now show how to use $f_{g,L}$ to extend the result from the previous section to the case of a general network. Suppose that the target network $\cN^\star$ has $L$ hidden layers.

\begin{lemma} \label{lem:uc-general-depth-binary}
    Fix an output neuron in $\cN^\star$, and let $o(x) \in \{\pm 1\}$ be its value when $x\in S$ is the input. Let $g$ be as in the proof of Lemma~\ref{lem:uc-shallow}. Then there exists a sequence $G$ of $g^L$ neurons from the first hidden layer of $\cN^\star$, such that:
    \[
    f_{g,L}(\mat{r}(x)) = o(x) 
    \]
    for all $x\in S$, where $\mat{r}(x)$ is, as usual, the output vector of the neurons in $G$.
\end{lemma}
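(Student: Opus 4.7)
The plan is to prove the claim by induction on the depth $L$. The base case $L=1$ is exactly Lemma~\ref{lem:uc-shallow} applied to the single-hidden-layer sub-network consisting of the only hidden layer of $\cN^\star$ together with the fixed output neuron $o$; note that $f_{g,1}(\mat{r}(x)) = \sign(\sum_{i=1}^{g} r_i(x))$ matches the lemma's conclusion verbatim.

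For the inductive step, assume the statement holds for depth $L-1$ and fix an output neuron $o$ of the depth-$L$ target net $\cN^\star$. First, adapt the argument of Lemma~\ref{lem:uc-shallow} by placing $o$ on top of layer $L$ and treating the layer-$L$ inputs $\{\mat{x^{(L-1)}}:x\in S\} \subset \{\pm 1/\sqrt{d_{L-1}}\}^{d_{L-1}}$ as the ``input instances''. These vectors have unit norm, and the extended margin assumption ensures that every layer-$L$ neuron classifies them with margin at least $\gamma$; hence the Perceptron bound gives $\VC$-dimension $\leq 1/\gamma^2$ for the associated hypothesis class over layer-$L$ neurons, and the uniform-convergence sampling (with the distribution proportional to the weights of $o$) produces $g$ neurons $u'_1,\ldots,u'_g$ at layer $L$ such that
\[
o(x) \;=\; \sign\mleft(\sum_{j=1}^{g} \sign(\langle u'_j, \mat{x^{(L-1)}} \rangle)\mright) \qquad \text{for every } x \in S.
\]

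Next, for each $j\in[g]$, view $u'_j$ as the single output neuron of the depth-$(L-1)$ sub-network that retains only the first $L-1$ hidden layers of $\cN^\star$ and appends $u'_j$ on top. This sub-network inherits the extended margin property, so the inductive hypothesis yields a sequence $G_j$ of $g^{L-1}$ neurons from the first hidden layer whose output vector $\mat{r}_j(x)$ satisfies $f_{g,L-1}(\mat{r}_j(x)) = \sign(\langle u'_j, \mat{x^{(L-1)}} \rangle)$ for all $x\in S$. Let $G = G_1 \cdot G_2 \cdots G_g$ be the concatenation, of size $g^L$, with output vector $\mat{r}(x) = \mat{r}_1(x)\cdots\mat{r}_g(x)$. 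Unwinding the definition of $f_{g,L}$, one sees that its first $L-1$ reductions $f_{g,L}^{(0)},\ldots,f_{g,L}^{(L-2)}$ act on each block of length $g^{L-1}$ exactly as the corresponding reductions of $f_{g,L-1}$, while the last reduction takes sign-of-sum over the $g$ resulting bits; this gives the identity
\[
f_{g,L}(\mat{r}) \;=\; \sign\mleft(\sum_{j=1}^{g} f_{g,L-1}(\mat{r}_j)\mright)
\]
for any $\mat{r}$ built by concatenating $g$ blocks $\mat{r}_j\in\{\pm 1\}^{g^{L-1}}$. Chaining the two displayed identities yields $f_{g,L}(\mat{r}(x))=o(x)$ for every $x\in S$, completing the induction.

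The main obstacle is verifying that the hypotheses needed to rerun the Lemma~\ref{lem:uc-shallow} argument at layer $L$---unit-norm instances, per-neuron margin $\gamma$, a Perceptron-style $\VC$ bound, and a weight-proportional distribution whose expectation $Q_D(\cdot)$ matches the pre-activation of $o$---transport cleanly from the input layer to an interior layer. Everything else (the recursive identity for $f_{g,L}$ and the blockwise concatenation of the $G_j$'s) is bookkeeping.
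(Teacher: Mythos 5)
Your proof is correct and takes essentially the same approach as the paper: peel off layers from the output neuron backwards, applying Lemma~\ref{lem:uc-shallow} once per layer and using the extended margin to justify each re-application. You merely phrase the paper's iterative ``repeat the argument all the way up to the first layer'' as a formal induction on $L$, and you spell out the recursive identity $f_{g,L}(\mat{r}) = \sign\bigl(\sum_{j} f_{g,L-1}(\mat{r}_j)\bigr)$ that the paper leaves implicit.
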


\begin{proof}
    The proof is by repeatedly applying Lemma~\ref{lem:uc-shallow} from the output neuron backwards.
    Lemma~\ref{lem:uc-shallow} immediately implies that in the $L$'th hidden layer, there exists a sequence $G_{L} = (\mat{w^{(L)}_1}, \ldots, \mat{w^{(L)}_g})$ of $g$ many neurons so that the sign of the sum of their outputs when $x$ is the input gives $o(x)$ for any $x\in S$. Now, since $\gamma$ is the minimal margin in the entire net, we can relate to each $\mat{w^{(L)}_i}$ as an output neuron, and again by Lemma~\ref{lem:uc-shallow}, in the $(L-1)$'th hidden layer, there exists a sequence $G_{L-1} = (\mat{w^{(L-1)}_1}, \ldots, \mat{w^{(L-1)}_g})$ of $g$ many neurons so that the sign of the sum of their outputs when $x$ is the input equals to the output of neuron $\mat{w^{(L)}_i}$ when $x$ is the input, for any $x \in S$. Repeating the argument all the way up to the first layer gives the stated result.
\end{proof}

Lemma~\ref{lem:uc-general-depth-binary} implies that a sequence of $\tilde{O}(1/\gamma^{4L})$ neurons from the first hidden layer suffices to calculate the output of a single output neuron. Therefore, it is clear that $\tilde{O}\mleft( \frac{\log |\cY}{\gamma^{4L}} | \mright)$ many neurons suffice to calculate the output of all output neurons.
It remains to learn those $\tilde{O}\mleft( \frac{\log |\cY}{\gamma^{4L}} | \mright)$ neurons, and then just use $f_{g,L}$ to calculate the output neurons. This is done by our meta-learner from Section~\ref{sec:meta}.

\subsection{Learning algorithm with margin everywhere}\label{sec:everywhere-margin-algorithm}

As mentioned earlier, since there are $\ceil*{\log |\cY|}$ output neurons, Lemma~\ref{lem:uc-general-depth-binary} implies that $\tilde{O}\mleft( \frac{\log |\cY|}{\gamma^{4L}} \mright)$ many neurons are suffice to calculate the output of all output neurons by calculating $f_{g',L}(\mat{r^{(G)}}(x))$ for every sequence $G$ of size $g' = \tilde{O}(1/\gamma^{4L})$ of neurons that  suffice to calculate the output of a single output neuron. So we use our meta-learner with a $(g,T)$-representing class $\cE$ of minimal size, with $g = C \cdot \frac{\log |\cY|}{\gamma^{4L}} \log (1/\gamma^L)$, where $C$ is some large enough universal constant, and $L_G$ being the function that calculates every output neuron separately by the value of $f_{g',L}$ on the appropriate sequence of neurons given in Lemma~\ref{lem:uc-general-depth-binary}.

\begin{lemma} \label{lem:everywhere-margin-wm-bound}
    We have
    \[
    \M(\WM(\cE), S) = \tilde{O} \mleft( \frac{\log |\cY|}{\gamma^{4 L + 2}(S)} \mright).
    \]
\end{lemma}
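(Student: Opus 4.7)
The plan is to apply Proposition~\ref{prop:wm-bound} to $\WM(\cE)$, which reduces the task to (i) bounding $|\cE|$ and (ii) exhibiting a single expert in $\cE$ with few mistakes. For (i), Proposition~\ref{prop:meta-class-size} gives $|\cE| \leq (T^{1/\gamma^2}+g)^g$ (since $\gamma \leq \gamma_1$, the $1/\gamma^2$ slot is a valid upper bound on the number of $1$'s in each $\mat{p}$ used to represent a target first-layer neuron via the perceptron); hence $\log|\cE| = \tilde{O}(g\log T/\gamma^2 + g\log g)$ with $g = \tilde{O}(\log|\cY|/\gamma^{4L})$.

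For (ii), I would apply Lemma~\ref{lem:uc-general-depth-binary} separately to each of the $\lceil \log|\cY|\rceil$ output neurons of $\cN^\star$, obtaining sequences $G^\star_1,\ldots,G^\star_{\lceil\log|\cY|\rceil}$, each of size $g' = \tilde{O}(1/\gamma^{4L})$, from the first hidden layer of $\cN^\star$, such that $f_{g',L}$ applied to the signs of $G^\star_j$ on $x$ reproduces the $j$th output bit of $\Phi^\star(x)$ for every $x \in S$. Concatenating yields $G^\star$ of total size $g = \tilde{O}(\log|\cY|/\gamma^{4L})$, and the expert $E^\star = \Expert_{G^\star, L_{G^\star}}\in \cE$ whose $L_{G^\star}$ is the fixed blockwise-$f_{g',L}$ function of Section~\ref{sec:everywhere-margin-algorithm} has zero mistakes of the second type---whenever the region vector $\mat{r}^{(G^\star)}(x_t)$ matches the true signs of the corresponding target neurons, Lemma~\ref{lem:uc-general-depth-binary} gives $L_{G^\star}(\mat{r}^{(G^\star)}(x_t)) = \Phi^\star(x_t)$ directly---while by Proposition~\ref{prop:meta-best-expert} (applied with $\gamma_1 \geq \gamma$) its mistakes of the first type total at most $g/\gamma^2 = \tilde{O}(\log|\cY|/\gamma^{4L+2})$.

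Plugging (i) and (ii) into Proposition~\ref{prop:wm-bound} yields $T \leq 3(g/\gamma^2 + g\log T/\gamma^2 + g\log g)$, which (exactly as in Lemma~\ref{lem:characterization-mw-mb}) rearranges to $T = \tilde{O}(g/\gamma^2) = \tilde{O}(\log|\cY|/\gamma^{4L+2})$. The step I expect to require the most care is verifying that Proposition~\ref{prop:meta-best-expert}'s bound on $M_1(E^\star)$ transfers to the present setting, where $L_{G^\star}$ is fixed rather than dynamically updated as in Figure~\ref{fig:expert}: inspecting the proof shows that $M_1$ depends only on the perceptron-style convergence of the $\Neuron_{\mat{p}}$ subroutines and not on how regions are labeled, so the bound indeed carries over.
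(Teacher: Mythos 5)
Your proof is correct and follows essentially the same route as the paper's: bound $|\cE|$ via Proposition~\ref{prop:meta-class-size}, exhibit a best expert with $M_2 = 0$ and $M_1 \leq g/\gamma^2$ by combining Lemma~\ref{lem:uc-general-depth-binary} (applied bit-by-bit to the $\lceil\log|\cY|\rceil$ output neurons) with Proposition~\ref{prop:meta-best-expert}, and close with the $\WM$ recurrence as in Lemma~\ref{lem:characterization-mw-mb}. Your closing remark about $L_{G^\star}$ being "fixed rather than dynamically updated" is a slight misreading of Figure~\ref{fig:expert}—the step-4 update always runs—but it is immaterial: for $E^\star$, the update fires only when all $p_t = 0$, in which case (by the "furthermore" of Proposition~\ref{prop:meta-best-expert}) the region vector is correct and the initialized $L_{G^\star}$ already equals $y_t$, so the update is a no-op; your conclusion that $M_2(E^\star) = 0$ stands either way.
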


\begin{proof}
    By Proposition~\ref{prop:meta-class-size}, we have
    \[
    |\cE| \leq (T^{1/\gamma^2} + g)^{g}
    \]
    where $g = C \cdot \frac{\log |\cY|}{\gamma^{4L}} \log (1/\gamma^L)$.
    By Lemma~\ref{lem:uc-general-depth-binary}, there exists an expert $E\in \cE$ for which $M_2(E) = 0$ and thus $M(E) \leq g/\gamma^2$.
    Therefore, the mistake bound guarantee of $\WM$ implies that
    \[
    T \leq 3 \mleft(\frac{g}{\gamma^2} + \frac{g}{\gamma^2} \log T + g \log g \mright).
    \]
    The above inequality is a contradiction for any 
    \[
    T > 9 C \frac{\log |\cY|}{\gamma^{4L + 2}} \log \mleft( \frac{\log |\cY|}{\gamma^L} \mright),
    \]
    which proves the stated bound.
\end{proof}

\begin{proof}[Proof of Theorem~\ref{thm:intro-everywhere-margin}]
    Immediate from Lemma~\ref{lem:everywhere-margin-wm-bound}.
\end{proof}

\section{Adapting to the correct parameters} \label{sec:adapt-to-margin}

Our mistake bounds are of the form $1/\gamma^b$, where $\gamma < 1$ is the relevant definition of margin and $b \geq 1$ is a function of $\gamma$ and other parameters of the problem, like the TS-packing number, the depth of the network, etc. Our analysis assumes that upper bounds on $1/\gamma, b$ are given, and the mistake bounds actually depend on those bounds rather than on the true correct values of those parameters. It is thus natural to seek a solution for the case that the known bounds on $1/\gamma, b$ are very loose.

In the case where only one of $\gamma,b$ is known to the learner, relatively standard doubling tricks may be used to recover the original mistake bound (up to constant factors) obtained when both are known. In this section, we show that even if both $\gamma,b$ are unknown to the learner, a mistake bound of roughly $1/\gamma^{4b}$ can be obtained. It remains open to prove or disprove that the original mistake bound $1/\gamma^b$ is achievable  (up to constant, or even logarithmic factors) when both $\gamma,b$ are unknown.

\begin{figure}
    \centering
    \begin{tcolorbox}
    \begin{center}
        \textsc{$\Adap(\Lrn)$}
    \end{center}
    \textbf{Input:} A learner $\Lrn$ requiring knowledge of $\gamma, b$.
    \\
    \textbf{Initialize:} $X := 2, b:=X,  \gamma: =  1/X^{1/b}$.
    \\
    \begin{enumerate}
        \item \label{itm:new-exec} Execute $\Lrn$ with parameters $\gamma$, $b$ until it makes $X+1$ many mistakes.
        \item If $b > 1$:
        \begin{enumerate}
            \item Set $b:= b-1$.
            \item Set $\gamma := 1/X^{1/b}$.
            \item Repeat Item~\ref{itm:new-exec}.
        \end{enumerate}
        \item Else, if $b = 1$:
        \begin{enumerate}
            \item Set $X:= X^2$.
            \item Set $b:=X,  \gamma: =  1/X^{1/b}$.
            \item Repeat Item~\ref{itm:new-exec}.
        \end{enumerate}
    \end{enumerate}
    \end{tcolorbox}
    \caption{An adaptive algorithm.} 
    \label{fig:adap}
\end{figure}

The adaptive mistake bound is proven by the $\Adap(\Lrn)$ algorithm given in Figure~\ref{fig:adap}, where $\Lrn$ is the learner requiring knowledge of $\gamma, b$. Let us briefly overview algorithm $\Adap(\Lrn)$. Let $\gamma^\star, b^\star$ so that $M = 1/{\gamma^\star}^{b^\star}$ is the guaranteed mistake bound of $\Lrn$ when $\gamma^\star, b^\star$ are known.
$\Adap(\Lrn)$ maintains a guess $X$ of $M$, and for every guess $X$, it tries enough combinations of $\gamma, b$ so that $1/\gamma^b = X$. For any such combination, it runs $\Lrn$ and stops its execution if $X+1$ mistakes are made. At first, we have $X = 2$. After trying all combinations of $\gamma, b$ that will be shortly defined, $\Adap(\Lrn)$ updates $X := X^2$, and starts again with the new guess of $M$. For every guess $X$ of $M$, $\Adap(\Lrn)$ does the following: Initialize $b:= X, 1/\gamma:= X^{1/b}$ and execute $\Lrn$ with those parameters. If $X+1$ mistakes are made, it updates
$b := b-1, 1/\gamma:= X^{1/b}$, and try again with the new parameters. $\Adap(\Lrn)$ repeats this process until $b = 1$. If $X+1$ mistakes are made with $b=1$, it updates the guess of $M$ from $X$ to $X^2$, and repeat the process.

\begin{proposition}
    Suppose that $\Lrn$ is an algorithm with guaranteed mistake bound $M = 1/\gamma^b$ on a sequence $S$, assuming that $\gamma$ is a lower bound on the relevant margin definition, and $b$ is an upper bound on the relevant exponent. Then $\Adap(\Lrn)$ described in Figure~\ref{fig:adap} has mistake bound $O(M^4)$ on $S$ even if no such bounds $\gamma, b$ are known. 
\end{proposition}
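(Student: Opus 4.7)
Write $\gamma^\star, b^\star$ for the true parameters, so $M = 1/{\gamma^\star}^{b^\star}$, and recall that $\Lrn$ invoked with any $\gamma \le \gamma^\star$ and any integer $b \ge b^\star$ makes at most $1/\gamma^b$ mistakes on the remaining stream. The guesses $X_k$ maintained by $\Adap(\Lrn)$ satisfy $X_0 = 2$ and $X_{k+1} = X_k^2$, i.e., $X_k = 2^{2^k}$; let $X^\star = X_{k^\star}$ be the smallest guess with $X^\star \ge M$, so that $X^\star = X_{k^\star - 1}^2 < M^2$. The plan is to show that at each guess the worst-case cost is $O(X^2)$, that once the guess reaches $X^\star$ the algorithm settles on a valid $(\gamma,b)$ pair, and then to sum the contributions over all guesses.

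The key structural fact is that at guess $X$ the choice $\gamma = 1/X^{1/b}$ forces $1/\gamma^b = X$, so an integer $b$ produces valid parameters iff $b \ge b^\star$ and $b \le \log X/\log(1/\gamma^\star)$. Since $b^\star = \log M/\log(1/\gamma^\star)$, such a valid integer exists precisely when $X \ge M$. Because the loop decrements $b$ starting from $b = X$, it reaches a valid $b$ before ever descending below $b^\star$. At that $b$, $\Lrn$ makes at most $1/\gamma^b = X$ mistakes on the rest of the stream, which is strictly below the stopping threshold $X+1$, so the loop never decrements again. In the worst case, each guess $X_k$ cycles through all of $b = X_k, X_k - 1, \ldots, 1$, and each attempt contributes at most $X_k + 1$ mistakes, giving at most $X_k(X_k + 1) \le 2 X_{k+1}$ mistakes per guess. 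Summing the failing guesses $k < k^\star$ with the doubly-exponential geometric collapse yields
\begin{equation*}
\sum_{k=0}^{k^\star - 1} 2 X_{k+1} \;\le\; 4 X_{k^\star} \;=\; 4 X^\star \;\le\; 4 M^2,
\end{equation*}
and the successful guess $X^\star$ contributes at most $X^\star(X^\star + 1) \le 2 X^{\star 2} \le 2 M^4$. Adding these gives a total of $O(M^4)$ mistakes.

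The main subtlety is verifying that the decrementing schedule at $X^\star$ actually hits an integer $b$ in $[b^\star, \log X^\star/\log(1/\gamma^\star)]$ rather than stepping past it. Because the exponents $b^\star$ arising in this paper are integers ($b^\star \in \{2, k+2, 4L+2\}$ in Theorems~\ref{thm:intro-characterization}, \ref{thm:intro-multi-index}, and \ref{thm:intro-everywhere-margin}), the integer $\lfloor \log X^\star/\log(1/\gamma^\star)\rfloor$ lies in the desired range and is visited by the loop; for a non-integer exponent the same argument applies after replacing $b^\star$ by $\lceil b^\star \rceil$, which at worst inflates $M$ by a constant factor and leaves the $O(M^4)$ bound intact. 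The other point worth checking, but essentially routine given the above, is that the doubly-exponential schedule $X_{k+1} = X_k^2$ is what makes the pre-$X^\star$ tail of wasted mistakes a lower-order term compared to $X^{\star 2}$.
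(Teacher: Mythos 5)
Your argument takes a genuinely different route from the paper's. The paper assumes, as the worst case, that every $(\gamma, b)$ pair at the first guess $X \ge M$ fails, then uses a bracketing argument on the consecutive pairs $(1/\gamma_j, b_j)$ and $(1/\gamma_{j+1}, b_{j+1})$ to show that the decrement schedule at the \emph{next} guess $X^2$ must visit a valid pair $(1/\gamma_{j'}, b_{j'})$ with $(1/\gamma_{j'})^{b_{j'}} = X^2$, at which point the loop stalls. You instead argue directly that when $b^\star$ is an integer, the schedule at the first guess $X^\star \ge M$ already visits $b = b^\star$, where $\gamma = 1/(X^\star)^{1/b^\star} \le \gamma^\star$, so $\Lrn$ makes at most $X^\star$ mistakes and the loop stops one doubling earlier than in the paper's argument. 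For the integer exponents that actually appear in this paper, your accounting cleanly yields $O(M^4)$, and in fact the paper's written computation contains a slip (it asserts $X^2 \le M^2$ and that the final guess is at most $M^2$, whereas $X \ge M$ makes $X^2 \ge M^2$ and the final guess in its argument is $X^2 < M^4$), so your route is, if anything, a cleaner way to the stated bound in the integer case.

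There is, however, a genuine error in your non-integer handling. Replacing $b^\star$ with $\lceil b^\star \rceil$ does \emph{not} inflate $M$ by a constant factor: the inflation is $(1/\gamma^\star)^{\lceil b^\star \rceil - b^\star}$, which can be as large as $1/\gamma^\star$, and $1/\gamma^\star$ is of order $M^{1/b^\star}$ (up to $M$ itself when $b^\star = 1$), not $O(1)$. In the worst case the inflated $M$ becomes $\Theta(M^2)$ and your bound degrades to $O(M^8)$. Since the proposition as stated does not restrict $b$ to be an integer, this is a real gap, and it is precisely the case the paper's bracketing-and-move-to-$X^2$ idea is meant to address (though even there the honest count appears to give $O(M^8)$). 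A second, smaller point you leave implicit is that the decrement loop must actually reach $b = b^\star$, i.e.\ $b^\star \le X^\star$; this holds whenever $\gamma^\star \le 1/2$, since then $M = (1/\gamma^\star)^{b^\star} \ge 2^{b^\star} > b^\star$, an implicit assumption the paper also makes when it says $b = X$ ``is certainly an upper bound on $b^\star$.''
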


\begin{proof}
    Let us analyze the mistake bound of $\Adap(\Lrn)$. Let $X$ be the minimal guess of $M$ such that $X \geq M$, and let us assume w.l.o.g that even for the guess $X$, all combinations of $\gamma,b$ we have tried failed (more than $X$ many mistakes are made by $\Lrn$). Since $X \geq M$, and we begin with $b=X$ which certainly an upper bound on $b^\star$ and end with $b=1$ which is certainly a lower bound on $b^\star$, there must be $0 \leq j \leq X-2$ such that the two consecutive combinations
    \[
    (1/\gamma_j, b_j) = \mleft(X^{1/(X-j)}, X-j\mright), \quad 
    (1/\gamma_{j+1}, b_{j+1}) = \mleft(X^{1/(X-j-1)}, X-j-1 \mright)
    \]
    satisfy $1/\gamma_{j+1} \geq 1/\gamma^\star$ and $b_j \geq b^\star$. This means that if $\Lrn$ is executed with $(1/\gamma_{j+1}, b_j)$, at most $1/\gamma_{j+1}^{b_j}$ many mistakes are made. We argue that a combination with values at least $(1/\gamma_{j+1}, b_j)$ will be tried in the worst case, when the guess of $M$ is changed from $X$ to $X^2$. Indeed, let the guess of $M$ be $X^2$, and let $j' = X^2 - X + j$. Then in the $j'$'th combination guess of $\gamma^\star, b^\star$, we will have $b_{j'} = X^2 -j' = X-j \geq b^\star$, and $1/\gamma_{j'} = \mleft(X^2\mright)^{\frac{1}{X^2-j'}} = X^{\frac{2}{X-j}}$. We thus require that $X^{\frac{2}{X-j}} \geq X^{\frac{1}{X- j -1}}$, which is equivalent to $\frac{2}{X-j} \geq \frac{1}{X- j -1}$, which indeed holds for all $j \leq X-2$, which is the required range.
    
    Therefore, in the worst case, when $M = X^2$, $\Lrn$ will be executed with correct upper bounds on $1/\gamma^\star$ and $b^\star$ and will make on this execution, at most
    $\mleft(X^{\frac{2}{X-j}} \mright) ^{X-j} = X^2 \leq M^2$ many mistakes, and thus will not update the guess of $\gamma^\star, b^\star$ past this execution.
    
    It remains to upper bound the number of mistakes made before this execution. By definition of the algorithm, for every guess $x$ of $M$, the number of mistakes made is at most $(x+1) x \leq (x+1)^2$. Recall that in the final guess $x$, we have $x \leq M^2$ and thus $(x+1)^2 \leq 4M^4$. The exponential update of $X$ implies that the total number of mistakes throughout the entire execution is at most $8M^4$, as required.
\end{proof}

\section{Bounds on the TS-packing number}

\begin{theorem} \label{thm:ts-general-bounds}
    For any $d \in \mathbb{N}^+$ and $\epsilon \leq 1/2$ we have:
    \[
    \max \mleft\{\mleft(2\floor*{\frac{1}{2 \epsilon \sqrt{d}}} \mright)^d, d \mright\} \leq \TS(d,\epsilon) \leq \mleft(\frac{1.5}{\epsilon} \mright)^d.
    \]
\end{theorem}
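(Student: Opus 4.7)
The plan is to prove the upper bound by a standard volume packing argument, and the two lower bounds by two explicit constructions: the standard basis for the linear-in-$d$ bound, and a grid of points with odd-multiple-of-$\epsilon$ coordinates for the exponential bound.

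For the upper bound, I would first show that any $(d,\epsilon)$-TS-packing $\{x_1,\ldots,x_T\}$ is $2\epsilon$-separated in the usual sense. Fixing $i \neq j$, let $(\mat{w},b)$ be the witnessing hyperplane for the pair $x_i,x_j$. Since $x_i,x_j$ lie on opposite sides with distance $\geq \epsilon$, we have $\langle \mat{w}, x_i\rangle + b$ and $\langle \mat{w}, x_j\rangle + b$ of opposite signs with absolute values $\geq \epsilon\|\mat{w}\|$, hence $|\langle \mat{w},x_i-x_j\rangle| \geq 2\epsilon\|\mat{w}\|$, and Cauchy--Schwarz gives $\|x_i-x_j\| \geq 2\epsilon$. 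Consequently, the open $\epsilon$-balls around the $T$ points are pairwise disjoint and contained in $B_{1+\epsilon}(\mathbb{R}^d)$, so a volume comparison yields $T \leq ((1+\epsilon)/\epsilon)^d$. For $\epsilon \leq 1/2$ we have $1+\epsilon \leq 3/2$, giving the claimed bound $(1.5/\epsilon)^d$.

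For the lower bound $\TS(d,\epsilon) \geq d$, I would use the standard basis $x_i = \mat{e_i}$. For any $i \neq j$, take the hyperplane with normal $\mat{e_i}$ and bias $-1/2$: it separates $x_i$ (where $\langle \mat{e_i}, x_i\rangle = 1 > 1/2$) from $x_j$ (where $\langle \mat{e_i}, x_j\rangle = 0 < 1/2$), and the distance from every $\mat{e_k}$ to this hyperplane is exactly $1/2 \geq \epsilon$. For the grid lower bound, let $m = \lfloor 1/(2\epsilon\sqrt{d})\rfloor$ and consider the product set $P = V_m^d$, where $V_m = \{\pm\epsilon, \pm 3\epsilon, \ldots, \pm(2m-1)\epsilon\}$ consists of the $2m$ odd-multiple-of-$\epsilon$ values with magnitude at most $(2m-1)\epsilon$. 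Then $|P| = (2m)^d$. Every $x \in P$ satisfies $\|x\|^2 \leq d \cdot (2m\epsilon)^2 \leq 1$ by the choice of $m$, so $P \subset B(\mathbb{R}^d)$. Any two distinct $x,y \in P$ differ in some coordinate $i$; both $x_i$ and $y_i$ are odd multiples of $\epsilon$, so some even multiple $c$ of $\epsilon$ lies strictly between them, and $|c| \leq 2m\epsilon \leq 1/\sqrt{d} \leq 1$. The hyperplane with normal $\mat{e_i}$ and bias $-c$ separates $x$ from $y$, and for every $z \in P$ the distance from $z$ equals $|z_i - c| \geq \epsilon$ because $z_i$ is an odd multiple of $\epsilon$ while $c$ is an even multiple.

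I do not anticipate any serious obstacles: the only minor technical points are the Cauchy--Schwarz step (to absorb the normalization $\|\mat{w}\| \leq 1$ rather than $= 1$ in the definition), and checking that the grid fits inside $B(\mathbb{R}^d)$, both of which follow immediately from the choice of $m$. The interplay between odd and even multiples of $\epsilon$ is precisely what guarantees the total-separability condition in the grid construction, and this is the main conceptual idea that makes the packing much larger than a standard $\epsilon$-packing would allow one to easily produce with axis-aligned separators.
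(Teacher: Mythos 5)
Your proposal is correct, and the core construction — the grid of odd-multiples-of-$\epsilon$ coordinates with axis-aligned separating hyperplanes at even multiples of $\epsilon$ — is exactly the paper's construction for the exponential lower bound (the paper phrases it as cell centers of a grid, which are the same points). Two differences are worth noting. First, you prove the upper bound directly via the $2\epsilon$-separation and a volume argument, whereas the paper simply cites the standard packing bound; your argument is self-contained and correctly handles the normalization subtlety (Cauchy--Schwarz with $\lVert \mat{w} \rVert \leq 1$). Second, for the lower bound $\TS(d,\epsilon) \geq d$, you take the standard basis $\{\mat{e_1},\ldots,\mat{e_d}\}$ with hyperplanes $(\mat{e_i}, -1/2)$, which is notably more elementary than the paper's argument using a general inscribed regular simplex, a right-triangle construction, and Jung's theorem. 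Geometrically the objects coincide (the standard basis is a regular simplex on the unit sphere), but your choice makes the separating hyperplanes axis-aligned and the margin computation one line; this is a genuine simplification over the paper's route for this part of the theorem, with no loss of generality since the TS-packing number is an extremal quantity and any witness construction suffices.
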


\begin{proof}
    The upper bound is the known upper bound for the standard (not totally separable) $(d,\epsilon)$-packing number (see , e.g.,\ the lecture note \cite{wu2016lecture}).

    As for the lower bound, let us start with the first expression. Consider the $d$-unit cube inscribed in the $d$-unit ball, where the cube's sides are parallel to the axis. The vertices of the cube are thus precisely the set $\{\pm 1/\sqrt{d}\}^{d}$. Supppose that $\epsilon \leq \frac{1}{2 \sqrt{d}}$, as otherwise the stated bound is $1$, which holds trivially for $\epsilon \leq 1/2$. For every direction $i \in [d]$, we define the following $2 \floor*{\frac{1}{2 \epsilon \sqrt{d}}} + 1$ many hyperplanes:
    \[
    (\mat{e_i}, k \cdot 2\epsilon), \quad \forall k\in \mleft\{-\floor*{\frac{1}{2 \epsilon \sqrt{d}}}, \ldots, 0, \ldots, \floor*{\frac{1}{2 \epsilon \sqrt{d}}} \mright\}.
    \]
    We denote this set of hyperplanes by $D_i$. The sets $D_i$ define a grid inside the unit cube. We can choose a cell in the grid by choosing for every $i$, $j_i \in \mleft\{-\floor*{\frac{1}{2 \epsilon \sqrt{d}}}, \ldots, 0, \ldots, \floor*{\frac{1}{2 \epsilon \sqrt{d}}} - 1 \mright\}$, where $j_i$ specifies the location of the cell in the $i$'th direction: all points $x$ such that $\sign(\langle \mat{e_i}, x \rangle + j_i\cdot  2\epsilon) \geq 0$ but $\sign( \langle \mat{e_i}, x \rangle + (j_i + 1)\cdot  2\epsilon) < 0$. Since we have $2 \floor*{\frac{1}{2 \epsilon \sqrt{d}}}$ many choices in each direction and $d$ many directions, the number of cells is precisely $\mleft(2\floor*{\frac{1}{2 \epsilon \sqrt{d}}} \mright)^d$. By definition of the hyperplanes in the set $D_i$, each cell is a $d$-cube with side length $2\epsilon$, and thus the inscribed $d$-ball in any such cube has redius precisely $2\epsilon$. Therefore, the center of any such ball has distance at least $\epsilon$ from any hyperplane in the sets $D_i$. It is also straightforward to see that for two centers of different balls there exsits a separating hyperplane in the sets $D_i$. Therefore, the set of all centers of balls inscribed in cells in the defined grid forms a $(d,\epsilon)$-TS-packing.

    It remains to prove the $d$ lower bound for any $\epsilon < 1/2$. Consider the $d$-regular simplex inscribed in the $d$-unit ball. Dentoe its vertices by $V = v_1, \ldots, v_d$. We claim that $V$ is a $(d,\epsilon)$-TS-packing. We construct for every $v_i$ a hyperplane $(\mat{w}_i, b_i)$ such that $\sign(\langle \mat{w_i}, v_j \rangle + b_i) > 0 \iff j=i$, while making sure that $|\langle \mat{w_i},  v_j \rangle + b_i| > 1/2$. We start with $v_1$. Consider the hyperplane $\mat{v_1}$, specified by the same values as the point $v_1$. We will show that $\sign(\langle \mat{v_1}, v_j \rangle) > 0 \iff j=1$. The direction $j=1 \implies \sign(\langle \mat{v_1}, v_j \rangle) > 0$ is trivial. For the other direction, let $j\neq 1$, and suppose towards contradiction that $\sign(\langle \mat{v_1}, v_j \rangle) > 0$.  and consider the following triangle. Two of its vertices are simply $v_1$ and $v_j$. The third vertex $u$, is the intersection of the infinite line $\ell$ that intersects $v_1$ and the origin, and the infinite line that intersects $v_j$ and is also orthogonal to $\ell$. Let's calculates the triangle sides' lengthes. By assumption, $\dist(v_1, u) < 1$. Therefore, also $\dist(v_j, u) < 1$. Pythagoras' theorem now impies that $\dist(v_1, v_j) < \sqrt{2}$. However, this contradicts Jung's theorem, stating that the diameter of $V$ is exactly $\sqrt{\frac{2(d+1)}{d}} > \sqrt{2}$, and therefore $\dist(v_1, v_j) > \sqrt{2}$. To conclude, we have $\langle \mat{v_1}, v_1 \rangle = 1$ and $\langle \mat{v_1}, v_j \rangle < 0$ for all $j\neq 1$. Thus, the hyperplane $(\mat{v_1}, -1/2)$ satisfies our requirements for $v_1$. We may define a similar hyperplane for all other points in $V$. Therefore $V$ is a $(d,\epsilon)$-TS-packing for all $\epsilon \leq 1/2$.
\end{proof}

\section{Open questions and future work}

\subsection{Open questions}

\paragraph{Quantitative gaps in mistake bounds.}
There are some quantitative gaps in the mistake bounds that it will be nice to remove.

In Theorem~\ref{thm:intro-characterization}, there is a multiplicative gap of approximately $\min\{1/\gamma_1^2, \TS(d,\gamma_1)\}$ between the upper and lower bound. Can we find the correct optimal mistake bound?

Theorem~\ref{thm:intro-everywhere-margin} exhibits an exponential dependence on the depth of the target net. Is this dependence inevitable?

\paragraph{Better adaptive algorithm.}
Our adaptive algorithm $\Adap$ (Figure~\ref{fig:adap}) has a polynomially worse mistake bound than the non-adaptive learner given to it as input. This is in contrast with other online learning problems, where adaptiveness costs only a constant factor, or even less than that \cite{cesa1997use, filmus2023optimal, filmus2024bandit, chase2024deterministic}. Does a better adaptive algorithm exist for online learning neural networks? More broadly, given an online learner with mistake bound $a^b$ on the input sequence $S$, that requires knowledge of both $a$ and $b$, is there an adaptive version of $\Lrn$ with mistake bound $\tilde{O}(a^b)$ that does not require the knowledge of neither $a$ nor $b$?

\subsection{Future work}

\paragraph{Better bounds on the TS-packing number.}
We prove (Theorem~\ref{thm:intro-characterization}) that the optimal mistake bound of learning an input sequence $S$ realizable by a neural network $\cN^\star$ with input dimension $d$ is controlled by $\TS(d, \gamma_1(\cN^\star, S))$. However, we are not aware of any bounds on $\TS(d, \epsilon)$ for a high dimension $d$, except for a trivial upper bound given by an upper bound on the standard packing number, and a relatively straightforward lower bound we prove in Theorem~\ref{thm:ts-general-bounds}. Besides having a clear application in mistake bound analysis for neural networks, finding tighter bounds on $\TS(d, \epsilon)$ is an interesting problem in its own right. A similar problem was pointed out also in a recent survey on the topic \cite{bezdek2024separability}.

\paragraph{Computationally efficient learning.}
Our meta-learner described in Section~\ref{sec:meta} uses a set of more than $g^g$ experts that need to be queried and updated in every round, where $g$ is the size of the set of ``important" neurons in the target net. Even in the most optimistic settings of Section~\ref{sec:multi-index} and Section~\ref{sec:everywhere-margin}, it holds that $g \geq 1/\gamma$, which means that more than $\mleft(\frac{1}{\gamma} \mright)^{1/\gamma}$ experts are maintained by the meta-learner. If $\gamma$ is small, this is very inefficient in terms of computation. Can we achieve good mistake bounds with computationally efficient algorithms?

\section*{Acknowledgments}
The research described in this paper was funded by the European Research Council (ERC) under the European Union’s Horizon 2022 research and innovation program (grant agreement No. 101041711), the Israel Science Foundation (grant number 2258/19), and the Simons Foundation (as part of the Collaboration on the Mathematical and Scientific Foundations of Deep Learning).

EM acknowldges the support of the Theoretical Foundations of Deep Learning (NSF DMS-2031883), the Vannevar Bush Faculty Fellowship ONR-N00014-20-1-2826, and the Simons Investigator Award in mathematics.

\bibliographystyle{alphaurl}
\bibliography{bib.bib}

\end{document}

\section{Agnostic learning} \label{sec:agnostic}
In this section we prove Proposition~\ref{prop:intro-agnostic}. What we need is a auitable version of Theorem~5.2 in \cite{daniely2015multiclass}.

\begin{lemma}
    Let $\Lrn$ be a learning algorithm with guaranteed mistake bound $M$ on the given input sequence $S$ with labels generated by some function $\Phi^\star$. Then, there exists a class of experts $\cE:= \cE_{\Lrn}$ (as defined in Section~\ref{sec:multiclass-wm}) such that:
    \begin{enumerate}
        \item $|\cE| \leq (T|\cY|)^M$.
        \item There exists $E \in \cE$ such that $E(t) = \Phi^\star(x_t)$ for all $t\in [T]$.
    \end{enumerate} 
\end{lemma}

\begin{proof}
    We define $\cE = \{E_{I, L}: I\subset [T], L\colon I\to \cY\}$. The predictions of each $E_{I,L}$ are defined as follows: in each round $t$, if $t \notin I$, then $E_{I,L}(t)$ is the prediction of $\Lrn$ with 
\end{proof}